\theoremstyle{plain}
\newtheorem{theorem}{Theorem}[section]
\newtheorem*{theorem*}{Theorem}
\newtheorem{lemma}[theorem]{Lemma}
\newtheorem*{lemma*}{Lemma}
\newtheorem{corollary}[theorem]{Corollary}
\newtheorem{definition}[theorem]{Definition}
\newtheorem{assumption}[theorem]{Assumption}
\theoremstyle{plain}
\newtheorem{remark}[theorem]{Remark}
\newcommand{\celebA}{\texttt{celebA}\xspace}
\newcommand{\folktables}{\texttt{folktables}\xspace}
\newcommand{\loss}[1]{\ell\ifthenelse{\isempty{#1}{}}{}{\!\left(#1\right)}}
\newcommand{\negspace}[1]{\phantom{\hspace*{-#1}}}
\DeclareMathOperator*{\argmin}{arg\,min}
\DeclareMathOperator*{\argmax}{arg\,max}
\DeclareMathOperator*{\expect}{\mathbb{E}}
\DeclareMathOperator*{\prob}{\mathbb{P}}
\newcommand{\mydefv}[1]{\expandafter\newcommand\csname v#1\endcsname{\mathbf{#1}}}
\newcommand{\mydefallv}[1]{\ifx#1\mydefallv\else\mydefv{#1}\expandafter\mydefallv\fi}
\newcommand{\mydefvsym}[1]{\expandafter\newcommand\csname v#1\endcsname{\boldsymbol{\csname #1\endcsname}}}
\newcommand{\mydefallvsym}[1]{\ifx#1\mydefallvsym\else\mydefvsym{#1}\expandafter\mydefallvsym\fi}
\newcommand{\mydefset}[1]{\expandafter\newcommand\csname set#1\endcsname{{#1}}}
\newcommand{\mydefallset}[1]{\ifx#1\mydefallset\else\mydefset{#1}\expandafter\mydefallset\fi}
\newcommand{\mydefdistr}[1]{\expandafter\newcommand\csname distr#1\endcsname{\mathcal{D}_{\csname space#1\endcsname}}}
\newcommand{\mydefalldistr}[1]{\ifx#1\mydefalldistr\else\mydefdistr{#1}\expandafter\mydefalldistr\fi}
\newcommand{\mydefedistr}[1]{\expandafter\newcommand\csname edistr#1\endcsname{\widehat{\mathcal{D}}_{\csname space#1\endcsname}}}
\newcommand{\mydefalledistr}[1]{\ifx#1\mydefalledistr\else\mydefedistr{#1}\expandafter\mydefalledistr\fi}
\newcommand{\mydefspace}[1]{\expandafter\newcommand\csname space#1\endcsname{\mathcal{#1}}}
\newcommand{\mydefallspace}[1]{\ifx#1\mydefallspace\else\mydefspace{#1}\expandafter\mydefallspace\fi}
\newcommand{\mydeff}[1]{\expandafter\newcommand\csname f#1\endcsname[2][]{#1##1\ifthenelse{\equal{##2}{}}{}{\!\left(##2\right)}}}
\newcommand{\mydefallf}[1]{\ifx#1\mydefallf\else\mydeff{#1}\expandafter\mydefallf\fi}
\newcommand{\mydeffsym}[1]{\expandafter\newcommand\csname f#1\endcsname[2][]{\csname #1\endcsname##1\ifthenelse{\equal{##2}{}}{}{\!\left(##2\right)}}}
\newcommand{\mydefallfsym}[1]{\ifx#1\mydefallfsym\else\mydeffsym{#1}\expandafter\mydefallfsym\fi}
\newcommand{\mydefnset}[1]{\expandafter\newcommand\csname nset#1\endcsname{\mathbb{#1}}}
\newcommand{\mydefallnset}[1]{\ifx#1\mydefallnset\else\mydefnset{#1}\expandafter\mydefallnset\fi}
\newcommand{\norm}[1]{\left\|#1\right\|}
\newcommand{\abs}[1]{\left|#1\right|}
\newcommand{\babs}[1]{\big|#1\big|}
\newcommand{\scalar}[2]{\langle #1, #2 \rangle}
\theoremstyle{plain}
\newtheorem*{myth*}{Theorem}
\newtheorem*{mycor*}{Corollary}
\newtheorem*{mylem*}{Lemma}
\newtheorem{myex}{Example}
\newtheorem*{myex*}{Example}
\newtheorem*{myrmq*}{Remark}
\def\th@plain{%
  \thm@notefont{}
  \itshape 
}
\def\th@definition{%
  \thm@notefont{}
  \normalfont 
}
\newcommand{\eDDPm}[1][]{\widehat{\text{DDP}}^+}
\newcommand{\eDDPp}[1][]{\widehat{\text{DDP}}^-}
\newcommand{\eDDPkappam}[1][]{\widehat{\text{DDP}}_{\kappa}^-}
\newcommand{\eDDPdeltap}[1][]{\widehat{\text{DDP}}_{\delta}^+}
\newcommand{\cause}[1]{&{\color{gray}\downarrow{}\;{\small \text{#1}}} \nonumber\\}
\newcommand{\priv}{\mathrm{priv}}
\newcommand{\refer}{\mathrm{ref}}
\DeclareMathOperator*{\Acc}{Acc}
\csv@pretable\begin{tabular}{|*{\csv@columncount}{c|}}\csv@tablehead,
\csv@tablefoot\end{tabular}\csv@posttable,
\csv@pretable\begin{tabular}{*{\csv@columncount}{c}}\csv@tablehead,
\csv@tablefoot\end{tabular}\csv@posttable,
\newcommand{\csvautobooktabularcenter}[2][]{\csvloop{autobooktabularcenter={#2},#1}}
\newcommand{\cA}{\mathcal{A}}
\newcommand{\cC}{\mathcal{C}}
\newcommand{\cD}{\mathcal{D}}
\newcommand{\cH}{\mathcal{H}}
\newcommand{\cN}{\mathcal{N}}
\newcommand{\cS}{\mathcal{S}}
\newcommand{\cU}{\mathcal{U}}
\newcommand{\cX}{\mathcal{X}}
\newcommand{\cY}{\mathcal{Y}}
\newcommand{\RR}{\mathbb{R}}
\newcommand{\ie}{{\em i.e.,~}}
\newcommand{\eg}{{\em e.g.,~}}
\icmltitlerunning{Differential Privacy has Bounded Impact on Fairness in Classification}
\begin{document}

\twocolumn[
\icmltitle{Differential Privacy has Bounded Impact on Fairness in Classification}

\icmlsetsymbol{equal}{*}

\begin{icmlauthorlist}
\icmlauthor{Paul Mangold}{equal,affil}
\icmlauthor{Michaël Perrot}{equal,affil}
\icmlauthor{Aurélien Bellet}{affil}
\icmlauthor{Marc Tommasi}{affil}
\end{icmlauthorlist}

\icmlaffiliation{affil}{Univ. Lille, Inria,  CNRS, Centrale Lille, UMR 9189 - CRIStAL, F-59000 Lille, France}

\icmlcorrespondingauthor{Paul Mangold}{paul.mangold@inria.fr}
\icmlcorrespondingauthor{Michaël Perrot}{michael.perrot@inria.fr}

\icmlkeywords{Fairness, Differential Privacy, Classification, Machine Learning}

\vskip 0.3in
]

\printAffiliationsAndNotice{\icmlEqualContribution} %

\begin{abstract}
We theoretically study the impact of differential privacy on fairness in classification. We prove that, given a class of models, popular group fairness measures are pointwise Lipschitz-continuous with respect to the parameters of the model. This result is a consequence of a more general statement on accuracy conditioned on an arbitrary event (such as membership to a sensitive group), which may be of independent interest. We use this Lipschitz property to prove a non-asymptotic bound showing that, as the number of samples increases, the fairness level of private models gets closer to the one of their non-private counterparts. This bound also highlights the importance of the confidence margin of a model on the disparate impact of differential privacy.
\end{abstract}

\section{Introduction}

The performance of machine learning models have mainly been evaluated in terms of utility, that is their ability to solve specific tasks. However, they can be used in sensitive contexts, and impact people's lives. It is thus crucial that users can trust these models. While trustworthiness encompasses multiple concepts, fairness and privacy have attracted a lot of interest in the past few years. Fairness requires models not to unjustly discriminate against specific individuals or subgroups of the population, and privacy preserves individual-level information about the training data from being inferred from the model. These two notions have been extensively studied in isolation: there exists numerous approaches to learn fair models~\citep{caton2020fairness,mehrabi2021survey},
or to preserve privacy~\citep{dwork2014algorithmic,liu2021machine}. However, only few works studied the interplay between privacy and fairness. In this paper, we take a step forward in this direction, proposing a new theoretical bound on the relative impact of privacy on fairness in classification.

Fairness takes various forms (depending on the task and context), and several definitions exist. On the one hand, the goal may be to ensure that similar individuals are treated similarly. This is captured by individual fairness~\citep{dwork2012fairness} and counterfactual fairness~\citep{kusner2017counterfactual}. On the other hand, group fairness requires that decisions made by machine learning models do not unjustly discriminate against subgroups of the population.
In this paper, we focus on group fairness and consider four popular definitions, namely Equalized Odds~\citep{hardt2016equality}, Equality of Opportunity~\citep{hardt2016equality}, Accuracy Parity~\citep{zafar2017fairness}, and Demographic Parity~\citep{calders2009building}.

Differential privacy~\citep{dwork2006Differential} has been widely adopted for controlling how much information the output of an algorithm may leak about its input data. It allows publishing machine learning models while preventing an adversary from guessing
too confidently the presence (or absence) of an individual in the training data. To enforce differential privacy, one typically
releases a noisy estimate of the true model~\citep{dwork2006Differential}, so as to conceal any sensitive information
contained in individual data points. This induces a trade-off between the strength
of the protection and the utility of the learned model. While this
trade-off has been extensively studied~\citep{chaudhuri2011Differentially,bassily2014Private,liu2021machine}, its
implications for fairness are not yet well understood.

\textbf{Contributions.} In this work, we quantify the difference in fairness levels between private and non-private models in multi-class classification. We derive high probability bounds showing that this difference shrinks at a rate of $\widetilde O(\sfrac{\sqrt{p}}{n})$, where $n$ is the number of data records, and $p$ the dimension of the model. To obtain this result, we first prove that the accuracy of a model conditioned on an arbitrary event (such as membership to a sensitive group), is pointwise Lipschitz continuous with respect to the model parameters. This property is inherited by many popular group fairness notions, such as Equalized Odds, Equality of Opportunity, Accuracy Parity and Demographic Parity. Consequently, two sufficiently close models will have similar fairness levels. We then upper-bound the distance between the optimal non-private model and the private models obtained with privacy preserving mechanisms like output perturbation~\citep{chaudhuri2011Differentially,lowy2021output} or DP-SGD~\citep{song2013Stochastic,bassily2014Private}. These bounds hold for strongly convex empirical risk minimization formulations, potentially allowing explicit fairness-promoting convex regularization terms \citep{bechavod2017penalizing,huang2019stable,lohaus2020too,tran2021differentially}. Combining these two results, we derive high probability bounds on the fairness loss due to privacy. They show that, with enough training examples, (i) given an optimal non-private model, enforcing privacy will not harm fairness too much, and (ii) given a private model, the corresponding (unknown) non-private optimal model cannot be vastly fairer. Our results also highlight the role of the \emph{confidence margin} of models in the disparate impact of differential privacy: notably, if the non-private model has high per-group confidence, then our bound on the loss in fairness due to privacy will be smaller. Our contributions can be summarized as follows:
\begin{itemize}
    \item We prove that group fairness is pointwise Lipschitz, with a smaller constant for models with large margins.
    \item We bound the distance between private and optimal models, and show that the difference in their fairness levels decreases in $\widetilde O(\sfrac{\sqrt{p}}{n})$.
    \item We show that this bound can be computed even when the optimal model is unknown, and numerically demonstrate that we obtain non-trivial guarantees.
\end{itemize}

\textbf{Related work.} The joint study of fairness and privacy in machine learning only goes back a few years, and has been the focus of a recent survey \cite{survey_privacy_fairness}. One may identify three main research directions. First, it has been empirically observed that privacy can exacerbate unfairness \citep{bagdasaryan2019differential,pujol2020fair,no-private-no-fair,pate-impact,robin_hood} and, conversely, that enforcing fairness can lead to more privacy leakage for the unprivileged group \citep{chang2020privacy}. These empirical results suggest that some properties of the dataset (such as group sizes and groupwise input norms) and the choice of the private training method may affect the extent of these disparate impacts. Unfortunately, these observations are not supported by theoretical results, and it is not clear why and when disparate impact occurs.
Second, a few approaches have been proposed to learn models that are both fair and privacy preserving. However, these works either have limited theoretical guarantees on their performance~\citep{kilbertus2018blind,xu2019achieving,xu2020removing,tran2020differentially}, or learn stochastic models which might not be usable in contexts where deterministic decisions are expected~\citep{jagielski2019differentially,mozannar2020fair}. Finally, a few works have shown that fairness and privacy are incompatible in some settings, in the sense that there exists data distributions where enforcing one prevents the other from being satisfied~\citep{sanyal2022unfair}, or where enforcing both implies trivial utility~\citep{cummings2019compatibility,agarwaltrade}. While appealing at first glance, these results usually consider unrealistic cases that are hardly encountered in practice. In this paper, we also study fairness and privacy jointly but rather than studying whether they may be achieved simultaneously, we investigate the relative difference in fairness level between private and non-private models.

To the best of our knowledge, the work closest to ours is the one of~\citet{tran2021differentially}. They analyze the impact of privacy on fairness in Empirical Risk Minimization, where their notion of fairness is defined as the difference between the excess risk computed on the overall population and the excess risk computed on a subgroup of the population. They study the expected behavior over the possible private models while our results are model-specific. In line with our work, their results suggest that the distance to the decision boundary plays a key role in the disparate impact of differential privacy. However, the quantity appearing in their result is based on a second-order Taylor approximations which is loose for popular classification loss functions. In contrast, the quantity appearing in our bounds is precisely the confidence margin considered in prior work on multi-class margin-based classification \citep{cortes2013multi}.
Finally and most importantly, loss-based fairness does not necessarily imply that the actual decisions taken by the model are fair with respect to standard group-fairness notions \citep{lohaus2020too}. In contrast, our work provides guarantees in terms of these widely-accepted group fairness definitions.

\section{Preliminaries}
\label{sec:preliminaries}

In this section, we present the fairness and privacy notions that will be used throughout the paper. We consider a multi-class classification setting with a feature space $\cX$, a finite set of labels $\cY$, and a finite set $\cS$ of values for the sensitive attribute. Let $\cD$ be a distribution over $\cX\times\cS\times\cY$, and $D = \{(x_1, s_1, y_1), \dots, (x_n, s_n, y_n)\}$ be a training set of $n$ examples drawn i.i.d. from $\cD$. Let $\cH$ be a space of real-valued functions $h: \cX \times \cY \to \nsetR$ equipped with a norm $\norm{\cdot}_{\cH}$. For an example $x \in \cX$, the predicted label is the one with the highest value, that is $H(x) = \argmax_{y \in \spaceY} h(x,y)$. In case of a tie, a random label among the most likely ones is predicted.

The confidence margin \citep{cortes2013multi} of a model $h$ for an example-label pair $(x,y)$ is defined as
\begin{align*}
 \rho(h,x,y) = h(x,y) - \max_{y'\neq y} h(x,y')
 \enspace.
\end{align*}
This confidence margin is positive when the example $x$ is classified as $y$ by $h$ and negative otherwise. In this paper, we make the assumption that the margin is Lipschitz-continuous in the model $h$.

\begin{assumption}[Lipschitzness of the margin]
  \label{assum:lipschitz-constant-fct-val}
We assume that $\rho$ is Lipschitz-continuous in its first argument, that is for all $h, h' \in \cH$ and $(x,y) \in \cX \times \cY$,
\begin{align*}
    \abs{\rho(h,x,y) - \rho(h',x,y)} \leq L_{x,y}\norm{h - h'}_\cH
    \enspace,
\end{align*}
where $L_{x,y}<+\infty$ may depend on the example $(x,y)$.
\end{assumption}
Note that this assumption is not very restrictive. Typically, it holds for any class of differentiable models with either (i) bounded gradients, or (ii) continuous gradients on a compact parameter space (\eg generalized linear models or smooth deep neural networks \citep{hastie2009Elements}). We stress that $L_{x,y}$ \emph{does not need to hold uniformly on the data} (although it must hold uniformly on $\cH$). Indeed, we will see in our results (\Cref{sec:bound-fairness}) that a large $L_{x,y}$ can be compensated for by the small probability of $x, y$ in the data distribution.

To illustrate \Cref{assum:lipschitz-constant-fct-val}, consider linear models of the form $h(x,y)=W_y^T x$ where $W$ is a real-valued matrix where each line is a vector $W_y$ of label-specific parameters.
Define $\norm{h - h'}_\cH = \norm{W - W'}_2$. Then, remark that
$
    \abs{\rho(h,x,y) - \rho(h',x,y)}
    \leq{}  \abs{h(x,y) - h'(x,y)} + \max_{y' \neq y} \abs{h(x,y') - h'(x,y')}
    \leq{}  2 \norm{x}_2 \norm{h - h'}_\cH \!.$
We thus have $L_{x,y} = 2 \norm{x}_2$.%

The goal of a learning algorithm $\cA: (\cX\times\cS\times\cY)^n \rightarrow \cH$ is to find the best possible model to solve the task. In this work, the quality of a model $h$ is evaluated through its accuracy $\Acc(h) = \prob\left(H(X) = Y\right)$ but also its fairness level (as defined in Section~\ref{subsec:fairness}). Furthermore, given a non-private algorithm $\cA$, our goal will be to compare the quality of its output to that of a private version $\cA^\text{priv}$ of $\cA$ that guarantees differential privacy.

\subsection{Fairness}
\label{subsec:fairness}

In this paper, we focus on group fairness. These definitions are based on the idea that a group of individuals should not be discriminated against, compared to the overall population. Usually, these groups are defined by the sensitive attribute from $\cS$. However, in some cases, it is necessary to consider more fine grained partitions. This is for example the case in Equalized Odds~\citep{hardt2016equality}, where a model is fair if its performance is the same on the overall population and on subgroups of individuals that share the same sensitive group and the same label. Thus, for the sake of generality, we assume that the data can be partitioned into $K$ disjoint groups denoted by $D_1, \ldots, D_k, \ldots, D_K$. As in \citet{maheshwari2022fairgrad}, we consider fairness definitions that, for each group $k$, can be written~as:
\begin{align}
    F_k(h, D) = C_k^{0} + \sum_{k'=1}^K C_k^{k'}\prob\left(H(X) = Y \;\middle|\; D_{k'} \right)
    \enspace, \label{eq:fairness}
\end{align}
where the $C_k^{k'}$'s are group specific values independent of $h$, that typically depend on the size of the groups. In Appendix~\ref{app:sec:fairnessfunctions}, we show that usual group fairness notions such as Demographic Parity (with binary labels) \citep{calders2009building}, Equality of Opportunity \citep{hardt2016equality}, Equalized Odds \citep{hardt2016equality}, and Accuracy Parity \citep{zafar2017fairness} can all be expressed in the form of~\eqref{eq:fairness}. By convention, we consider that $F_k(h, D) > 0$ when the group $k$ is advantaged by $h$ compared to the overall population, $F_k(h, D) < 0$ when the group is disadvantaged and $F_k(h, D) = 0$ when $h$ is fair for group $k$.

In some cases, rather than measuring fairness for each group $k$ independently, it is interesting to summarize the information with an aggregate value. For example, we will use the mean of the absolute fairness level of each group:
\begin{align}
    \text{Fair}(h,D) = \frac{1}{K} \sum_{k=1}^K \abs{F_k(h, D)}
    \enspace,
    \label{eq:aggregatefairness}
\end{align}
which is 0 when $h$ is fair and positive when it is unfair.

\subsection{Differential Privacy}
\label{subsec:differential-privacy}

We measure the privacy of machine learning models with differential privacy (see \Cref{def:diff-priv} below). Differential privacy (DP) guarantees that the outcomes of a randomized algorithm are similar when run on datasets that differ in at most one data point. It effectively preserves privacy by preventing an adversary observing the trained model from inferring the presence of an individual in the training set.
A key property of differential privacy is that it still holds after post-processing of the algorithm's outcome \citep{dwork2006Differential}, as long as this post-processing is independent of the data.
Let $D, D' \in (\cX\times\cS\times\cY)^n$ be two datasets of $n$ elements. We say that they are \emph{neighboring} (denoted by $D \approx D'$) if
they differ in at most one element.
\begin{definition}[Differential Privacy -- \citet{dwork2006Differential}]
\label{def:diff-priv}
  Let $\cA^{\priv}: (\cX\times\cS\times\cY)^n \rightarrow \cH$ be a randomized algorithm. We say that $\cA^{\priv}$~is
  $(\epsilon,\delta)$-differentially private if, for all neighboring
  datasets $D, D' \in (\cX\times\cS\times\cY)^n$ and all subsets of hypotheses $\cH' \subseteq \cH$,
  \begin{align*}
    \prob(\cA^{\priv}(D) \in \cH')
    & \le \exp(\epsilon) \prob(\cA^{\priv}(D') \in \cH') + \delta
      \enspace.
  \end{align*}
\end{definition}
To design differentially private algorithms to estimate a function $\cA: (\cX \times \cS \times \cY)^n \rightarrow \RR^p$, we need to quantify how much changing one point in a dataset can impact the output of $\cA$. This is typically measured by (an upper bound on) the  $\ell_2$-sensitivity of $\cA$, defined as
\begin{align*}
  \Delta(\cA)
  & = \sup_{D\approx D'} \norm{\cA(D) - \cA(D')}_2
    \enspace.
\end{align*}
The value of $\cA$ on a dataset $D \in (\cX \times \cS \times \cY)^n$ can then be released privately
using the Gaussian mechanism \citep{dwork2014algorithmic}. Formally, to guarantee $(\epsilon,\delta)$-differential privacy, we add
Gaussian noise to $\cA(D)$, calibrated to its sensitivity and the desired
level of privacy:
\begin{align*}
  \cA^{\text{priv}}(D) = \cA(D) + \cN\Big(0, \frac{2\Delta(\cA)^2\log(1.25/\delta)}{\epsilon^2} \mathbb{I}_p\Big)
  \enspace,
\end{align*}
where $\cN(0,\sigma^2\mathbb{I}_p)$ is a sample from the normal
distribution with mean zero and variance $\sigma^2 \mathbb{I}_p$.
In many cases (\eg when the dataset $D$ is large), $\cA^{\text{priv}}$ is computed on a random subsample of $D$. Assuming $\cA^{\text{priv}}$ is $(\epsilon,\delta)$-differentially private, applying $\cA^{\text{priv}}$ to a randomly selected fraction $q$ of $D$ satisfies $(O(q\epsilon), q\delta)$-differential privacy, thereby amplifying privacy guarantees \citep{kasiviswanathan2011What,beimel2014Bounds}.
This privacy amplification by subsampling phenomenon, together with the Gaussian mechanism, serve as building blocks in more complex algorithms. In particular, they can be composed \citep{dwork2014algorithmic}, allowing the design of iterative private algorithms such as DP-SGD \citep{bassily2014Private,abadi2016Deep}.

\section{Pointwise Lipschitzness and Group Fairness}

Here, we show that several \emph{group fairness notions are pointwise Lipschitz} with respect to the model. To this end, we first prove a more general result on the pointwise Lipschitzness of accuracy conditionally on an arbitrary event.

\subsection{Pointwise Lipschitzness of Conditional Accuracy}
\label{sec:bound-cond-pred}

We first relate the difference of conditional accuracy of two models to the distance that separates them. This is summarized in the next theorem.
\begin{theorem}[Pointwise Lipschitzness of Conditional Accuracy]
  \label{thm:bound-on-diff-proba}
  Let $\cH$ be a set of real-valued functions with $L_{X,Y}$ the Lipschitz constants defined in
  \Cref{assum:lipschitz-constant-fct-val}. Let $h, h' \in \cH$ be two models, $(X,Y,S)$ be a triple of random variables with distribution $\cD$, and $E$ be an arbitrary
  event. Assume that $\expect \left(\sfrac{L_{X,Y}}{\abs{\rho(h',X,Y)}} \;\middle|\; E \right) < +\infty$, then
  \begin{align}
    \abs{ \prob( H(X) = Y \;\middle|\;E ) - \prob( H'(X) = Y \;\middle|\;E ) } \negspace{14em}&
    \nonumber
    \\
        \leq{}& \expect\left(\frac{L_{X,Y}}{\abs{\rho(h,X,Y)}} \;\middle|\; E \right) \norm{h - h'}_{\cH} \enspace.
    \tag{Lip}
    \label{eq:thm:pointwise-lipschitz}
  \end{align}
\end{theorem}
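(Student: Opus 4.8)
The plan is to reduce the difference of conditional accuracies to a difference of indicator functions of the margin, and then exploit \Cref{assum:lipschitz-constant-fct-val} pointwise. First I would observe that, ignoring ties, a pair $(x,y)$ is correctly classified by $h$ exactly when its margin is positive, so that $\prob(H(X)=Y \mid E) = \expect(\indic{\rho(h,X,Y)>0} \mid E)$, and likewise for $h'$. The finiteness hypothesis $\expect(L_{X,Y}/\abs{\rho(h',X,Y)} \mid E) < +\infty$ guarantees that $\abs{\rho(h',X,Y)} > 0$ almost surely on $E$ (at points where $L_{X,Y}>0$), so the event $\{\rho(h',X,Y)=0\}$ is negligible and this indicator representation is exact; the same reasoning applies to $h$ in the non-trivial case where the right-hand side is finite (otherwise the bound holds vacuously). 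With this in hand the left-hand side becomes $\abs{\expect(\indic{\rho(h,X,Y)>0} - \indic{\rho(h',X,Y)>0} \mid E)}$, which I bound by $\expect(\abs{\indic{\rho(h,X,Y)>0} - \indic{\rho(h',X,Y)>0}} \mid E)$ using the triangle inequality for conditional expectation.

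The heart of the argument is a pointwise estimate of this integrand. For a fixed $(x,y)$, the quantity $\abs{\indic{\rho(h,x,y)>0} - \indic{\rho(h',x,y)>0}}$ equals $1$ precisely when $\rho(h,x,y)$ and $\rho(h',x,y)$ have opposite signs, and equals $0$ otherwise. When the two margins straddle zero, they lie on opposite sides of the origin, so $\abs{\rho(h,x,y)} \leq \abs{\rho(h,x,y) - \rho(h',x,y)}$; combining this with \Cref{assum:lipschitz-constant-fct-val} gives $\abs{\rho(h,x,y)} \leq L_{x,y}\norm{h-h'}_\cH$, i.e. $1 \leq L_{x,y}\norm{h-h'}_\cH / \abs{\rho(h,x,y)}$ on this event. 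Since the indicator difference is at most $1$ and vanishes off this event, I obtain the clean pointwise bound $\abs{\indic{\rho(h,x,y)>0} - \indic{\rho(h',x,y)>0}} \leq L_{x,y}\norm{h-h'}_\cH / \abs{\rho(h,x,y)}$, valid for every $(x,y)$ with $\abs{\rho(h,x,y)}>0$. Taking the conditional expectation over $(X,Y) \mid E$ and pulling the constant $\norm{h-h'}_\cH$ out of the expectation yields exactly \eqref{eq:thm:pointwise-lipschitz}.

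I expect the main obstacle to be the careful treatment of ties and zero-margin points rather than the inequality itself, which is elementary once the indicator reduction is made. Specifically, one must argue that the tie-breaking rule contributes nothing, which is where the finiteness assumption does its work: a positive conditional probability of a vanishing margin would make the relevant expectation infinite wherever $L_{X,Y}>0$, so either the margin is almost surely nonzero and the representation is exact, or the bound is trivially true because its right-hand side is $+\infty$. A secondary point worth spelling out is the apparent asymmetry between the hypothesis (stated for $h'$) and the bound (involving $h$): since the left-hand side is symmetric in $h$ and $h'$ and the straddling argument bounds both $\abs{\rho(h,x,y)}$ and $\abs{\rho(h',x,y)}$ by $L_{x,y}\norm{h-h'}_\cH$, the same proof delivers the analogous bound with $h'$ in the denominator, and the hypothesis merely certifies that at least one of the two resulting bounds is finite.
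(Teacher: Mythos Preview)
Your proof is correct and rests on the same idea as the paper's: $H$ and $H'$ can disagree at $(x,y)$ only when $\abs{\rho(h,x,y)} \leq L_{x,y}\norm{h-h'}_\cH$. The paper packages this slightly differently --- it bounds each direction of the difference separately by probability manipulations to reach $\prob\!\big(\abs{\rho(h,X,Y)} \leq L_{X,Y}\norm{h-h'}_\cH \,\big|\, E\big)$, and then applies Markov's inequality to that probability --- whereas your pointwise estimate $\abs{\indic{\rho(h)>0}-\indic{\rho(h')>0}} \leq L_{x,y}\norm{h-h'}_\cH/\abs{\rho(h,x,y)}$ folds both the two-direction case split and the Markov step into a single inequality inside the conditional expectation, which is a bit more direct. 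Your treatment of ties and of the $h$-vs-$h'$ asymmetry is sound; in fact the paper's appendix restates the finiteness hypothesis with $h$ rather than $h'$, so the discrepancy you flagged is a typo in the main-text statement.
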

\begin{proof} (Sketch) The proof of this theorem is in two steps. First, we use the Lipschitzness of the margin (\Cref{assum:lipschitz-constant-fct-val}), the triangle inequality, and the union bound to show that $\abs{\prob\left(H(X) = Y \;\middle|\;E\right) - \prob\left(H'(X) = Y \;\middle|\; E\right)} \leq{} \prob\left(\sfrac{L_{X,Y}}{\abs{\rho(h,X,Y)}}  \geq \sfrac{1}{\norm{h - h'}_\cH} \;\middle|\; E\right)$. Then, applying Markov's inequality gives the desired result. The complete proof can be found in \Cref{sec-app:proof-thm-1}.
\end{proof}

\Cref{thm:bound-on-diff-proba} shows the pointwise lipschitzness of the function $h \mapsto \prob\left(H(X) = Y \;\middle|\; E\right)$ and underlines the importance of the confidence margin $\rho(h,x,y)$ of the model $h$. Note that $\sfrac{L_{x,y}}{\abs{\rho(h,x,y)}}$ is small when the model $h$ is confident (relatively to $L_{x,y}$) in its prediction for the true label $y$. This implies that, when the probability (given $E$) that a point has a small margin (relatively to $L_{x,y}$) is small, $\expect\left(\frac{L_{X,Y}}{\abs{\rho(h,X,Y)}} \;\middle|\; E \right)$ is also small. This is notably the case for large margin classifiers.

Additionally, we note that data records that are unlikely do not affect the result too much. Indeed, even if $\sfrac{L_{x,y}}{\abs{\rho(h,x,y)}}$ is large, this can be compensated for by the small probability of observing $x,y$ so that the value of $\expect\left(\frac{L_{X,Y}}{\abs{\rho(h,X,Y)}} \;\middle|\; E \right)$ is not significantly affected.

It is worth noting that the bound presented in \Cref{thm:bound-on-diff-proba} can be tightened (at the expense of readability) without affecting the quantities that need to be controlled, that is the margin $\abs{\rho(h,x,y)}$ and the distance $\norm{h-h'}_{\cH}$. Hence, note that given $(x,y) \in \cX\times\cY$, if $\abs{\rho(h,x,y)}  \geq L_{x,y}\norm{h - h'}_\cH$, then it means that $h$'s margin is large enough to ensure that $h$ and $h'$ have the same prediction on $x$. The corresponding term in the expectation may then be accounted for as zero, improving the upper bound (\Cref{app:rem:large-enough-margins}). Interestingly, if all the examples are classified with such a large margin, our bound becomes $0$, further hinting toward the importance of large margin classifiers. This result may be further tightened by using a Chernoff bound instead of Markov's inequality (\cref{app:rem:chernoff-instead-markov}), yielding
    $\abs{ \prob( H\!(X) \!=\! Y |E ) \!-\! \prob( H'\!(X) \!=\! Y | E ) } \le \beta_{\!X,Y}(h)$, with
    \begin{align*}
        \beta_{X,Y}(h)=\inf_{t \ge 0} \Big\{ e^{t\norm{h-h'}_\cH}
    \expect\Big(e^{-\frac{t\abs{\rho(h,X,Y)}}{L_{X,Y}}} \Big| E \Big)
    \Big\}
    \enspace.
    \end{align*}
    In the subsequent theoretical developments, we use the bound derived in \Cref{thm:bound-on-diff-proba} for the sake of readability. In the numerical experiments (\Cref{sec:experiments}), we use the version of the bound that yields the tightest results by combining both of the aforementioned techniques.

\subsection{Pointwise Lipschitzness of Group Fairness Notions}
\label{sec:bound-fairness}

We now use \Cref{thm:bound-on-diff-proba}'s general result to relate the fairness levels
of two classifiers, based on their distance.
In \Cref{thm:bound-on-diff-fairness}, we show that fairness notions in the form of~\eqref{eq:fairness} are pointwise Lipschitz.
\begin{theorem}[Pointwise Lipschitzness of Fairness\label{thm:bound-on-diff-fairness}]
  Let $h, h' \in \cH$, and $L_{X,Y}$ defined as in
  \Cref{assum:lipschitz-constant-fct-val}. For any fairness notion of the form of~\eqref{eq:fairness}, we have, for all $k \in [K]$,
  \begin{align*}
      \abs{F_k(h,D) - F_k(h',D)} \leq \chi_k(h, D)\norm{h - h'}_{\cH}
      \enspace.
  \end{align*}
  with $ \chi_k(h, D) = \sum_{k'=1}^K \abs{C_k^{k'}}\expect \left( \frac{L_{X,Y}}{\abs{\rho(h,X,Y)}} \;\middle|\; D_{k'} \right)$.
  Similarly, for the aggregate measure of fairness defined in~\eqref{eq:aggregatefairness},
  \begin{align*}
      \!\abs{\text{Fair}(h,D) \!-\! \text{Fair}(h',D)} \!\leq \!\frac{1}{K} \! \sum_{k=1}^K \chi_k(h, D)\norm{h - h'}_{\cH}
      \enspace.
  \end{align*}
\end{theorem}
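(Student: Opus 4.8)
The plan is to reduce the claim to a termwise application of \Cref{thm:bound-on-diff-proba}. First I would expand the difference of the two fairness levels using the defining form~\eqref{eq:fairness}. Since the offset $C_k^0$ does not depend on the model, it cancels, leaving
\begin{align*}
    F_k(h,D) - F_k(h',D) = \sum_{k'=1}^K C_k^{k'}\Big(\prob(H(X)=Y \mid D_{k'}) - \prob(H'(X)=Y \mid D_{k'})\Big)
    \enspace.
\end{align*}
Applying the triangle inequality across the $K$ summands and factoring out $\abs{C_k^{k'}}$ then produces a sum of conditional-accuracy gaps, one per group $k'$.

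The second step is to invoke \Cref{thm:bound-on-diff-proba} with its arbitrary event $E$ instantiated as membership in the group $D_{k'}$. Each gap $\abs{\prob(H(X)=Y \mid D_{k'}) - \prob(H'(X)=Y \mid D_{k'})}$ is thereby bounded by $\expect\left(\frac{L_{X,Y}}{\abs{\rho(h,X,Y)}} \;\middle|\; D_{k'}\right)\norm{h-h'}_{\cH}$. Summing over $k'$ and pulling out the common factor $\norm{h-h'}_{\cH}$ reconstructs exactly the stated constant $\chi_k(h,D)$, which finishes the per-group bound.

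For the aggregate measure~\eqref{eq:aggregatefairness}, I would first move the outer absolute value inside the average via the reverse triangle inequality $\babs{\abs{a}-\abs{b}} \leq \abs{a-b}$, giving $\abs{\text{Fair}(h,D)-\text{Fair}(h',D)} \leq \frac{1}{K}\sum_{k=1}^K \abs{F_k(h,D)-F_k(h',D)}$, and then substitute the per-group bound just established, which immediately yields the claimed inequality.

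The argument is essentially bookkeeping once \Cref{thm:bound-on-diff-proba} is available, so I expect no genuine obstacle; the only point requiring care is the finiteness hypothesis of that theorem. It must hold for each conditioning event $E = D_{k'}$, which is guaranteed precisely when each conditional expectation is finite, i.e.\ when the resulting constant $\chi_k(h,D) < +\infty$. I would simply note this requirement to keep the bounds non-vacuous, rather than treat it as a difficulty.
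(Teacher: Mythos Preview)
Your proposal is correct and matches the paper's own proof essentially step for step: expand~\eqref{eq:fairness} so that $C_k^0$ cancels, apply the triangle inequality, invoke \Cref{thm:bound-on-diff-proba} with $E=D_{k'}$ for each $k'$, and for the aggregate bound use the triangle and reverse triangle inequalities before substituting the per-group result. Your remark on the finiteness hypothesis is a small extra point of care not made explicit in the paper's proof.
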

\begin{proof} (Sketch) To prove the first claim, we use the triangle inequality to show that, for each group, the absolute difference in fairness is bounded by a combination of absolute differences between conditional probabilities. We can then apply Theorem~\ref{thm:bound-on-diff-proba}. The second claim follows by applying the first one to each group independently. The complete proof is provided in Appendix~\ref{app:sec:proof-bound-on-diff-fairness}.
\end{proof}

\Cref{thm:bound-on-diff-fairness} implies that \emph{classifiers that are
sufficiently close have similar fairness
levels}. This has two major consequences when studying a given model. On the one hand, we have an upper bound on the harm that can be done to fairness: small variations of the model cannot make it much more unfair. On the other hand, we have a lower bound on the distance needed to make a model fair: making the model significantly more fair requires to substantially alter it.
In the next corollary, we instantiate \Cref{thm:bound-on-diff-fairness} for various popular group fairness notions, and for accuracy.
\begin{corollary}%
\label{cor:bound-on-popular-fairness}
  Let $h, h' \in \cH$, and $L_{X,Y}$ defined as in
  \Cref{assum:lipschitz-constant-fct-val}. The difference in fairness or accuracy between $h$ and $h'$ can be bounded as follows.%

\textbf{Equalized Odds \citep{hardt2016equality}}: the data is divided into $K=\abs{\cY \times \cS}$ groups such that for all $(y,r) \in \cY \times \cS,$
\begin{align*}
  \chi_{(y,r)}(h,D) ={}& \expect \left( \tfrac{L_{X,Y}}{\abs{\rho(h,X,Y)}} \;\middle|\; Y = y \right)\\*
  &+ \expect \left( \tfrac{L_{X,Y}}{\abs{\rho(h,X,Y)}} \;\middle|\; Y = y, S = r \right)
  \enspace.
\end{align*}
\textbf{Equality of Opportunity \citep{hardt2016equality}}: we let $\cY' \subseteq \cY$ the set of desirable outcomes. The data is divided into $K=\abs{\cY \times \cS}$ such that for all $(y,r) \in \cY \times \cS,$
\begin{align*}
    \chi_{(y,r)}(h,D) =
    \expect \left( \tfrac{L_{X,Y}}{\abs{\rho(h,X,Y)}} \;\middle|\; Y = y, S = r \right) \\ + \expect \left( \tfrac{L_{X,Y}}{\abs{\rho(h,X,Y)}} \;\middle|\; Y = y \right)
    \enspace,
\end{align*}
if $y$ is a desired outcome, and
$\chi_{(y,r)}(h,D) = 0$ otherwise.

\textbf{Accuracy Parity \citep{zafar2017fairness}}: the data is divided into $K=\abs{\cS}$ groups such that for all $r \in \cS$,
\begin{align*}
    \chi_{(r)}(h,D) = \expect \left( \tfrac{L_{X,Y}}{\abs{\rho(h,X,Y)}} \right) + \expect \left( \tfrac{L_{X,Y}}{\abs{\rho(h,X,Y)}} \;\middle|\; S = r \right)
  \enspace.
\end{align*}
\textbf{Demographic Parity (Binary Labels) \citep{calders2009building}}: the data is divided into $K=\abs{\cY \times \cS}$ groups such that for all $(y,r) \in \cY \times \cS$,
\begin{align*}
       \chi_{(y,r)}(h,D) \!=\! \expect \left( \tfrac{L_{X,Y}}{\abs{\rho(h,X,Y)}} \right) \!+\! \expect \left( \tfrac{L_{X,Y}}{\abs{\rho(h,X,Y)}} \;\middle|\; S = r \right)
  \enspace.
  \end{align*}
  \textbf{Accuracy}: the data is in a single group, such that
    \begin{align*}
    \chi(h,D)
    = \expect \left( \tfrac{L_{X,Y}}{\abs{\rho(h,X,Y)}} \right)
  \enspace.
  \end{align*}
\end{corollary}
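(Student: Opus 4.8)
The plan is to reduce each item to repeated applications of \Cref{thm:bound-on-diff-proba} (equivalently, of \Cref{thm:bound-on-diff-fairness}), after rewriting every notion as a \emph{signed sum of conditional accuracies} $\prob(H(X)=Y\mid E)$ for suitable events $E$. The reformulations of Equalized Odds, Equality of Opportunity, Accuracy Parity and (binary) Demographic Parity in the form of~\eqref{eq:fairness} are collected in Appendix~\ref{app:sec:fairnessfunctions}, and I would take those as the starting point, reading off the groups $D_{k'}$ and the coefficients $C_k^{k'}$ in each case.

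For \textbf{Equalized Odds}, \textbf{Equality of Opportunity} and \textbf{Accuracy Parity}, each $F_k$ is already a difference of two conditional accuracies. For Equalized Odds, group $(y,r)$ compares the subgroup performance to the label-$y$ performance, i.e. $F_{(y,r)}(h,D)=\prob(H(X)=Y\mid Y=y,S=r)-\prob(H(X)=Y\mid Y=y)$. I would apply the triangle inequality to split $\abs{F_{(y,r)}(h,D)-F_{(y,r)}(h',D)}$ into the two corresponding gaps and invoke \Cref{thm:bound-on-diff-proba} once per gap, with $E=\{Y=y,S=r\}$ and $E=\{Y=y\}$; summing the two bounds yields $\chi_{(y,r)}$ exactly as stated. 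Equality of Opportunity is identical but restricted to $y\in\cY'$ (the other groups contribute $0$), Accuracy Parity uses $E=\{S=r\}$ together with the trivial event, and \textbf{Accuracy} is the single application of \Cref{thm:bound-on-diff-proba} with $E$ the whole space.

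One subtlety arises if one instead invokes \Cref{thm:bound-on-diff-fairness} verbatim: there the groups $D_{k'}$ are the $(y,r)$ cells, so a conditioning event such as $\{Y=y\}$ shows up as a weighted sum over the cells $(y,r')$. To recover the compact conditional expectations written in the statement, I would collapse these cell-wise terms using the law of total expectation,
\[
\sum_{r'}\frac{\prob(Y=y,S=r')}{\prob(Y=y)}\,\expect\!\left(\frac{L_{X,Y}}{\abs{\rho(h,X,Y)}}\;\middle|\;Y=y,S=r'\right)=\expect\!\left(\frac{L_{X,Y}}{\abs{\rho(h,X,Y)}}\;\middle|\;Y=y\right),
\]
so that both routes deliver the same $\chi_k$.

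The main obstacle is \textbf{Demographic Parity}, since it constrains prediction rates $\prob(H(X)=y\mid\cdot)$ rather than accuracies, and hence does not fit \eqref{eq:fairness} as transparently. I would handle it in two moves. First, I would establish a fixed-class analogue of \Cref{thm:bound-on-diff-proba}: for any fixed $c\in\cY$, the same margin/union-bound/Markov argument applied to $\rho(h,X,c)=h(X,c)-\max_{y'\neq c}h(X,y')$ (covered by \Cref{assum:lipschitz-constant-fct-val}, which holds for all label values) gives $\abs{\prob(H(X)=c\mid E)-\prob(H'(X)=c\mid E)}\leq\expect(\sfrac{L_{X,c}}{\abs{\rho(h,X,c)}}\mid E)\norm{h-h'}_{\cH}$. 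Second, I would exploit the binary-label symmetry $\rho(h,x,0)=-\rho(h,x,1)$, so that $\abs{\rho(h,x,c)}=\abs{\rho(h,x,Y)}$ and the margin's Lipschitz constant is label-independent; this lets me replace $\sfrac{L_{X,c}}{\abs{\rho(h,X,c)}}$ by $\sfrac{L_{X,Y}}{\abs{\rho(h,X,Y)}}$. Applying the triangle inequality between the $\{S=r\}$-conditional and the unconditional prediction rates, and then bounding each via the fixed-class inequality, yields $\chi_{(y,r)}(h,D)=\expect(\sfrac{L_{X,Y}}{\abs{\rho(h,X,Y)}})+\expect(\sfrac{L_{X,Y}}{\abs{\rho(h,X,Y)}}\mid S=r)$, as claimed.
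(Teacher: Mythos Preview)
Your plan is correct and actually delivers the stated constants more directly than the paper's own one-line proof. The paper simply points to \Cref{thm:bound-on-diff-fairness} together with the $C_k^{k'}$ coefficients from Appendix~\ref{app:sec:fairnessfunctions}; for Demographic Parity in particular, it relies on the Appendix~A rewriting of $\prob(H(X)=y\mid\cdot)$ as a signed combination of conditional accuracies $\prob(H(X)=Y\mid Y=\cdot,S=\cdot)$, and then applies \Cref{thm:bound-on-diff-fairness}. Your alternative for Demographic Parity---a fixed-class analogue of \Cref{thm:bound-on-diff-proba} combined with the binary identity $\rho(h,x,0)=-\rho(h,x,1)$ (hence $\abs{\rho(h,x,c)}=\abs{\rho(h,x,Y)}$ and $L_{x,0}=L_{x,1}$)---is a genuinely different and cleaner route that lands exactly on the stated $\chi_{(y,r)}$ without the detour through the cell decomposition.

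One small inaccuracy: your aside that ``both routes deliver the same $\chi_k$'' after collapsing via the law of total expectation is not quite right. If you apply \Cref{thm:bound-on-diff-fairness} verbatim with the Appendix~A coefficients and then collapse, you do \emph{not} recover the corollary's constants but something strictly smaller. For Equalized Odds, for instance, one gets
\[
\chi_{(y,r)}^{\text{Thm~\ref{thm:bound-on-diff-fairness}}}
=\expect\!\left(\tfrac{L_{X,Y}}{\abs{\rho(h,X,Y)}}\;\middle|\;Y=y\right)
+\bigl(1-2\prob(S=r\mid Y=y)\bigr)\,\expect\!\left(\tfrac{L_{X,Y}}{\abs{\rho(h,X,Y)}}\;\middle|\;Y=y,S=r\right),
\]
which is $\le \chi_{(y,r)}$ as stated in the corollary but not equal to it (the coefficient on the $(y,r)$-cell is $1-\prob(S=r\mid Y=y)$, not $\prob(S=r\mid Y=y)$, so the tower does not close up). The same phenomenon occurs for Demographic Parity. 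So the paper's route yields a sharper constant that must then be relaxed to the simpler form in the corollary, whereas your direct-Theorem~\ref{thm:bound-on-diff-proba} route hits the stated constants on the nose. Either way the corollary holds; just drop the claim of equality between the two routes.
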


\begin{proof}
This corollary follows from \Cref{thm:bound-on-diff-fairness} by replacing the $C_k^{k'}$'s by their appropriate values (depending on the considered notion). See \Cref{app:sec:fairnessfunctions} for more details.
\end{proof}

\Cref{cor:bound-on-popular-fairness} shows that our results are applicable to several \emph{group fairness notions}, but also to \emph{accuracy}. Importantly, the pointwise Lipschitz constant $\chi_{k}(h,D)$ depends both on the group $k \in [K]$, and on the considered fairness notion. This sheds light on the fact that comparing the fairness levels of two models requires special attention, as there may be important disparities depending on the considered sensitive group or fairness notion.
We will use this result in Section~\ref{sec:privacy}, where we bound the disparate impact of privacy by bounding the difference in fairness levels between private and non-private models.

\paragraph{Finite sample analysis.} In practice, it is often assumed that one does not have access to the true distribution $\cD$ but rather to a finite sample $D = \{(x_1, s_1, y_1), \dots, (x_n, s_n, y_n)\}$ of size $n$. An empirical estimate of the expectation from a finite sample is then defined as $\widehat{\expect}\left(f(X)\right) = \frac{1}{n} \sum_{i=1}^n f(x_i)$. The results presented in \Cref{thm:bound-on-diff-proba}, \Cref{thm:bound-on-diff-fairness}, and \Cref{cor:bound-on-popular-fairness} also hold in this finite sample setting. For instance, denoting by $\widehat{\chi}_k$ the empirical version of $\chi_k$, we have that $\forall k \in [K]$,
\begin{align*}
    \abs{\widehat{F}_k(h,D) - \widehat{F}_k(h',D)} \leq \widehat{\chi}_k(h, D)\norm{h - h'}_{\cH}
    \enspace.
\end{align*}
One may then wonder whether it is possible to connect the true fairness of a model $h$ to the empirical fairness of a second model $h'$, that is bound $\abs{F_k(h,D) - \widehat{F}_k(h',D)}$. In the next lemma, we show that such bound can indeed be obtained when $h$ and $h'$ were learned on $D$.
\begin{lemma}
\label{lem:finitesample}
Let $D$ be a finite sample of $n \geq \frac{8\log\left(\frac{2K+1}{\delta}\right)}{\min_{k'}p_{k'}}$ examples drawn i.i.d. from $\cD$, where $p_{k'}$ is the true proportion of examples from group $k'$. Assume that $\prob_{D \sim \cD^n}\big(\sum_{k'=0}^K\babs{C_k^{k'} - \widehat{C}_k^{k'}} > \alpha_{C}\big) \leq B_3\exp\left(-B_4\alpha_{C}^2n\right)$. Let $\cH$ be an hypothesis space and $d_{\cH}$ be the Natarajan dimension of $\cH$. With probability at least $1-\delta$ over the choice of $D$, $\forall h, h' \in \cH$
    \begin{align*}
        \abs{F_k(h,D) - \widehat{F}_k(h',D)} \leq \widehat{\chi}_k(h, D)\norm{h - h'}_{\cH} \negspace{14em}&\\
        &+ \widetilde{O}\left(\sum_{k'=1}^K \abs{\widehat{C}_k^{k'}}\sqrt{\frac{d_{\cH} + \log\left(\sfrac{K}{\delta}\right)}{np_{k'}}}\right)
        \enspace.
    \end{align*}
\end{lemma}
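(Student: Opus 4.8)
The plan is to split the target quantity with the triangle inequality by inserting the empirical fairness of the \emph{same} model $h$:
\begin{align*}
\abs{F_k(h,D) - \widehat{F}_k(h',D)} \leq \abs{F_k(h,D) - \widehat{F}_k(h,D)} + \abs{\widehat{F}_k(h,D) - \widehat{F}_k(h',D)}.
\end{align*}
The second term is controlled deterministically, for every pair $h,h' \in \cH$, by the finite-sample version of \Cref{thm:bound-on-diff-fairness}, which yields exactly $\widehat{\chi}_k(h,D)\norm{h-h'}_\cH$. Note that this makes uniformity over $h'$ unnecessary: all the probabilistic content reduces to a bound on the first term that is \emph{uniform over $h \in \cH$} (so that it survives a data-dependent choice of $h$), namely $\sup_{h}\abs{F_k(h,D) - \widehat{F}_k(h,D)} = \widetilde{O}\big(\sum_{k'=1}^K \abs{\widehat{C}_k^{k'}}\sqrt{(d_\cH + \log(\sfrac{K}{\delta}))/(np_{k'})}\big)$.

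\textbf{Decomposing the generalization term.} Writing $a_{k'}(h) = \prob(H(X)=Y\mid D_{k'})$ and $\widehat{a}_{k'}(h)$ for its empirical counterpart, I would expand $F_k-\widehat{F}_k$ via \eqref{eq:fairness} and add-and-subtract $\widehat{C}_k^{k'}a_{k'}(h)$ in each summand to obtain
\begin{align*}
\abs{F_k(h,D) - \widehat{F}_k(h,D)} \leq \sum_{k'=0}^K \babs{C_k^{k'} - \widehat{C}_k^{k'}} + \sum_{k'=1}^K \abs{\widehat{C}_k^{k'}}\,\abs{a_{k'}(h) - \widehat{a}_{k'}(h)},
\end{align*}
using $0 \le a_{k'}(h) \le 1$ to drop the true conditional accuracies multiplying the coefficient errors. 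The first sum is handled directly by the stated concentration assumption: inverting $B_3\exp(-B_4\alpha_C^2 n)=\sfrac{\delta}{(2K+1)}$ gives $\sum_{k'}\babs{C_k^{k'}-\widehat{C}_k^{k'}} = \widetilde{O}(\sqrt{\log(\sfrac{K}{\delta})/n})$, a term that is dominated by the second sum since $p_{k'}\le 1$ and hence absorbed into the final $\widetilde{O}$. The crux is therefore the per-group conditional-accuracy deviation $\sup_h\abs{a_{k'}(h)-\widehat{a}_{k'}(h)}$.

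\textbf{Uniform convergence with random group sizes.} For a fixed $k'$, I would condition on the group memberships: given that $n_{k'}$ of the $n$ points fall in $D_{k'}$, these points are i.i.d. from $\cD\mid D_{k'}$, and $\widehat{a}_{k'}(h)$ is the empirical $0/1$ accuracy of the multi-class classifier $H$ over these $n_{k'}$ points. Standard uniform convergence for multi-class classification governed by the Natarajan dimension then gives $\sup_h\abs{a_{k'}(h)-\widehat{a}_{k'}(h)} = \widetilde{O}(\sqrt{(d_\cH+\log(\sfrac{1}{\delta'}))/n_{k'}})$ (the $\log\abs{\cY}$ and $\log(n_{k'}/d_\cH)$ factors hide in $\widetilde{O}$). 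It remains to replace the random denominator: a multiplicative Chernoff bound on $n_{k'}\sim\mathrm{Bin}(n,p_{k'})$ gives $\prob(n_{k'}<\sfrac{np_{k'}}{2})\le\exp(-\sfrac{np_{k'}}{8})$, which is at most $\sfrac{\delta}{(2K+1)}$ precisely because $n\ge 8\log(\sfrac{(2K+1)}{\delta})/\min_{k'}p_{k'}$; substituting $n_{k'}\ge \sfrac{np_{k'}}{2}$ turns each per-group bound into $\widetilde{O}(\sqrt{(d_\cH+\log(\sfrac{K}{\delta}))/(np_{k'})})$.

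\textbf{Assembling and the main obstacle.} Finally I would union-bound over the $2K+1$ failure events — one coefficient-concentration event, the $K$ group-size events, and the $K$ per-group uniform-convergence events — each allocated probability $\sfrac{\delta}{(2K+1)}$, so the total failure probability is at most $\delta$ and the sample-size hypothesis is exactly what is needed (which explains the $2K+1$ and the $\log(\sfrac{K}{\delta})$ inside the root). Multiplying each per-group deviation by $\abs{\widehat{C}_k^{k'}}$, summing over $k'$, folding in the dominated coefficient-error term, and adding back the deterministic $\widehat{\chi}_k(h,D)\norm{h-h'}_\cH$ term gives the lemma. I expect the delicate step to be the conditional-accuracy uniform convergence under a \emph{random} group size: one must argue that conditioning on group memberships preserves i.i.d.-ness within each group so the Natarajan bound applies, then decouple this from the event $n_{k'}\ge \sfrac{np_{k'}}{2}$ without the two events' dependence spoiling the union bound. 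Care is also needed to verify that the $\sfrac{1}{\sqrt{n}}$ coefficient-estimation error is genuinely lower order (it lacks the $\sfrac{1}{\sqrt{p_{k'}}}$ inflation), so that it can be safely absorbed into the $\widetilde{O}$.
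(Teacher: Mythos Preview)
Your proposal is correct and follows essentially the same route as the paper: the same triangle-inequality split (deterministic empirical Lipschitz term plus a uniform generalization term), the same add-and-subtract decomposition into coefficient errors and per-group conditional-accuracy deviations, the same conditioning on group sizes with a multiplicative Chernoff bound to replace $n_{k'}$ by $np_{k'}/2$, the Natarajan-dimension uniform convergence bound for multi-class classification, and the same $(2K+1)$-way union bound that explains the sample-size hypothesis. Your observation that uniformity over $h'$ is superfluous (since the $\widehat{\chi}_k$ term is deterministic) is a nice clarification that the paper leaves implicit.
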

\begin{proof}
(Sketch) This lemma follows from bounding the two terms in the following inequality:
\begin{align*}
    \abs{F_k(h',D) - \widehat{F}_k(h,D)} \leq{}&\abs{\widehat{F}_k(h',D) - \widehat{F}_k(h,D)} \\
    &+ \abs{F_k(h',D) - \widehat{F}_k(h',D)}
    \enspace.
\end{align*}
The first term can be bounded using the empirical counterpart of \Cref{thm:bound-on-diff-fairness}. The second term is then bounded with high probability using standard uniform convergence bounds \citep{shalev2014understanding}.
The complete proof can be found in \Cref{app:sec:proof-finitesample} where the result is also extended to the simpler case where $h$ and $h'$ are fixed rather than learned on $D$.
\end{proof}

\section{Bounding the Relative Fairness of Private Models}
\label{sec:privacy}

In this section, we quantify the difference of fairness between a private model and its non-private counterpart. Let
$\ell: \cH \times \cX \times \cS \times \cY \rightarrow \RR$ be a loss function. Assume $\ell$ is $\Lambda$-Lipschitz, and
$\mu$-strongly-convex with respect to its first variable. Assume the norm $\norm{\cdot}_\cH$ is Euclidean, and that $\cH$ is convex. We define the optimal model $h^* \in \cH$ as
\begin{align}
  \label{pb:dp-erm}
  h^* = \argmin_{h \in \cH} f(h) = \frac{1}{n} \sum_{i=1}^n \ell(h; x_i, s_i, y_i)
  \enspace.
\end{align}

Note that, since $f$ is strongly-convex, $h^*$ is unique, and is thus a natural choice for the non-private reference model. This strong convexity assumption could be relaxed: this would require to define another reference model (e.g., the output of non-private SGD) and to bound the distance between this reference model and the private model.

Two mechanisms are commonly used to find a differentially private approximation $h^{\priv}$ of $h^*$: output perturbation \citep{chaudhuri2011Differentially,lowy2021output},
and DP-SGD \citep{bassily2014Private,abadi2016Deep}. For both mechanisms, the distance $\norm{h^\priv - h^*}_\cH$ can be upper bounded with high probability. In this section, we recall these two mechanisms and the corresponding high probability upper bounds. We then plug these bounds in \Cref{thm:bound-on-diff-fairness} to
bound the fairness level of the private solution $h^{\priv}$
relatively to the one of the true solution $h^*$.

\subsection{Bounding the Distance between Private and Optimal Classifiers}
\label{sec:two-priv-mech}

\paragraph{Output perturbation.}
Output perturbation computes the
non-private solution $h^*$ of~\eqref{pb:dp-erm}, and releases a private estimate by the Gaussian mechanism:
\begin{align*}
  h^{\priv} = \pi_{\cH}(h^* + \cN(\sigma^2\mathbb{I}_p))
  \enspace,
\end{align*}
where $\pi_\cH$ is the projection on $\cH$. Let $\Delta$ be the
sensitivity of the function $D \mapsto \argmin_{w\in\cH} f(w;
D)$. In our setting, we have $\Delta = \sfrac{2\Lambda}{\mu n}$. Then, given $0 < \epsilon,\delta < 1$, $h^{\priv}$ is $(\epsilon, \delta)$-differentially private as long as $\sigma^2 \ge
\sfrac{2\Delta^2\log(1.25/\delta)}{\epsilon^2}$. We bound
the distance between $h^{\priv}$ and $h^*$ with high
probability in \Cref{lemma:sensitivity-output-perturbation}. %
\begin{lemma}
  \label{lemma:sensitivity-output-perturbation}
  Let $h^{\priv}$ be the vector released by output
  perturbation with noise
  $\sigma^2 = \sfrac{8\Lambda^2\log(1.25/\delta)}{\mu^2 n^2 \epsilon^2}$,
  and $0 < \zeta < 1$, then with probability at least $1 - \zeta$,
  \begin{align*}
    \norm{h^{\priv} - h^*}_2^2
      \le \frac{32p\Lambda^2\log(1.25/\delta)\log(2/\zeta)}{\mu^2 n^2 \epsilon^2}
      \enspace.
  \end{align*}
\end{lemma}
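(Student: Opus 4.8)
The plan is to split the argument into a deterministic geometric step, which removes the projection, and a probabilistic concentration step for the Gaussian noise. Write $h^{\priv} = \pi_{\cH}(h^* + \bxi)$ with $\bxi \sim \cN(0, \sigma^2 \mathbb{I}_p)$. First I would use that $\cH$ is (closed) convex, so the Euclidean projection $\pi_{\cH}$ is non-expansive, together with $h^* \in \cH$ which gives $\pi_{\cH}(h^*) = h^*$. This yields, for every realization of the noise, the pathwise bound
\begin{align*}
    \norm{h^{\priv} - h^*}_2 = \norm{\pi_{\cH}(h^* + \bxi) - \pi_{\cH}(h^*)}_2 \le \norm{\bxi}_2 \enspace,
\end{align*}
so the claim reduces to an upper tail bound on $\norm{\bxi}_2^2$.

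Next I would control $\norm{\bxi}_2^2 = \sigma^2 \sum_{j=1}^p g_j^2$, where the $g_j = \xi_j/\sigma$ are i.i.d.\ standard normals, so that $\sum_j g_j^2 \sim \chi^2_p$. A Chernoff bound based on the chi-squared moment generating function $\expect(e^{t g_j^2}) = (1-2t)^{-1/2}$ (valid for $t < \tfrac12$) gives, for any such $t$,
\begin{align*}
    \prob\Big(\textstyle\sum_{j=1}^p g_j^2 > a\Big) \le e^{-ta}(1-2t)^{-p/2} \enspace.
\end{align*}
Choosing $t = \tfrac14$ and $a = 4p\log(\sfrac{2}{\zeta})$, the right-hand side collapses to $2^{-p/2}\zeta^p$, which is at most $\zeta$ for all $p \ge 1$ and $0 < \zeta < 1$ since then $2^{-p/2}\zeta^{p-1} \le 1$. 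Hence with probability at least $1-\zeta$ we have $\norm{\bxi}_2^2 \le 4p\log(\sfrac{2}{\zeta})\,\sigma^2$.

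Finally, substituting the prescribed noise level $\sigma^2 = \sfrac{8\Lambda^2\log(1.25/\delta)}{\mu^2 n^2 \epsilon^2}$ into the previous display produces exactly
\begin{align*}
    \norm{h^{\priv} - h^*}_2^2 \le 4p\log(\sfrac{2}{\zeta})\,\sigma^2 = \frac{32p\Lambda^2\log(1.25/\delta)\log(2/\zeta)}{\mu^2 n^2 \epsilon^2} \enspace,
\end{align*}
which is the stated bound. The only delicate point — the main obstacle — is pinning down the constant in the chi-squared tail so that the $(1-2t)^{-p/2}$ factor from the moment generating function is cleanly absorbed into the readable form $4p\log(\sfrac{2}{\zeta})$; the choice $t=\tfrac14$ is precisely what makes the arithmetic close, and one should verify the leftover factor $2^{-p/2}\zeta^{p-1}$ stays below one over the whole admissible range of $p$ and $\zeta$. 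The non-expansiveness of the projection and the i.i.d.\ structure of the noise are entirely standard.
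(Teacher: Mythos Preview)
Your proof is correct and follows the same overall strategy as the paper: a Chernoff bound on the $\chi^2_p$ tail of the Gaussian noise, then plug in $\sigma^2$. Two minor differences are worth noting. First, you take $t=\tfrac14$ in the Chernoff bound, which gives the clean tail $2^{-p/2}\zeta^p\le\zeta$; the paper instead uses the $p$-dependent choice $t=\tfrac{1}{4p}$ (after normalization), bounds $(1-\tfrac{1}{2p})^{-p/2}\le e^{1/2}\le 2$ via a logarithm inequality, and lands exactly at $\zeta$. Your choice is both simpler and actually sharper in $p$. Second, you explicitly invoke non-expansiveness of $\pi_{\cH}$ to remove the projection, whereas the paper's appendix proof silently drops the projection and works with $h^{\priv}=h^*+\cN(0,\sigma^2\mathbb I_p)$; your treatment is the more careful one. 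Conversely, the paper's proof also includes the derivation of the sensitivity $\Delta=2\Lambda/(\mu n)$ that justifies the stated $\sigma^2$ for $(\epsilon,\delta)$-DP, which you take as given---reasonable, since the lemma statement hands you $\sigma^2$ directly.
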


\paragraph{DP-SGD.} DP-SGD starts from some $h^0 \in \cH$ and updates it using
stochastic gradients. That is, with $\gamma > 0$, $i \sim \cU([n])$, and $\eta^t \sim \cN(0, \sigma^2 \mathbb{I}_p)$, we iteratively update
\begin{align*}
  h^{t+1} = \pi_{\cH} (h^t - \gamma(\nabla \ell(h^t; x_i, y_i) + \eta^t))
  \enspace.
\end{align*}
After $T>0$ iterations, we release $h^{\priv} = h^T$. Given $0 < \epsilon,\delta < 1$, $h^{\priv}$ is $(\epsilon,\delta)$-differentially
private when
$\sigma^2 \ge \sfrac{64 \Lambda^2 T^2 \log(3T/\delta) \log(2/\delta) }{ n^2
  \epsilon^2 }$. We bound the distance between
$h^{\priv}$ and $h^*$ with high probability in
\Cref{lemma:sensitivity-dp-sgd}.
\begin{lemma}
  \label{lemma:sensitivity-dp-sgd}
  Let $h^{\priv}$ be the vector released by DP-SGD with
  $\sigma^2 = \sfrac{64 \Lambda^2 T^2 \log(3T/\delta) \log(2/\delta) }{ n^2
  \epsilon^2 }$. Assume that the loss function is smooth in its first parameter, and that
  $\sigma_*^2 = \expect_{i\sim[n]} \norm{\nabla \ell(h^*; x_i, y_i)}^2
  \le \sigma^2$. Let $0 < \zeta < 1$, then with probability at least
  $1 - \zeta$,
  \begin{align*}
    \norm{h^{\priv} - h^*}_2^2
    & = \widetilde O\left(  \frac{p \Lambda^2 \log(1/\delta)^2}{\zeta \mu^2 n^2 \epsilon^2} \right)
      \enspace,
  \end{align*}
  where $\widetilde O$ ignores logarithmic terms in $n$ (the number of examples) and $p$ (the number of model parameters).
\end{lemma}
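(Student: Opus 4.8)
The plan is to view DP-SGD as projected stochastic gradient descent on the strongly convex objective $f$ of~\eqref{pb:dp-erm} with an extra zero-mean Gaussian perturbation $\eta^t$ injected into each stochastic gradient, to control $\expect\norm{h^T - h^*}_2^2$ by a standard one-step contraction argument, and only at the very end to pass from this in-expectation guarantee to the high-probability statement via Markov's inequality. This last step is exactly what produces the $\sfrac{1}{\zeta}$ factor (rather than a $\log(\sfrac{1}{\zeta})$ one): if $\expect\norm{h^T - h^*}_2^2 \le M$, then $\prob(\norm{h^T - h^*}_2^2 \ge \sfrac{M}{\zeta}) \le \zeta$, so with probability at least $1-\zeta$ we have $\norm{h^{\priv} - h^*}_2^2 \le \sfrac{M}{\zeta}$. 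The whole task therefore reduces to showing $M = \widetilde O\big(\sfrac{p\Lambda^2 \log(1/\delta)^2}{\mu^2 n^2 \epsilon^2}\big)$.

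First I would establish the one-step recursion. Writing $g^t = \nabla \ell(h^t; x_i, y_i) + \eta^t$, non-expansiveness of the projection $\pi_{\cH}$ (valid since $\cH$ is convex and $h^* \in \cH$) gives $\norm{h^{t+1} - h^*}_2^2 \le \norm{h^t - \gamma g^t - h^*}_2^2$. Expanding and taking the conditional expectation over the sampled index $i \sim \cU([n])$ and the independent noise $\eta^t$, unbiasedness ($\expect_i \nabla \ell(h^t; x_i, y_i) = \nabla f(h^t)$ and $\expect \eta^t = 0$) yields a cross term $-2\gamma \dotprod{\nabla f(h^t)}{h^t - h^*}$, which I lower-bound using $\mu$-strong convexity of $f$ together with the first-order optimality of $h^*$ on $\cH$ to get $\dotprod{\nabla f(h^t)}{h^t - h^*} \ge \mu \norm{h^t - h^*}_2^2$. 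For the second moment of $g^t$, independence of $\eta^t$ splits it as $\expect\norm{g^t}_2^2 = \expect_i \norm{\nabla \ell(h^t; x_i, y_i)}_2^2 + \expect\norm{\eta^t}_2^2$, where the Gaussian term contributes precisely $\expect\norm{\eta^t}_2^2 = p\sigma^2$ (this is where the dimension $p$ enters). Here the smoothness assumption and the hypothesis $\sigma_*^2 \le \sigma^2$ are essential: smoothness lets me write $\expect_i \norm{\nabla \ell(h^t; x_i, y_i)}_2^2 \le 2\beta^2 \norm{h^t - h^*}_2^2 + 2\sigma_*^2$, so the trajectory-dependent part can be folded back into the contraction (for a small enough step size) while the residual $2\sigma_*^2$ is dominated by $p\sigma^2$. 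This gives a recursion of the form
\begin{align*}
    \expect\!\left[\norm{h^{t+1} - h^*}_2^2 \,\middle|\, h^t\right]
    \le \left(1 - 2\gamma\mu + 2\gamma^2\beta^2\right)\norm{h^t - h^*}_2^2 + \gamma^2\left(2\sigma_*^2 + p\sigma^2\right)
    \enspace.
\end{align*}

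Finally, I would unroll this recursion. Taking full expectations and summing the resulting geometric/telescoping series (with the step size tuned so the contraction factor is effectively $1 - \Theta(\gamma\mu)$, and $T$ taken large enough that the transient term $(1-\Theta(\gamma\mu))^T \norm{h^0 - h^*}_2^2$ is dominated by the noise floor) produces a bound of the form $\expect\norm{h^T - h^*}_2^2 = \widetilde O\big(\sfrac{p\sigma^2}{\mu^2}\big)$ up to the iteration-count factors. Substituting the privacy-calibrated noise $\sigma^2 = \sfrac{64 \Lambda^2 T^2 \log(3T/\delta)\log(2/\delta)}{n^2 \epsilon^2}$ and using $\log(3T/\delta)\log(2/\delta) = \widetilde O(\log(1/\delta)^2)$ gives the claimed $\widetilde O\big(\sfrac{p\Lambda^2 \log(1/\delta)^2}{\mu^2 n^2 \epsilon^2}\big)$, with the one remaining $\sfrac{1}{\zeta}$ coming from the Markov step above. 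The main obstacle I anticipate is the bookkeeping that matches the \emph{exact} rate: controlling the gradient second moment \emph{along the trajectory} (this is precisely why plain $\Lambda$-Lipschitzness is not enough, since it would leave a non-vanishing $\Lambda^2/\mu^2$ floor, and why smoothness plus $\sigma_*^2 \le \sigma^2$ are assumed), and choosing the step-size schedule and horizon $T$ so that the accumulated DP noise scales as the target while the $T$-dependence stays logarithmic and is absorbed into $\widetilde O$. The strong-convexity contraction and the Gaussian-variance computation are otherwise routine.
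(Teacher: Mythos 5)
Your proposal is correct and follows essentially the same route as the paper's proof: non-expansiveness of the projection, a one-step contraction under strong convexity, a smoothness-based bound on the stochastic gradient second moment relative to the optimum (where $\sigma_*^2 \le \sigma^2$ is used), unrolling the recursion, substituting the privacy-calibrated noise, and a final Markov step that produces the $\sfrac{1}{\zeta}$ factor. The only minor deviation is that you invoke smoothness in its Lipschitz-gradient form, yielding a $2\beta^2\norm{h^t - h^*}_2^2$ term that forces a step size $\gamma \lesssim \sfrac{\mu}{\beta^2}$, whereas the paper uses the co-coercivity form to absorb the trajectory term into the retained function-gap term $-2\gamma(f(h^t)-f(h^*))$ with the milder condition $\gamma \le \sfrac{1}{2\beta}$; both choices give the same $\widetilde O$ rate since the resulting condition-number factors and the (logarithmic) dependence on $T$ are hidden by the $\widetilde O$ notation.
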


We prove \Cref{lemma:sensitivity-dp-sgd,lemma:sensitivity-output-perturbation} in~\Cref{sec-app:bound-output-pert,app:sec:convergence-dp-sgd}. These lemmas give upper bounds on the distance between the optimal model and the private models learned by output perturbation and DP-SGD. This distance decreases as the number of records $n$ or the privacy budget $\epsilon$ increase. Conversely, it increases with the complexity of the model (\ie with larger dimension $p$), and with the Lipschitz constant of the loss $\Lambda$. %

\begin{remark}
  For clarity of exposition in \Cref{lemma:sensitivity-dp-sgd}, we did not use minimal
  assumptions and used the simplest variant of DP-SGD. Notably, the assumption on $\sigma_*$ can be removed by
  using variance reduction schemes, and tighter bounds on $\sigma$ can
  also be obtained using Rényi Differential Privacy
  \citep{mironov2017Renyi}.
  Similarly, the assumption $\epsilon < 1$ is only used to give simple closed-form bounds. Strong convexity and smoothness assumptions can be relaxed as well.
\end{remark}

\begin{table*}[htb]
    \small
    \centering
    \caption{Upper bound, with 99\% probability, on the difference of fairness between private and
      non-private models for different fairness measures and
      accuracy. Privacy parameters are $\epsilon=1$ and $\delta=1/n^2$
      where $n$ is the number of samples in the training data.}
    \label{table:value-of-upperbound}
    \csvautobooktabularcenter
    [
    separator=semicolon,
    table head= \\ \toprule Dataset & Equality of Opportunity & Equalized Odds & Demographic Parity & Accuracy Parity & Accuracy \\\midrule]
    {rsc/bounds.csv}
\end{table*}

\subsection{Bounding the Fairness of Private Models}
\label{sec:bound-fairn-priv}

We now state our central result
(\Cref{thm:bound-on-fairness-private-models}), where we bound the fairness of $h^{\priv}$ relatively to the one of $h^*$.
\begin{theorem}
  \label{thm:bound-on-fairness-private-models}
  Let $h^*$ be the solution of~\eqref{pb:dp-erm}, and $h^{\text{priv}}$ its private estimate obtained by output perturbation.
  Let $h^{\refer} \in \{h^{\priv}, h^*\}$, and $0 < \zeta < 1$. Then, the difference of fairness of group $k \in [K]$ satisfies, with probability at least $1-\zeta$,
  \begin{align*}
    & \abs{F_k(h^{\priv},D) - F_k(h^*,D)} \\
    & \quad \le
      \frac{\chi_k(h^{\refer}, D) L\Lambda\sqrt{32p\log(1.25/\delta)\log(2/\zeta)}}{\mu n \epsilon}
      \enspace.
  \end{align*}
  \vbox{
  Similarly, if $h^{\priv}$ is estimated through DP-SGD, we
  have that, with probability at least $1 - \zeta$,
  \begin{align*}
    & \abs{F_k(h^{\priv},D) - F_k(h^*,D)} \\
    & \quad \le
      \widetilde O \left(\frac{\chi_k(h^{\refer}, D) L \Lambda \sqrt{p\log(1/\delta)}}{\sqrt{\zeta} \mu n \epsilon} \right)
      \enspace,
  \end{align*}}
  where $\widetilde O$ ignores logarithmic terms in $n$ (the number of examples) and $p$ (the number of model parameters).
\end{theorem}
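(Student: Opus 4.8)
The plan is to combine the pointwise Lipschitzness of fairness established in \Cref{thm:bound-on-diff-fairness} with the high-probability bounds on the parameter distance $\norm{h^{\priv} - h^*}_\cH$ provided by \Cref{lemma:sensitivity-output-perturbation} (output perturbation) and \Cref{lemma:sensitivity-dp-sgd} (DP-SGD). The underlying idea is that, since fairness is Lipschitz in the model, it suffices to control how far the private model strays from the non-private optimum; the two ingredients are essentially independent and only need to be chained together.

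First I would instantiate \Cref{thm:bound-on-diff-fairness} with the two models at hand. Taking $h = h^{\priv}$ and $h' = h^*$ gives
\begin{align*}
    \abs{F_k(h^{\priv},D) - F_k(h^*,D)} \le \chi_k(h^{\priv}, D)\norm{h^{\priv} - h^*}_\cH \enspace,
\end{align*}
while swapping the roles of the two arguments (the left-hand side being symmetric) yields the same bound with $\chi_k(h^*, D)$ in place of $\chi_k(h^{\priv}, D)$. Hence both inequalities hold simultaneously, which justifies writing $\chi_k(h^{\refer}, D)$ for an arbitrary choice $h^{\refer} \in \{h^{\priv}, h^*\}$; this flexibility is what later allows the bound to be evaluated even when $h^*$ is unknown, since one retains access to the private model.

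Next I would substitute the distance bound. For output perturbation, \Cref{lemma:sensitivity-output-perturbation} states that, with probability at least $1-\zeta$,
\begin{align*}
    \norm{h^{\priv} - h^*}_\cH \le \frac{\Lambda\sqrt{32p\log(1.25/\delta)\log(2/\zeta)}}{\mu n \epsilon} \enspace,
\end{align*}
using that $\norm{\cdot}_\cH$ is Euclidean and taking the square root of the stated bound on $\norm{h^{\priv} - h^*}_\cH^2$. Plugging this into the Lipschitz inequality on the same event of probability at least $1-\zeta$ gives the first claim. The DP-SGD case is entirely analogous: I would instead invoke \Cref{lemma:sensitivity-dp-sgd}, whose $\widetilde O$ bound on $\norm{h^{\priv} - h^*}_\cH^2$ yields, after taking the square root, a distance of order $\widetilde O\big(\sqrt{p}\,\Lambda \log(1/\delta)/(\sqrt{\zeta}\,\mu n \epsilon)\big)$, and multiply by $\chi_k(h^{\refer}, D)$.

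This argument is a direct composition, so there is no serious technical obstacle. The only point requiring care is the probabilistic bookkeeping: the fairness bound must be asserted on exactly the event where the distance bound of the corresponding sensitivity lemma holds, so that the confidence level $1-\zeta$ transfers verbatim without any union bound. The symmetry argument selecting the reference model, although elementary, is the conceptually important step, as it is precisely what underlies the later claim that the guarantee remains computable when $h^*$ is not available.
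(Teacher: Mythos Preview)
Your proposal is correct and follows exactly the paper's approach: the paper's own proof simply says to control $\norm{h^{\priv}-h^*}$ via \Cref{lemma:sensitivity-output-perturbation} or \Cref{lemma:sensitivity-dp-sgd} and plug the result into \Cref{thm:bound-on-diff-fairness}. Your write-up makes explicit the symmetry argument justifying the choice of $h^{\refer}$ and the probabilistic bookkeeping, both of which the paper leaves implicit.
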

\begin{proof}
By \Cref{lemma:sensitivity-output-perturbation} or \Cref{lemma:sensitivity-dp-sgd}, we control the distance $\norm{h^{\text{priv}} - h^*}$. Plugging this bound in
\Cref{thm:bound-on-diff-fairness} gives the result.
\end{proof}
This result shows that, when learning a private model, \emph{the unfairness due to
privacy vanishes at a $\widetilde{O}(\sfrac{\sqrt{p}}{n})$ rate}. To the best of our knowledge, our result is the first to quantify this rate. Importantly, \emph{it highlights the role of the confidence margin} of the classifier on the disparate impact of differential privacy. This is in line with previous empirical and theoretical work that identified the groupwise distances to the decision boundary as an important factor \citep{tran2021differentially,survey_privacy_fairness}. However, our bounds are the first to quantify this impact through a classic notion of confidence margin studied in learning theory \citep{cortes2013multi}.

Our result may be interpreted and used in various ways. A first example is the case where the private model is known but its optimal non-private counterpart is not. There, our result guarantees that, given enough examples, the fairness level of the private model is close to the one of the optimal non-private model. This allows the practitioner to give guarantees on the model, that the end user can trust. A second example is the case where the true model $h^*$ is owned by someone who cannot share it, due to privacy concerns. Imagine that the model needs to be audited for fairness. Then, the model owner can compute a private estimate of their model, and send it to the (honest but curious) auditing company. The bound allows to obtain fairness bounds for the true model from the inspection of the private one, and thus acts as a certificate of correctness of the audit done on the private version of the model.

\begin{remark}
\label{rmq:fairness-and-privacy-algorithm}
    The fairness guarantee for the private model given by \Cref{thm:bound-on-fairness-private-models} is relative to the fairness of the optimal model $h^*$, which may itself be quite unfair. A standard approach to promote fair models is to use convex relaxations of fairness as regularizers to the ERM problem \citep{bechavod2017penalizing,huang2019stable,lohaus2020too}. Interestingly, to be able to use output perturbation, we only require the objective function of~\eqref{pb:dp-erm} to be strongly convex and Lipschitz over $h \in \cH$, which is the case for these relaxations when they are combined with a squared $\ell_2$-norm. For binary classification with two sensitive groups, \citet{lohaus2020too} proved that, with a proper choice of regularization parameters, this approach can yield a fair $h^*$ (see their Theorem 1 for more details). Combined with our results, this paves the way for the design of algorithms that learn provably private and fair classifiers.
    However, several crucial challenges remain to make this approach work in practice, such as (i) finding the appropriate regularization parameters privately, and (ii) providing guarantees on the resulting classifiers' accuracy. We leave this for future work.
\end{remark}

\begin{figure*}[h]
  \centering
  \begin{subfigure}{\textwidth}
    \centering
    \includegraphics[width=0.9\linewidth]{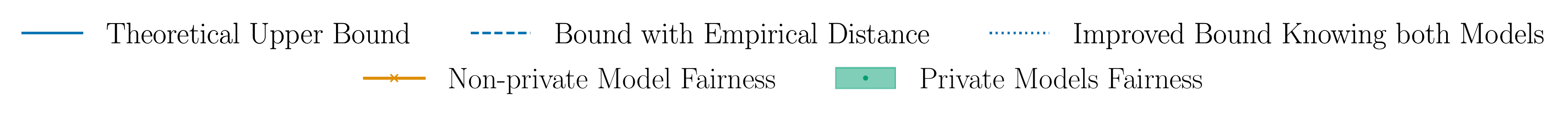}
    \vspace{-0.5em}
  \end{subfigure}
  \begin{subfigure}{0.24\textwidth}
    \includegraphics[width=\linewidth]{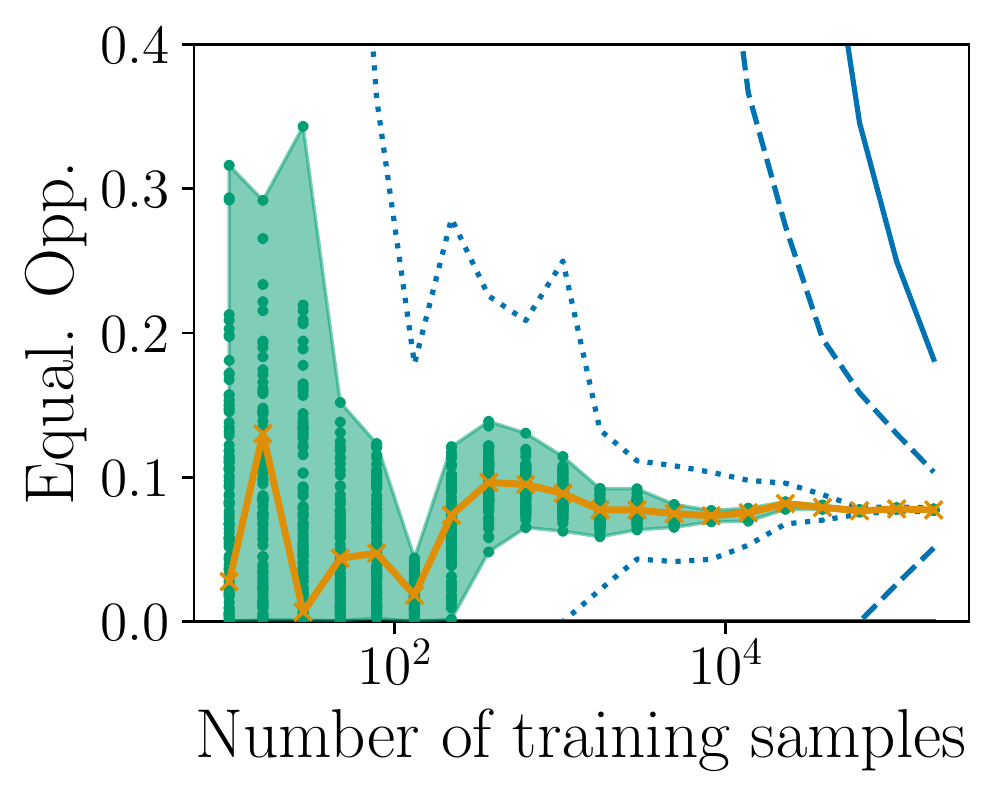}
    \caption{Equal. Opp. (\celebA)}
    \label{fig:fairness-fct-n-f1}
  \end{subfigure}
  \hfill
  \begin{subfigure}{0.24\textwidth}
    \includegraphics[width=\linewidth]{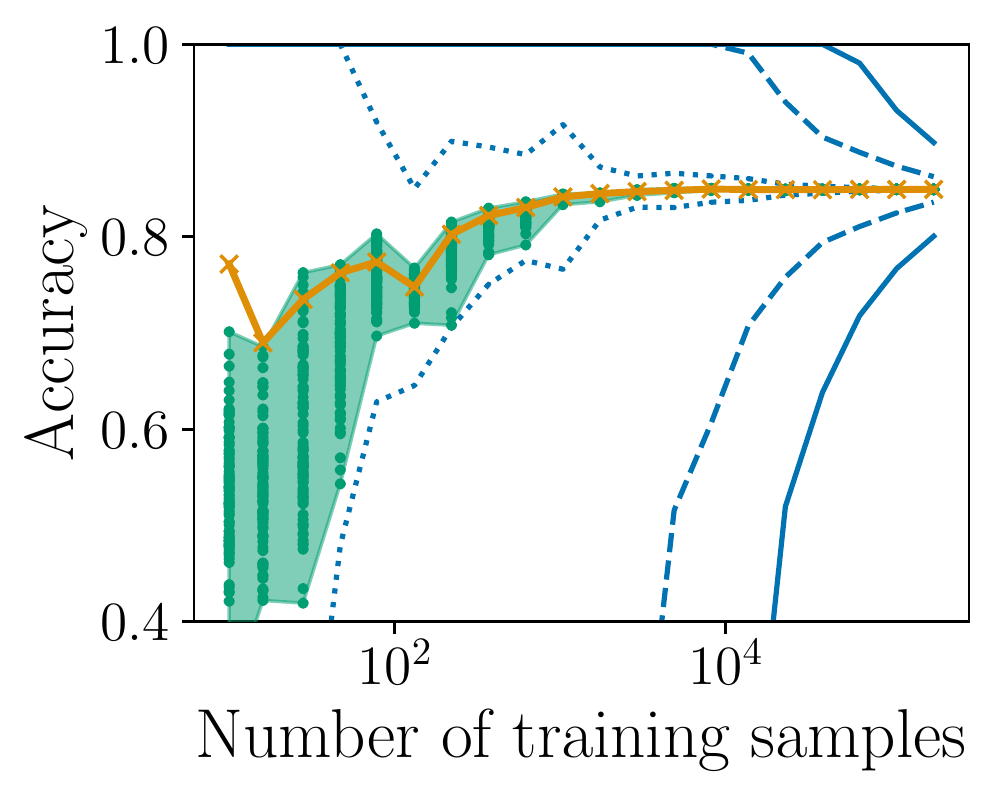}
    \caption{Accuracy (\celebA)}
    \label{fig:fairness-fct-n-a1}
  \end{subfigure}
  \hfill
  \begin{subfigure}{0.24\textwidth}
    \includegraphics[width=\linewidth]{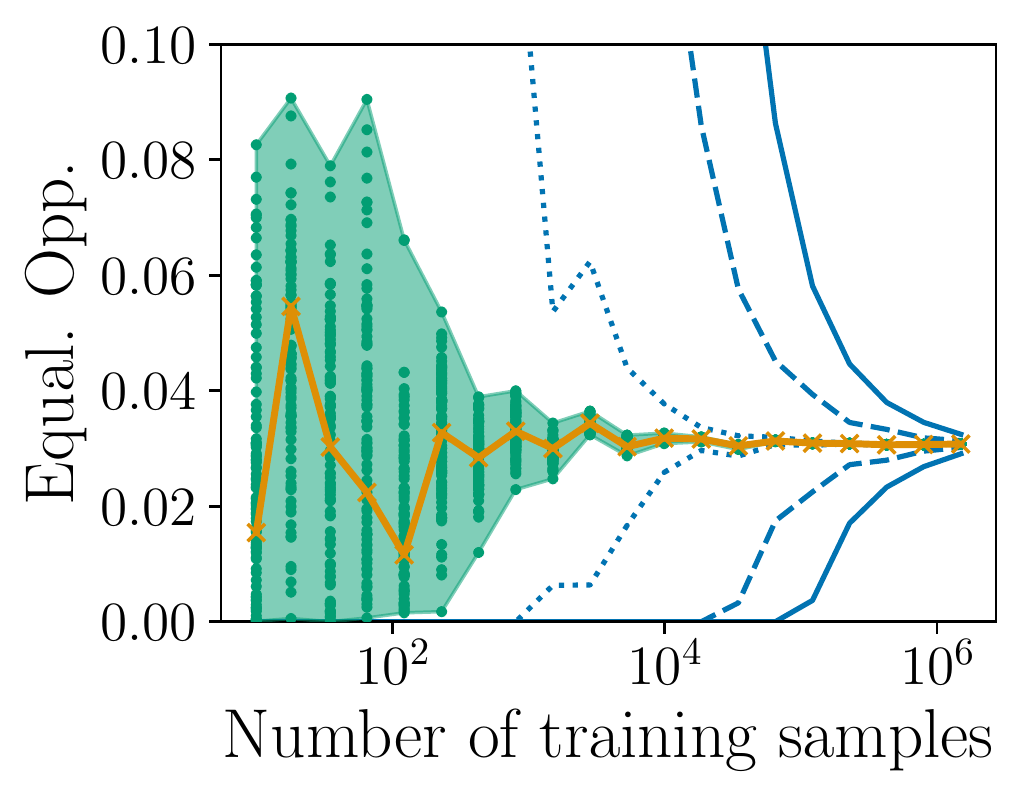}
    \caption{Equal. Opp. (\folktables)}
    \label{fig:fairness-fct-n-f2}
  \end{subfigure}
  \hfill
  \begin{subfigure}{0.24\textwidth}
    \includegraphics[width=\linewidth]{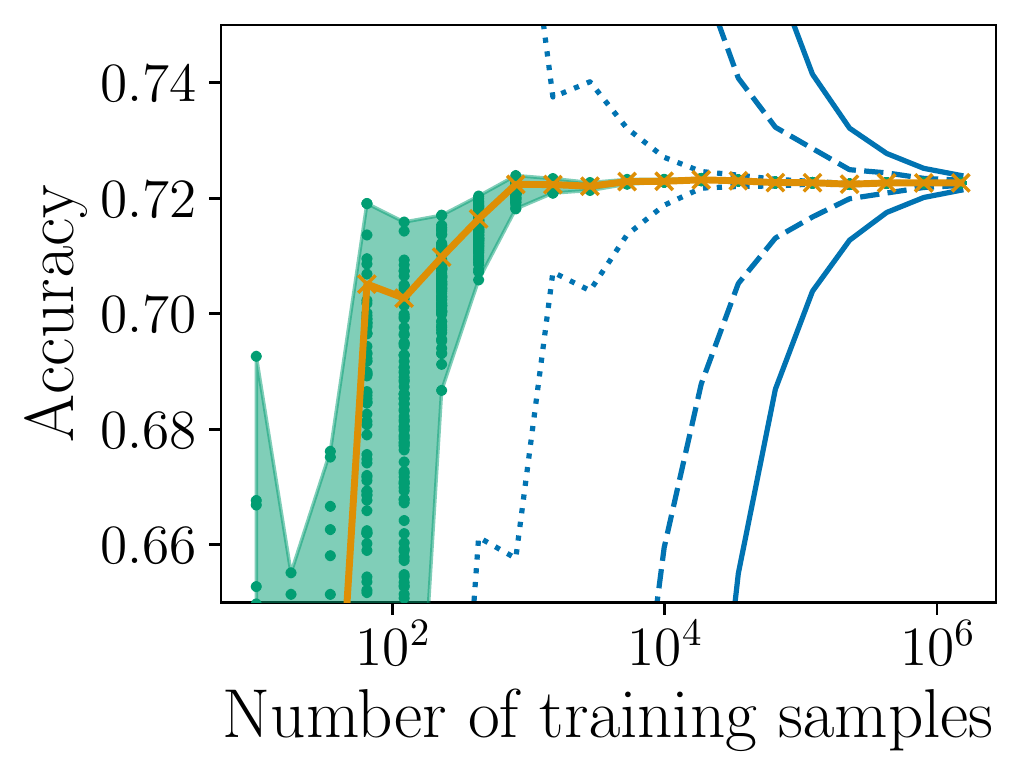}
    \caption{Accuracy (\folktables)}
    \label{fig:fairness-fct-n-a2}
  \end{subfigure}

  \centering
  \begin{subfigure}{0.24\textwidth}
    \includegraphics[width=\linewidth]{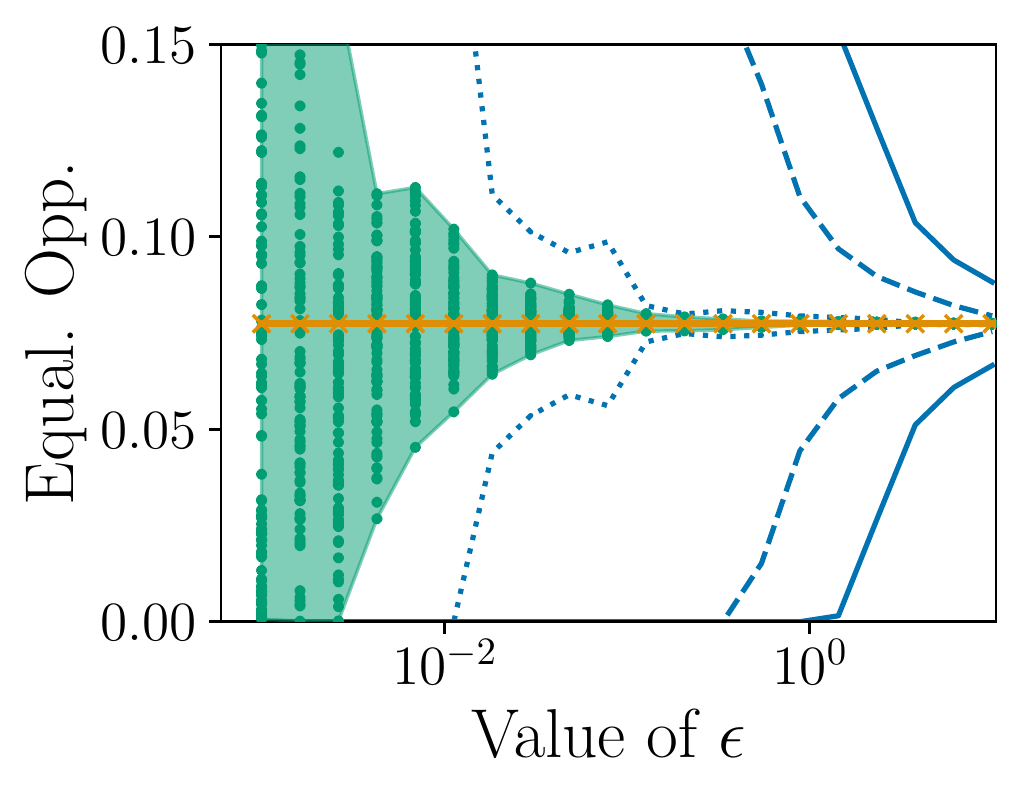}
    \caption{Equal. Opp. (\celebA)}
    \label{fig:fairness-fct-epsilon-f1}
  \end{subfigure}
  \hfill
  \begin{subfigure}{0.24\textwidth}
    \includegraphics[width=\linewidth]{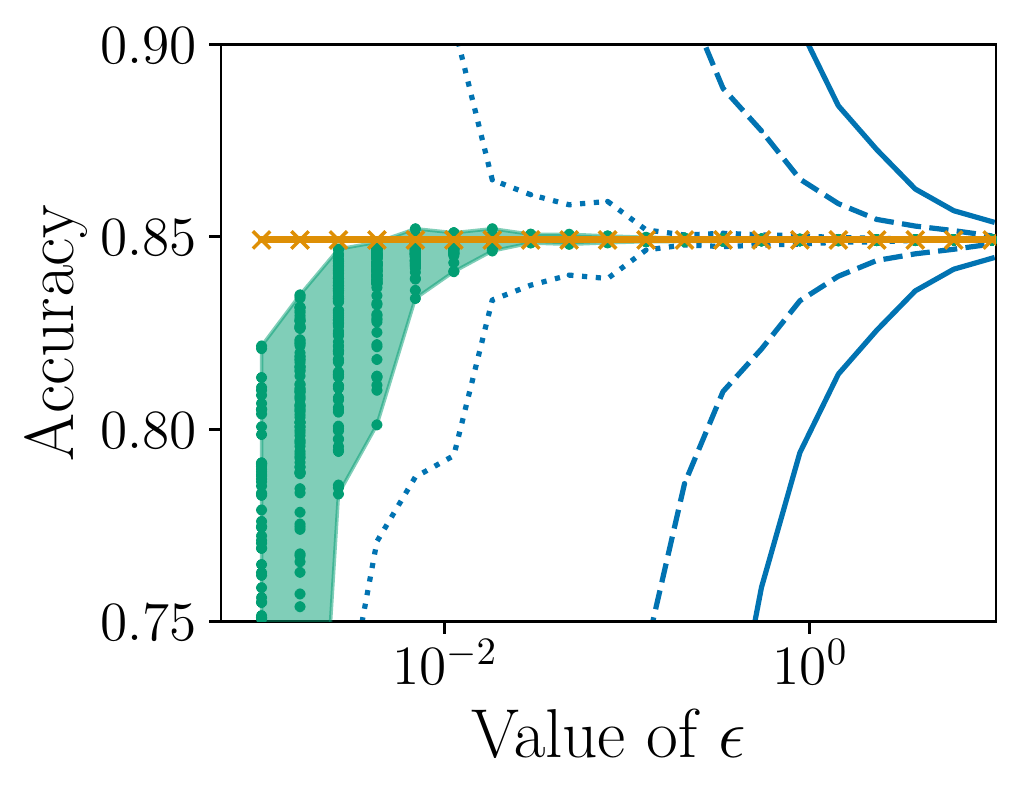}
    \caption{Accuracy (\celebA)}
    \label{fig:fairness-fct-epsilon-a1}
  \end{subfigure}
  \hfill
  \begin{subfigure}{0.24\textwidth}
    \includegraphics[width=\linewidth]{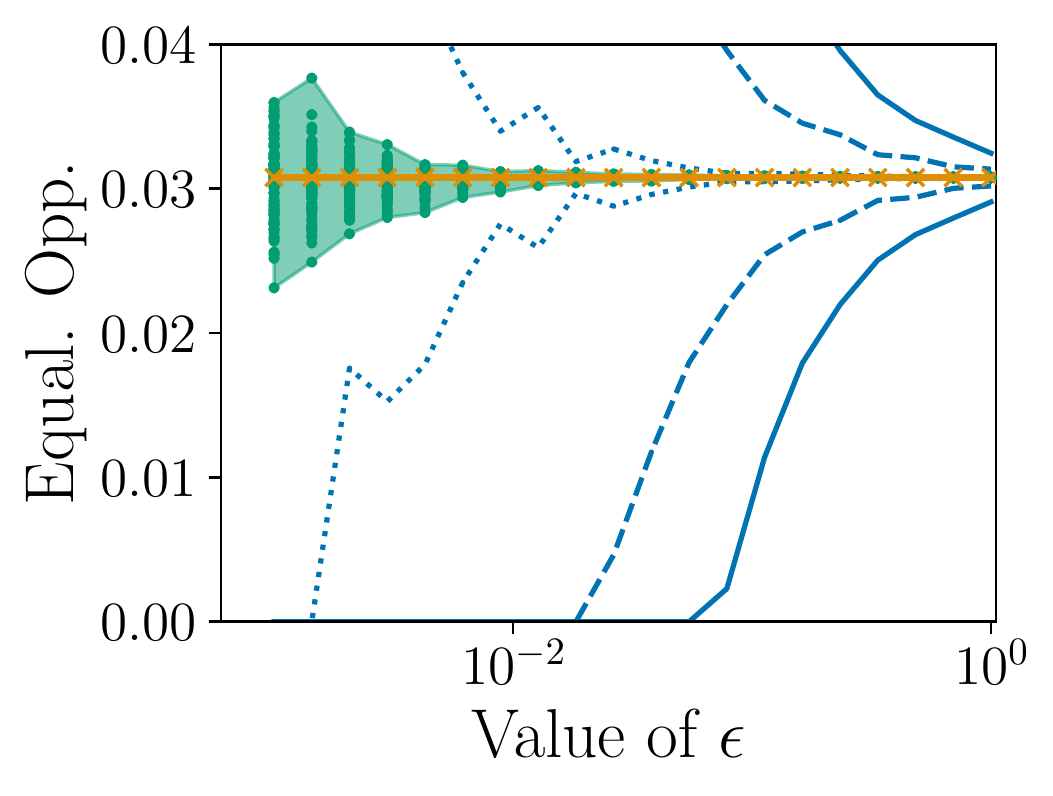}
    \caption{Equal. Opp. (\folktables)}
    \label{fig:fairness-fct-epsilon-f2}
  \end{subfigure}
  \hfill
  \begin{subfigure}{0.24\textwidth}
    \includegraphics[width=\linewidth]{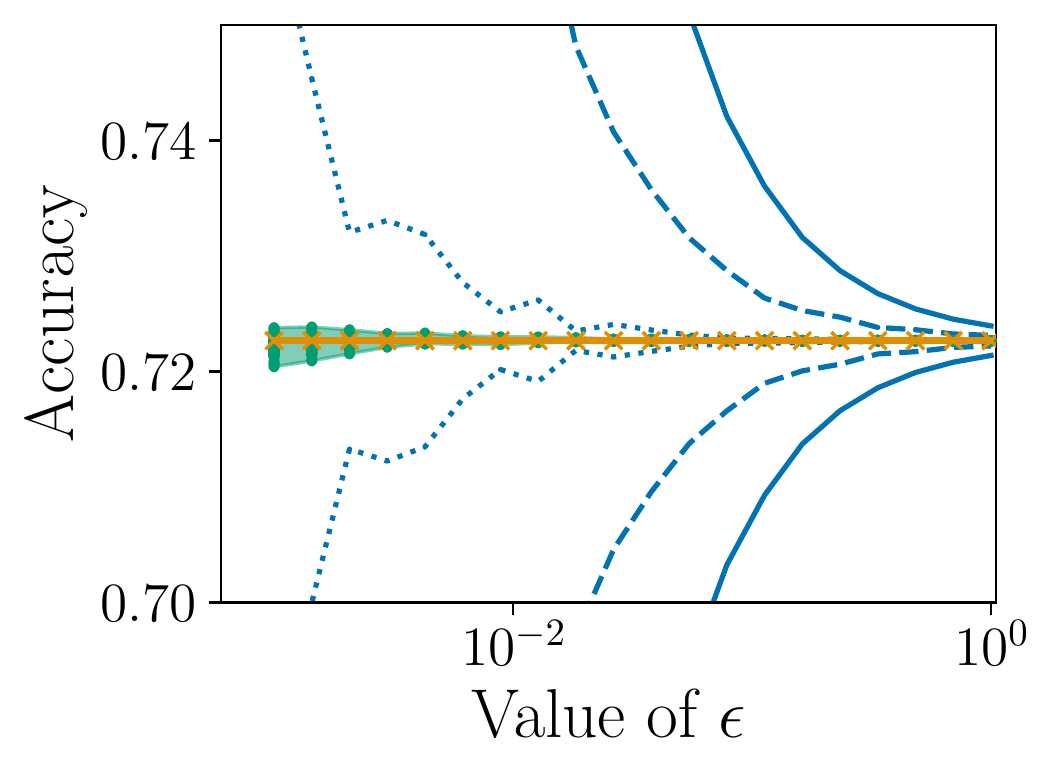}
    \caption{Accuracy (\folktables)}
    \label{fig:fairness-fct-epsilon-a2}
  \end{subfigure}
  \caption{Equality of opportunity (Equal. Opp.) and Accuracy levels for optimal non-private model
    and random private ones as a function of the number of training records $n$ (first line, with $\epsilon=1$ and $\delta=1/n^2$) and of the privacy budget
    $\epsilon$ (second line, using all available training records). For each value of $n$ and $\epsilon$, we sample $100$ private
    models and take their minimum and maximum fairness/accuracy values
    to mark the area of attainable values. The solid blue line gives the theoretical guarantees from \Cref{thm:bound-on-fairness-private-models}, while the dashed and dotted line give finer bounds when more information is available (see \Cref{sec:tightness-bound} for details).}
  \label{fig:fairness-fct-n-epsilon}
\end{figure*}

\section{Numerical Experiments}
\label{sec:experiments}

In this section, we numerically illustrate the upper bounds from
\Cref{sec:bound-fairn-priv}. We use the \celebA \citep{liu2015faceattributes} and \folktables \citep{ding2021retiring}
datasets, which respectively contain $202,599$ and $1,664,500$ samples,
with $39$ and $10$ features (including one sensitive attribute, sex, that is not not
used for prediction), and binary labels. For each dataset, we use
$90\%$ of the records for training, and the remaining $10\%$ for
empirical evaluation of the bounds. We train $\ell_2$-regularized
logistic regression models, ensuring that the underlying optimization
problem is $1$-strongly-convex. This allows learning private models by
output perturbation, for which the bound from
\Cref{thm:bound-on-fairness-private-models} holds. \footnote{The code is available at \url{https://github.com/pmangold/fairness-privacy}.}

In \Cref{sec:value-upper-bounds}, we show that we obtain non-trivial guarantees on the private model's fairness and accuracy. Then, we study the influence of the number of
training samples and of the privacy budget $\epsilon$ in \Cref{sec:influence-n-epsilon}, and discuss the tightness of our result in \Cref{sec:tightness-bound}.

\subsection{Value of the Upper Bounds}
\label{sec:value-upper-bounds}

In \Cref{table:value-of-upperbound}, we compute the value of
\Cref{thm:bound-on-fairness-private-models}'s bounds. We learn a non-private $\ell_2$-regularized logistic regression model, and use it to compute the bounds (averaged over the two groups) for multiple fairness and accuracy measures on two datasets.
In all cases, our results give non-trivial guarantees on the difference of fairness: it is bounded by at most $0.105$ for \celebA and $0.0026$ for \folktables. This means that any $(1,1/n^2)$-DP model learned by output perturbation will, with high probability, achieve a fairness level within this margin of that of the non-private model.

\subsection{Influence of the Number of Training Samples and Privacy Budget}
\label{sec:influence-n-epsilon}

We now verify numerically the rate at which fairness and accuracy levels decrease when increasing the number of training records or privacy budget. In \Cref{fig:fairness-fct-n-epsilon}, we plot the optimal model's equality of opportunity and accuracy, as a function of (i) in the first line, the number of samples $n$ used for training, or (ii) in the second line, the privacy budget $\epsilon$ (see \Cref{app:sec:additional-experiments} for results with other fairness measures). For each value of $n$ and $\epsilon$, we plot \Cref{thm:bound-on-fairness-private-models}'s theoretical guarantees (solid blue line). With $\epsilon=1$, our bounds give meaningful guarantees for $n \ge 100,000$ records on both \celebA and \folktables datasets (\Cref{fig:fairness-fct-n-f1,fig:fairness-fct-n-a1,fig:fairness-fct-n-f2,fig:fairness-fct-n-a2}).
When using all records,
we obtain meaningful bounds for
$\epsilon \ge 1$ for \celebA and $\epsilon \ge 0.1$ for \folktables
(\Cref{fig:fairness-fct-epsilon-f1,fig:fairness-fct-epsilon-a1,fig:fairness-fct-epsilon-f2,fig:fairness-fct-epsilon-a2}). Additionally, note that we obtain \emph{both upper and lower bounds} on fairness and accuracy, confirming remarks from \Cref{sec:bound-fairness}.

We also report the fairness and accuracy levels of $100$ private models computed by output perturbation (in green in \Cref{fig:fairness-fct-n-epsilon}). As predicted by our theory, their fairness and accuracy converges towards the ones of their non-private counterparts as $n$ and $\epsilon$ increase. Interestingly, our bounds seem to follow the same tendency as what we observe empirically (albeit with a larger multiplicative constant), suggesting that they capture the correct dependence in $n$ and $\epsilon$. We further discuss the tightness of our results in next section.

\subsection{Tightness of the Bound}
\label{sec:tightness-bound}

We now argue that the two major factors of looseness in our results are (i) the upper bound on $\norm{h^\priv-h^*}$ and (ii) the looseness of \Cref{assum:lipschitz-constant-fct-val}. While these cannot be improved in general, specific knowledge of $h^\priv$ and $h^*$ (that is typically not available due to privacy) can lead to tighter bounds. First, when the distance $\norm{h^\priv-h^*}$ is known, we can use its actual value rather than the upper bounds of \Cref{sec:two-priv-mech} (see dashed blue line in \Cref{fig:fairness-fct-n-epsilon}). Second, when both $h^\priv$ and $h^*$ are known, \Cref{assum:lipschitz-constant-fct-val} can be substantially refined (see details in \Cref{app:sec:refined-bounds}). We evaluate this bound for the private model that is the farthest away from the non-private one (see dotted blue line in \Cref{fig:fairness-fct-n-epsilon}). The resulting bound appears to be tight up to a small multiplicative constant. These two observations suggest that our bounds cannot be significantly tightened, unless one can obtain such knowledge through either private computation or additional assumptions on the data.

\section{Conclusion}
\label{sec:conclusion}

In this work, we proved that the fairness (and accuracy) costs induced by privacy in differentially private classification vanishes at a $\widetilde O(\sfrac{\sqrt{p}}{n})$ rate, where $n$ is the number of training records, and $p$ the number of parameters. This rate follows from a general statement on the regularity of a family of group fairness measures, that we prove to be pointwise Lipschitz with respect to the model. The pointwise Lipschitz constant explicitly depends on the confidence margin of the model, and may be different for each sensitive group. We also show it can be computed from a finite data sample. Importantly, our bounds do not require the knowledge of the optimal (non-private) model: they can thus be used in practical privacy-preserving scenarios. We numerically evaluate our bounds on real datasets, and highlight practical settings where non-trivial guarantees can be obtained.

While we illustrated our results for output perturbation and DP-SGD on strongly-convex problems, we stress that they are more general. Indeed, our \Cref{thm:bound-on-diff-fairness} holds for any pair of models. Consequently, it would apply to DP-SGD on non-convex problems, provided that we have a bound on the distance between the models obtained with and without privacy. Deriving a tight bound on this distance is a challenging problem, that constitutes an interesting future direction.

Finally, we stress that our results do not provide fairness guarantees \textit{per se}, but only bound the difference of fairness between models.
It is nonetheless a first step towards a more complete understanding of the interplay between privacy, fairness, and accuracy. We believe that our results can guide the design of fairer privacy-preserving machine learning algorithms. A first promising direction in this regard is to combine our bounds with fairness-promoting convex regularizers, as discussed in Remark~\ref{rmq:fairness-and-privacy-algorithm}. Another direction is the design of methods to privately learn models with large-margin guarantees, as recently considered by \citet{bassily2022differentially}. Our results, which explicitly depend on the confidence margin of the model, suggest that better fairness guarantees could be obtained for these methods.

\section*{Acknowledgements}

This work was supported by the Inria Exploratory Action FLAMED, by the Région Hauts de France (Projet STaRS Equité en apprentissage décentralisé respectueux de la vie privée), and by the French National Research
Agency (ANR) through the grants ANR-20-CE23-0015 (Project PRIDE), and ANR 22-PECY-0002 IPOP (Interdisciplinary Project on Privacy) project of the Cybersecurity PEPR.

\bibliography{references}

\begin{thebibliography}{48}
\providecommand{\natexlab}[1]{#1}
\providecommand{\url}[1]{\texttt{#1}}
\expandafter\ifx\csname urlstyle\endcsname\relax
  \providecommand{\doi}[1]{doi: #1}\else
  \providecommand{\doi}{doi: \begingroup \urlstyle{rm}\Url}\fi

\bibitem[Abadi et~al.(2016)Abadi, Chu, Goodfellow, McMahan, Mironov, Talwar,
  and Zhang]{abadi2016Deep}
Abadi, M., Chu, A., Goodfellow, I., McMahan, H.~B., Mironov, I., Talwar, K.,
  and Zhang, L.
\newblock Deep {Learning} with {Differential} {Privacy}.
\newblock In \emph{Proceedings of the 2016 {ACM} {SIGSAC} {Conference} on
  {Computer} and {Communications} {Security}}, {CCS} '16, pp.\  308--318, New
  York, NY, USA, October 2016. Association for Computing Machinery.
\newblock 1130 citations (Crossref) [2022-08-19].

\bibitem[Agarwal(2020)]{agarwaltrade}
Agarwal, S.
\newblock Trade-offs between fairness and privacy in machine learning.
\newblock 2020.

\bibitem[Bagdasaryan et~al.(2019)Bagdasaryan, Poursaeed, and
  Shmatikov]{bagdasaryan2019differential}
Bagdasaryan, E., Poursaeed, O., and Shmatikov, V.
\newblock Differential privacy has disparate impact on model accuracy.
\newblock In \emph{Advances in Neural Information Processing Systems 32, 2019,
  Vancouver, BC, Canada}, pp.\  15453--15462, 2019.

\bibitem[Bassily et~al.(2014)Bassily, Smith, and Thakurta]{bassily2014Private}
Bassily, R., Smith, A., and Thakurta, A.
\newblock Private {Empirical} {Risk} {Minimization}: {Efficient} {Algorithms}
  and {Tight} {Error} {Bounds}.
\newblock In \emph{2014 {IEEE} 55th {Annual} {Symposium} on {Foundations} of
  {Computer} {Science}}, pp.\  464--473, Philadelphia, PA, USA, October 2014.
  IEEE.

\bibitem[Bassily et~al.(2022)Bassily, Mohri, and
  Suresh]{bassily2022differentially}
Bassily, R., Mohri, M., and Suresh, A.~T.
\newblock Differentially private learning with margin guarantees.
\newblock \emph{arXiv preprint arXiv:2204.10376}, 2022.

\bibitem[Bechavod \& Ligett(2017)Bechavod and Ligett]{bechavod2017penalizing}
Bechavod, Y. and Ligett, K.
\newblock Penalizing unfairness in binary classification.
\newblock \emph{arXiv preprint arXiv:1707.00044}, 2017.

\bibitem[Beimel et~al.(2014)Beimel, Brenner, Kasiviswanathan, and
  Nissim]{beimel2014Bounds}
Beimel, A., Brenner, H., Kasiviswanathan, S.~P., and Nissim, K.
\newblock Bounds on the sample complexity for private learning and private data
  release.
\newblock \emph{Machine Learning}, pp.\  401--437, March 2014.

\bibitem[Calders et~al.(2009)Calders, Kamiran, and
  Pechenizkiy]{calders2009building}
Calders, T., Kamiran, F., and Pechenizkiy, M.
\newblock Building classifiers with independency constraints.
\newblock In \emph{2009 IEEE International Conference on Data Mining
  Workshops}, pp.\  13--18. IEEE, 2009.

\bibitem[Caton \& Haas(2020)Caton and Haas]{caton2020fairness}
Caton, S. and Haas, C.
\newblock Fairness in machine learning: A survey.
\newblock \emph{arXiv preprint arXiv:2010.04053}, 2020.

\bibitem[Chang \& Shokri(2020)Chang and Shokri]{chang2020privacy}
Chang, H. and Shokri, R.
\newblock On the privacy risks of algorithmic fairness.
\newblock \emph{arXiv preprint arXiv:2011.03731}, 2020.

\bibitem[Chaudhuri et~al.(2011)Chaudhuri, Monteleoni, and
  Sarwate]{chaudhuri2011Differentially}
Chaudhuri, K., Monteleoni, C., and Sarwate, A.~D.
\newblock Differentially {Private} {Empirical} {Risk} {Minimization}.
\newblock \emph{Journal of Machine Learning Research}, 12\penalty0
  (29):\penalty0 1069--1109, 2011.
\newblock ISSN 1533-7928.

\bibitem[Cortes et~al.(2013)Cortes, Mohri, and Rostamizadeh]{cortes2013multi}
Cortes, C., Mohri, M., and Rostamizadeh, A.
\newblock Multi-class classification with maximum margin multiple kernel.
\newblock In \emph{International Conference on Machine Learning}, pp.\  46--54.
  PMLR, 2013.

\bibitem[Cummings et~al.(2019)Cummings, Gupta, Kimpara, and
  Morgenstern]{cummings2019compatibility}
Cummings, R., Gupta, V., Kimpara, D., and Morgenstern, J.
\newblock On the compatibility of privacy and fairness.
\newblock In \emph{Adjunct Publication of the 27th Conference on User Modeling,
  Adaptation and Personalization}, pp.\  309--315, 2019.

\bibitem[Ding et~al.(2021)Ding, Hardt, Miller, and Schmidt]{ding2021retiring}
Ding, F., Hardt, M., Miller, J., and Schmidt, L.
\newblock Retiring adult: New datasets for fair machine learning.
\newblock \emph{Advances in Neural Information Processing Systems}, 34, 2021.

\bibitem[Dwork(2006)]{dwork2006Differential}
Dwork, C.
\newblock Differential privacy.
\newblock In \emph{Encyclopedia of Cryptography and Security}, 2006.

\bibitem[Dwork et~al.(2012)Dwork, Hardt, Pitassi, Reingold, and
  Zemel]{dwork2012fairness}
Dwork, C., Hardt, M., Pitassi, T., Reingold, O., and Zemel, R.
\newblock Fairness through awareness.
\newblock In \emph{Proceedings of the 3rd innovations in theoretical computer
  science conference}, pp.\  214--226, 2012.

\bibitem[Dwork et~al.(2014)Dwork, Roth, et~al.]{dwork2014algorithmic}
Dwork, C., Roth, A., et~al.
\newblock The algorithmic foundations of differential privacy.
\newblock \emph{Foundations and Trends in Theoretical Computer Science},
  9\penalty0 (3-4):\penalty0 211--407, 2014.

\bibitem[Farrand et~al.(2020)Farrand, Mireshghallah, Singh, and
  Trask]{no-private-no-fair}
Farrand, T., Mireshghallah, F., Singh, S., and Trask, A.
\newblock Neither private nor fair: Impact of data imbalance on utility and
  fairness in differential privacy.
\newblock \emph{arXiv preprint arXiv:2009.06389}, 2020.

\bibitem[Fioretto et~al.(2022)Fioretto, Tran, Hentenryck, and
  Zhu]{survey_privacy_fairness}
Fioretto, F., Tran, C., Hentenryck, P.~V., and Zhu, K.
\newblock Differential privacy and fairness in decisions and learning tasks: A
  survey.
\newblock In \emph{International Joint Conference on Artificial Intelligence
  (IJCAI)}, pp.\  5470--5477, 2022.

\bibitem[Ganev et~al.(2022)Ganev, Oprisanu, and Cristofaro]{robin_hood}
Ganev, G., Oprisanu, B., and Cristofaro, E.~D.
\newblock Robin hood and matthew effects: Differential privacy has disparate
  impact on synthetic data.
\newblock In \emph{ICML}, 2022.

\bibitem[Hardt et~al.(2016)Hardt, Price, and Srebro]{hardt2016equality}
Hardt, M., Price, E., and Srebro, N.
\newblock Equality of opportunity in supervised learning.
\newblock \emph{Advances in neural information processing systems},
  29:\penalty0 3315--3323, 2016.

\bibitem[Hastie et~al.(2009)Hastie, Tibshirani, and
  Friedman]{hastie2009Elements}
Hastie, T., Tibshirani, R., and Friedman, J.
\newblock \emph{The {Elements} of {Statistical} {Learning}}.
\newblock Springer {Series} in {Statistics}. Springer, New York, NY, 2009.
\newblock ISBN 978-0-387-84857-0 978-0-387-84858-7.
\newblock \doi{10.1007/978-0-387-84858-7}.
\newblock URL \url{http://link.springer.com/10.1007/978-0-387-84858-7}.

\bibitem[Huang \& Vishnoi(2019)Huang and Vishnoi]{huang2019stable}
Huang, L. and Vishnoi, N.
\newblock Stable and fair classification.
\newblock In \emph{International Conference on Machine Learning}, pp.\
  2879--2890. PMLR, 2019.

\bibitem[Jagielski et~al.(2019)Jagielski, Kearns, Mao, Oprea, Roth,
  Sharifi-Malvajerdi, and Ullman]{jagielski2019differentially}
Jagielski, M., Kearns, M., Mao, J., Oprea, A., Roth, A., Sharifi-Malvajerdi,
  S., and Ullman, J.
\newblock Differentially private fair learning.
\newblock In \emph{International Conference on Machine Learning}, pp.\
  3000--3008. PMLR, 2019.

\bibitem[Kasiviswanathan et~al.(2011)Kasiviswanathan, Lee, Nissim,
  Raskhodnikova, and Smith]{kasiviswanathan2011What}
Kasiviswanathan, S.~P., Lee, H.~K., Nissim, K., Raskhodnikova, S., and Smith,
  A.
\newblock What {Can} {We} {Learn} {Privately}?
\newblock \emph{SIAM Journal on Computing}, pp.\  793--826, January 2011.

\bibitem[Kiefer(1953)]{kiefer1953sequential}
Kiefer, J.
\newblock Sequential minimax search for a maximum.
\newblock \emph{Proceedings of the American mathematical society}, 4\penalty0
  (3):\penalty0 502--506, 1953.

\bibitem[Kilbertus et~al.(2018)Kilbertus, Gasc{\'o}n, Kusner, Veale, Gummadi,
  and Weller]{kilbertus2018blind}
Kilbertus, N., Gasc{\'o}n, A., Kusner, M., Veale, M., Gummadi, K., and Weller,
  A.
\newblock Blind justice: Fairness with encrypted sensitive attributes.
\newblock In \emph{International Conference on Machine Learning}, pp.\
  2630--2639. PMLR, 2018.

\bibitem[Kusner et~al.(2017)Kusner, Loftus, Russell, and
  Silva]{kusner2017counterfactual}
Kusner, M.~J., Loftus, J., Russell, C., and Silva, R.
\newblock Counterfactual fairness.
\newblock \emph{Advances in neural information processing systems}, 30, 2017.

\bibitem[Liu et~al.(2021)Liu, Ding, Shaham, Rahayu, Farokhi, and
  Lin]{liu2021machine}
Liu, B., Ding, M., Shaham, S., Rahayu, W., Farokhi, F., and Lin, Z.
\newblock When machine learning meets privacy: A survey and outlook.
\newblock \emph{ACM Computing Surveys (CSUR)}, 54\penalty0 (2):\penalty0 1--36,
  2021.

\bibitem[Liu et~al.(2015)Liu, Luo, Wang, and Tang]{liu2015faceattributes}
Liu, Z., Luo, P., Wang, X., and Tang, X.
\newblock Deep learning face attributes in the wild.
\newblock In \emph{Proceedings of International Conference on Computer Vision
  (ICCV)}, December 2015.

\bibitem[Lohaus et~al.(2020)Lohaus, Perrot, and Von~Luxburg]{lohaus2020too}
Lohaus, M., Perrot, M., and Von~Luxburg, U.
\newblock Too relaxed to be fair.
\newblock In \emph{International Conference on Machine Learning}, pp.\
  6360--6369. PMLR, 2020.

\bibitem[Lowy \& Razaviyayn(2021)Lowy and Razaviyayn]{lowy2021output}
Lowy, A. and Razaviyayn, M.
\newblock Output perturbation for differentially private convex optimization
  with improved population loss bounds, runtimes and applications to private
  adversarial training.
\newblock \emph{arXiv preprint arXiv:2102.04704}, 2021.

\bibitem[Maheshwari \& Perrot(2022)Maheshwari and
  Perrot]{maheshwari2022fairgrad}
Maheshwari, G. and Perrot, M.
\newblock Fairgrad: Fairness aware gradient descent.
\newblock \emph{arXiv preprint arXiv:2206.10923}, 2022.

\bibitem[Mehrabi et~al.(2021)Mehrabi, Morstatter, Saxena, Lerman, and
  Galstyan]{mehrabi2021survey}
Mehrabi, N., Morstatter, F., Saxena, N., Lerman, K., and Galstyan, A.
\newblock A survey on bias and fairness in machine learning.
\newblock \emph{ACM Computing Surveys (CSUR)}, 54\penalty0 (6):\penalty0 1--35,
  2021.

\bibitem[Mironov(2017)]{mironov2017Renyi}
Mironov, I.
\newblock Renyi {Differential} {Privacy}.
\newblock \emph{2017 IEEE 30th Computer Security Foundations Symposium (CSF)},
  August 2017.

\bibitem[Mozannar et~al.(2020)Mozannar, Ohannessian, and
  Srebro]{mozannar2020fair}
Mozannar, H., Ohannessian, M., and Srebro, N.
\newblock Fair learning with private demographic data.
\newblock In \emph{International Conference on Machine Learning}, pp.\
  7066--7075. PMLR, 2020.

\bibitem[Pedregosa et~al.(2011)Pedregosa, Varoquaux, Gramfort, Michel, Thirion,
  Grisel, Blondel, Prettenhofer, Weiss, Dubourg, Vanderplas, Passos,
  Cournapeau, Brucher, Perrot, and Duchesnay]{scikit-learn}
Pedregosa, F., Varoquaux, G., Gramfort, A., Michel, V., Thirion, B., Grisel,
  O., Blondel, M., Prettenhofer, P., Weiss, R., Dubourg, V., Vanderplas, J.,
  Passos, A., Cournapeau, D., Brucher, M., Perrot, M., and Duchesnay, E.
\newblock Scikit-learn: Machine learning in {P}ython.
\newblock \emph{Journal of Machine Learning Research}, 12:\penalty0 2825--2830,
  2011.

\bibitem[Pujol et~al.(2020)Pujol, McKenna, Kuppam, Hay, Machanavajjhala, and
  Miklau]{pujol2020fair}
Pujol, D., McKenna, R., Kuppam, S., Hay, M., Machanavajjhala, A., and Miklau,
  G.
\newblock Fair decision making using privacy-protected data.
\newblock In \emph{Proceedings of the 2020 Conference on Fairness,
  Accountability, and Transparency}, FAT* '20, pp.\  189–199, New York, NY,
  USA, 2020. Association for Computing Machinery.
\newblock ISBN 9781450369367.

\bibitem[Sanyal et~al.(2022)Sanyal, Hu, and Yang]{sanyal2022unfair}
Sanyal, A., Hu, Y., and Yang, F.
\newblock How unfair is private learning?
\newblock \emph{arXiv preprint arXiv:2206.03985}, 2022.

\bibitem[Shalev-Shwartz \& Ben-David(2014)Shalev-Shwartz and
  Ben-David]{shalev2014understanding}
Shalev-Shwartz, S. and Ben-David, S.
\newblock \emph{Understanding machine learning: From theory to algorithms}.
\newblock Cambridge university press, 2014.

\bibitem[Song et~al.(2013)Song, Chaudhuri, and Sarwate]{song2013Stochastic}
Song, S., Chaudhuri, K., and Sarwate, A.~D.
\newblock Stochastic gradient descent with differentially private updates.
\newblock In \emph{2013 {IEEE} {Global} {Conference} on {Signal} and
  {Information} {Processing}}, pp.\  245--248, Austin, TX, USA, December 2013.
  IEEE.
\newblock 145 citations (Crossref) [2022-08-19].

\bibitem[Tran et~al.(2020)Tran, Fioretto, and
  Van~Hentenryck]{tran2020differentially}
Tran, C., Fioretto, F., and Van~Hentenryck, P.
\newblock Differentially private and fair deep learning: A lagrangian dual
  approach.
\newblock \emph{arXiv preprint arXiv:2009.12562}, 2020.

\bibitem[Tran et~al.(2021)Tran, Dinh, and Fioretto]{tran2021differentially}
Tran, C., Dinh, M., and Fioretto, F.
\newblock Differentially private empirical risk minimization under the fairness
  lens.
\newblock \emph{Advances in Neural Information Processing Systems},
  34:\penalty0 27555--27565, 2021.

\bibitem[Uniyal et~al.(2021)Uniyal, Naidu, Kotti, Singh, Kenfack,
  Mireshghallah, and Trask]{pate-impact}
Uniyal, A., Naidu, R., Kotti, S., Singh, S., Kenfack, P.~J., Mireshghallah, F.,
  and Trask, A.
\newblock Dp-sgd vs pate: Which has less disparate impact on model accuracy?
\newblock \emph{arXiv preprint arXiv:2106.12576}, 2021.

\bibitem[Woodworth et~al.(2017)Woodworth, Gunasekar, Ohannessian, and
  Srebro]{woodworth2017learning}
Woodworth, B., Gunasekar, S., Ohannessian, M.~I., and Srebro, N.
\newblock Learning non-discriminatory predictors.
\newblock In \emph{Conference on Learning Theory}, pp.\  1920--1953. PMLR,
  2017.

\bibitem[Xu et~al.(2019)Xu, Yuan, and Wu]{xu2019achieving}
Xu, D., Yuan, S., and Wu, X.
\newblock Achieving differential privacy and fairness in logistic regression.
\newblock In \emph{Companion Proceedings of The 2019 World Wide Web
  Conference}, pp.\  594--599, 2019.

\bibitem[Xu et~al.(2020)Xu, Du, and Wu]{xu2020removing}
Xu, D., Du, W., and Wu, X.
\newblock Removing disparate impact of differentially private stochastic
  gradient descent on model accuracy.
\newblock \emph{arXiv preprint arXiv:2003.03699}, 2020.

\bibitem[Zafar et~al.(2017)Zafar, Valera, Gomez~Rodriguez, and
  Gummadi]{zafar2017fairness}
Zafar, M.~B., Valera, I., Gomez~Rodriguez, M., and Gummadi, K.~P.
\newblock Fairness beyond disparate treatment \& disparate impact: Learning
  classification without disparate mistreatment.
\newblock In \emph{Proceedings of the 26th international conference on world
  wide web}, pp.\  1171--1180, 2017.

\end{thebibliography}
\bibliographystyle{icml2023}

\newpage
\appendix
\onecolumn

This appendix provides several examples of group fairness functions compatible with our framework (\Cref{app:sec:fairnessfunctions}), the proofs of the main theoretical results that were omitted in the main paper for the sake of readability (\Cref{sec-app:proof-thm-1,app:sec:proof-bound-on-diff-fairness,app:sec:proof-finitesample,sec-app:bound-output-pert,app:sec:convergence-dp-sgd}), and additional experiments (\Cref{app:sec:additional-experiments}).

\section{Fairness functions}
\label{app:sec:fairnessfunctions}

In this section we recall several well known fairness functions and show that they can be written in the form of \Cref{eq:fairness}.

\begin{myex}[\textbf{Equalized Odds \citep{hardt2016equality}}]
\label{ex:equalizedodds}
A model $h$ is fair for Equalized Odds when the probability of predicting the correct label is independent of the sensitive attribute, that is, $\forall (y,r) \in \cY \times \cS$
\begin{align*}
    F_{(y,r)}(h,D) = \prob\left(H(X) = Y \;\middle|\;Y = y, S = r\right) - \prob\left(H(X) = Y \;\middle|\;Y = y \right).
\end{align*}
We can then write $F_{(y,r)}(h,D)$ in the form of Equation~\eqref{eq:fairness} as
\begin{align}
    F_{(y,r)}(h, D) = C_{(y,r)}^{0} + \sum_{(y',r') \in \cY \times \cS} C_{(y,r)}^{(y',r')}\prob\left(H(x) = Y | Y = y', S = r' \right)
\end{align}
with
\begin{align*}
    C_{(y,r)}^{0} ={}& 0 \\
    C_{(y,r)}^{(y,r)} ={}& 1 - \prob\left(S = r \;\middle|\; Y = y\right) \\
    \text{  $\forall r' \neq r$, } C_{(y,r)}^{(y,r')} ={}& -\prob\left(S = r' \;\middle|\; Y = y\right)  \\
    \text{ $\forall y' \neq y$, $\forall r' \in \cS$, } C_{(y,r)}^{(y',r')} ={}& 0
\end{align*}
\end{myex}
\begin{proof}
We have that
\begin{align*}
    F_{(y,r)}(h,D) ={}& \prob\left(H(X) = Y \;\middle|\;Y = y, S = r\right) - \prob\left(H(X) = Y \;\middle|\;Y = y \right) \\
    ={}& \prob\left(H(X) = Y \;\middle|\;Y = y, S = r\right) - \sum_{r' \in \cS} \prob\left(H(X) = Y \;\middle|\;Y = y, S = r' \right)\prob\left(S = r' \;\middle|\;Y = y\right)
\end{align*}
which gives the result.
\end{proof}

\begin{myex}[\textbf{Equality of Opportunity \cite{hardt2016equality}}]
\label{ex:eopp}
A model $h$ is fair for Equality of Opportunity when the probability of predicting the correct label is independent of the sensitive attribute for the set of desirable outcomes $\cY' \subset \cY$, that is $\forall (y,r) \in \cY \times \cS$
\begin{align*}
    F_{(y,r)}(h,D) = \left\{ \begin{array}{ll}\prob\left(H(X) = Y \;\middle|\;Y = y, S = r\right) - \prob\left(H(X) = Y \;\middle|\;Y = y \right) & \text{if $y \in \cY'$,}\\
    0 & \text{otherwise.}\end{array} \right.
\end{align*}
We can then write $F_{(y,r)}(h,D)$ in the form of Equation~\eqref{eq:fairness} as
\begin{align}
    F_{(y,r)}(h, D) = C_{(y,r)}^{0} + \sum_{(y',r') \in \cY \times \cS} C_{(y,r)}^{(y',r')}\prob\left(H(X) = Y | Y = y', S = r' \right)
\end{align}
with, if $y \in \cY'$,
\begin{align*}
    C_{(y,r)}^{0} ={}& 0 \\
    C_{(y,r)}^{(y,r)} ={}& 1 - \prob\left(S = r \;\middle|\;Y = y\right) \\
    \text{$\forall r' \neq r$, } C_{(y,r)}^{(y,r')} ={}& -\prob\left(S = r' \;\middle|\;Y = y\right)  \\
    \text{$\forall y' \neq y$, $\forall r' \in \cS$, } C_{(y,r)}^{(y',r')} ={}& 0
\end{align*}
and, if $y \in \cY \setminus \cY'$,
\begin{align*}
    \text{$\forall y' \in \cY$, $\forall r' \in \cS$, } C_{(y,r)}^{(y',r')} = 0.
\end{align*}
\end{myex}

\begin{proof}
We consider the two cases. On the one hand, when $y \in \cY \setminus \cY'$, then we have that
\begin{align*}
    F_{(y,r)}(h,D) ={} 0
\end{align*}
which gives the first part of the result.
On the other hand, when $y \in \cY'$, then we have that
\begin{align*}
    F_{(y,r)}(h,D) ={}& \prob\left(H(X) = Y \;\middle|\;Y = y, S = r\right) - \prob\left(H(X) = Y \;\middle|\;Y = y \right) \\
    ={}& \prob\left(H(X) = Y \;\middle|\;Y = y, S = r\right) - \sum_{r' \in \cS} \prob\left(H(X) = Y \;\middle|\;Y = y, S = r' \right)\prob\left(S = r' \;\middle|\;Y = y\right)
\end{align*}
which gives the second part of the result.
\end{proof}

\begin{myex}[\textbf{Accuracy Parity \cite{zafar2017fairness}}]
\label{ex:accuracyparity}
A model $h$ is fair for Accuracy Parity when the probability of being correct is independent of the sensitive attribute, that is, $\forall (r) \in \cS$
\begin{align*}
    F_{(r)}(h,D) = \prob\left(H(X) = Y \;\middle|\;S = r\right) - \prob\left(H(X) = Y \right).
\end{align*}
We can then write $F_{(r)}(h,D)$ in the form of Equation~\eqref{eq:fairness} as
\begin{align}
    F_{(r)}(h, D) = C_{(r)}^{0} + \sum_{(r') \in \cS} C_{(r)}^{(r')}\prob\left(H(X) = Y | S = r'  \right)
\end{align}
with
\begin{align*}
    C_{(r)}^{0} ={}& 0\\
    C_{(r)}^{(r)} ={}& 1 - \prob\left(S = r\right) \\
    \text{  $\forall r' \neq r$, } C_{(r)}^{(r')} ={}& -\prob\left( S = r'\right)
\end{align*}
\end{myex}
\begin{proof}
We have that
\begin{align*}
    F_{(r)}(h,D) ={}& \prob\left(H(X) = Y \;\middle|\;S = r\right) - \prob\left(H(X) = Y \right) \\
    ={}& \prob\left(H(X) = Y \;\middle|\;S = r\right) - \sum_{r' \in \cS}\prob\left(H(X) = Y \;\middle|\;S = r'\right)\prob\left(S = r'\right)
\end{align*}
which gives the result.
\end{proof}

\begin{myex}[\textbf{Demographic Parity (Binary Labels) \cite{calders2009building}}]
A model $h$ is fair for Demographic Parity with binary labels when the probability of predicting a label is independent of the sensitive attribute, that is, $\forall (y,r) \in \cY \times \cS$
\begin{align*}
    F_{(y,r)}(h,D) = \prob\left(H(X) = y \;\middle|\;S = r\right) - \prob\left(H(X) = y \right).
\end{align*}
Assuming that given a label $y$, the second binary label is denoted $\bar{y}$, we can then write $F_{(y,r)}(h,D)$ in the form of Equation~\eqref{eq:fairness} as
\begin{align}
    F_{(y,r)}(h, D) = C_{(y,r)}^{0} + \sum_{(y',r') \in \cY \times \cS} C_{(y,r)}^{(y',r')}\prob\left(H(X) = Y | Y = y', S = r'  \right)
\end{align}
with
\begin{align*}
    C_{(y,r)}^{0} ={}& \prob\left(Y = y\right) - \prob\left(Y = y \;\middle|\;S = r\right) \\
    C_{(y,r)}^{(y,r)} ={}& \prob\left(Y = y \;\middle|\;S = r\right) - \prob\left(Y = y, S = r\right)\\
    C_{(y,r)}^{(\bar{y},r)} ={}& \prob\left(Y = \bar{y}, S = r\right) - \prob\left(Y = \bar{y} \;\middle|\;S = r\right) \\
    \text{  $\forall r' \neq r$, } C_{(y,r)}^{(y,r')} ={}& -\prob\left(Y = y, S = r'\right) \\
    \text{  $\forall r' \neq r$, } C_{(y,r)}^{(\bar{y},r')} ={}& \prob\left(Y = \bar{y}, S = r'\right)
\end{align*}
\end{myex}
\begin{proof}
We have that
\begin{align*}
    F_{(y,r)}(h,D) ={}& \prob\left(H(X) = y \;\middle|\;S = r\right) - \prob\left(H(X) = y \right) \\
     ={}& \prob\left(H(X) = y \;\middle|\;Y = y, S = r\right)\prob\left(Y = y \;\middle|\;S = r\right) + \prob\left(H(X) = y \;\middle|\;Y \neq y, S = r\right)\prob\left(Y \neq y \;\middle|\;S = r\right) \\
     &- \sum_{r' \in \cS} \big(\prob\left(H(X) = y \;\middle|\;Y = y, S = r' \right) \prob\left(Y = y, S = r'\right) \\[-1em]
     & \qquad\qquad + \prob\left(H(X) = y \;\middle|\;Y \neq y, S = r' \right) \prob\left(Y \neq y, S = r'\right) \big) \\
     ={}& \prob\left(H(X) = y \;\middle|\;Y = y, S = r\right)\prob\left(Y = y \;\middle|\;S = r\right)
     \\
     & \quad + 1 - \prob\left(H(X) \neq y \;\middle|\;Y \neq y, S = r\right)\prob\left(Y \neq y \;\middle|\;S = r\right) \\
     &- \sum_{r' \in \cS} \big(\prob\left(H(X) = y \;\middle|\;Y = y, S = r' \right) \prob\left(Y = y, S = r'\right) \\[-1em]
     & \qquad\qquad + 1 - \prob\left(H(X) \neq y \;\middle|\;Y \neq y, S = r' \right) \prob\left(Y \neq y, S = r'\right) \big).
\end{align*}
Here, we only consider binary labels, $y$ and $\bar{y}$. Hence, $H(X) \neq y \Leftrightarrow H(X) = \bar{y}$ and $Y \neq y \Leftrightarrow Y = \bar{y}$. Thus, we obtain
\begin{align*}
    F_{(y,r)}&(h,D)
    \\  ={}& \prob\left(H(X) = y \;\middle|\;Y = y, S = r\right)\prob\left(Y = y \;\middle|\;S = r\right) + \left(1 - \prob\left(H(X) = \bar{y} \;\middle|\;Y = \bar{y}, S = r\right)\right)\prob\left(Y = \bar{y} \;\middle|\;S = r\right) \\
     &- \sum_{r' \in \cS} \big(\prob\left(H(X) = y \;\middle|\;Y = y, S = r' \right) \prob\left(Y = y, S = r'\right) \\[-1em]
     & \qquad\qquad + \left(1 - \prob\left(H(X) = \bar{y} \;\middle|\;Y = \bar{y}, S = r' \right) \right)\prob\left(Y = \bar{y}, S = r'\right) \big) \\
     ={}& \prob\left(H(X) = y \;\middle|\;Y = y, S = r\right)\left[\prob\left(Y = y \;\middle|\;S = r\right) - \prob\left(Y = y, S = r\right)\right] \\
     &+ \prob\left(H(X) = \bar{y} \;\middle|\;Y = \bar{y}, S = r\right)\left[\prob\left(Y = \bar{y}, S = r\right) - \prob\left(Y = \bar{y} \;\middle|\;S = r\right)\right] \\
     &+ \sum_{r' \in \cS, r' \neq r} \prob\left(H(X) = y \;\middle|\;Y = y, S = r' \right)\left(-\prob\left(Y = y, S = r'\right)\right) \\
     &+ \sum_{r' \in \cS, r' \neq r} \prob\left(H(X) = \bar{y} \;\middle|\;Y = \bar{y}, S = r' \right) \prob\left(Y = \bar{y}, S = r'\right)\\
     &+ \prob\left(Y = \bar{y} \;\middle|\;S = r\right) - \prob\left(Y = \bar{y}\right) \\
     ={}& \prob\left(H(X) = Y \;\middle|\;Y = y, S = r\right)\left[\prob\left(Y = y \;\middle|\;S = r\right) - \prob\left(Y = y, S = r\right)\right] \\
     &+ \prob\left(H(X) = Y \;\middle|\;Y = \bar{y}, S = r\right)\left[\prob\left(Y = \bar{y}, S = r\right) - \prob\left(Y = \bar{y} \;\middle|\;S = r\right)\right] \\
     &+ \sum_{r' \in \cS, r' \neq r} \prob\left(H(X) = Y \;\middle|\;Y = y, S = r' \right)\left(-\prob\left(Y = y, S = r'\right)\right) \\
     &+ \sum_{r' \in \cS, r' \neq r} \prob\left(H(X) = Y \;\middle|\;Y = \bar{y}, S = r' \right) \prob\left(Y = \bar{y}, S = r'\right)\\
     &+ \prob\left(Y = y\right) - \prob\left(Y = y \;\middle|\;S = r\right)
\end{align*}
which gives the result.
\end{proof}

\section{Proof of \Cref{thm:bound-on-diff-proba}}
\label{sec-app:proof-thm-1}

\begin{theorem*}[Pointwise Lipschitzness of Conditional Negative Predictions]
    Let $\cH$ be a set of real vector-valued functions with $L_{X,Y}$ the Lipschitz constants defined in
  \Cref{assum:lipschitz-constant-fct-val}. Let $h, h' \in \cH$ be two models, $(X,Y,S)$ be a triple of random variables having distribution $\cD$, and $E$ be an arbitrary
  event. Assume that
  $\expect \left(\frac{L_{X,Y}}{\abs{\rho(h,X,Y)}} \;\middle|\; E \right) < +\infty$, then
  \begin{align*}
    \abs{ \prob( H(X) = Y \;\middle|\;E ) - \prob( H'(X) = Y \;\middle|\;E ) } \leq \expect \left(\frac{L_{X,Y}}{\abs{\rho(h,X,Y)}} \;\middle|\; E \right) \norm{h - h'}_{\cH}
      \enspace.
  \end{align*}
\end{theorem*}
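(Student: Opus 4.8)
The plan is to reduce the left-hand side to a tail probability of the normalized inverse margin $L_{X,Y}/\abs{\rho(h,X,Y)}$, and then control that tail with Markov's inequality. The structural fact I would exploit is that, for a deterministic classifier, the \emph{sign} of the margin encodes correctness: $H(X)=Y$ exactly when $\rho(h,X,Y)>0$, while $H(X)\neq Y$ when $\rho(h,X,Y)<0$, with ties (where $\rho=0$) resolved by the random tie-break and handled separately.

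First I would pass the absolute value inside the conditional expectation,
\begin{align*}
\babs{\prob(H(X)=Y\mid E)-\prob(H'(X)=Y\mid E)} \le \expect\!\left(\babs{\indic{\{H(X)=Y\}}-\indic{\{H'(X)=Y\}}}\;\middle|\;E\right),
\end{align*}
so that the integrand vanishes except where $h$ and $h'$ disagree on the correctness of $(X,Y)$. The crucial deterministic claim is then that, if $\abs{\rho(h,X,Y)}>L_{X,Y}\norm{h-h'}_\cH$, the two models agree. Indeed, \Cref{assum:lipschitz-constant-fct-val} gives $\abs{\rho(h,X,Y)-\rho(h',X,Y)}\le L_{X,Y}\norm{h-h'}_\cH$, which forces $\rho(h,X,Y)$ and $\rho(h',X,Y)$ to share a sign and to be nonzero; hence either both classifiers are correct or both are incorrect. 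Taking the contrapositive, disagreement can only occur on the event $\{L_{X,Y}/\abs{\rho(h,X,Y)}\ge 1/\norm{h-h'}_\cH\}$, which yields
\begin{align*}
\babs{\prob(H(X)=Y\mid E)-\prob(H'(X)=Y\mid E)} \le \prob\!\left(\frac{L_{X,Y}}{\abs{\rho(h,X,Y)}}\ge \frac{1}{\norm{h-h'}_\cH}\;\middle|\;E\right).
\end{align*}

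Finally I would apply Markov's inequality to the nonnegative random variable $Z=L_{X,Y}/\abs{\rho(h,X,Y)}$ at level $t=1/\norm{h-h'}_\cH$, using the assumed finiteness of $\expect(Z\mid E)$, to obtain $\prob(Z\ge t\mid E)\le \expect(Z\mid E)/t=\expect(Z\mid E)\,\norm{h-h'}_\cH$, which is precisely~\eqref{eq:thm:pointwise-lipschitz}. The main obstacle I anticipate is the careful treatment of ties: when $\rho(h,X,Y)=0$ or $\rho(h',X,Y)=0$ the prediction is randomized, so the indicators are themselves random even given $(X,Y)$. The clean way around this is to condition on $(X,Y)$ first: on the large-margin set neither model has a tie and the per-point correctness probabilities coincide, so their difference is zero there, while on the complementary small-margin set the per-point difference is trivially at most $1$; moreover a tie for $h$ forces $\abs{\rho(h,X,Y)}=0$ and hence membership in that set, so no boundary case escapes the bound. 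This is where the triangle inequality and union bound in the stated sketch do the bookkeeping.
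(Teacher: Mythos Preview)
Your proposal is correct and follows essentially the same two-step strategy as the paper: reduce the difference of conditional accuracies to the tail probability $\prob\big(\abs{\rho(h,X,Y)}\le L_{X,Y}\norm{h-h'}_\cH \,\big|\,E\big)$ via \Cref{assum:lipschitz-constant-fct-val}, then apply Markov. The only cosmetic difference is that the paper bounds the two signed directions separately (using $\prob(\rho\ge 0)$ versus $\prob(\rho>0)$ to absorb ties), whereas you pass through the expected absolute indicator difference and handle ties by conditioning on $(X,Y)$; both routes arrive at the same tail event before the Markov step.
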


\begin{proof}
The proof of this theorem is in two steps. First, we use the Lipschitz continuity property associated with $\cH$, the triangle inequality, and the union bound to show that $\abs{\prob\left(H(X) = Y \;\middle|\;E\right) - \prob\left(H'(X) = Y \;\middle|\; E\right)} \leq{} \prob\left(\frac{L_{X,Y}}{\abs{\rho(h,X,Y)}}  \leq \norm{h - h'}_\cH \;\middle|\; E\right)$. Then, applying Markov's inequality gives the desired result.

\paragraph{Bounding $\abs{\prob\left(H(X) = Y \;\middle|\; E\right) - \prob\left(H'(X) = Y \;\middle|\; E\right)}$.}
We have that
\begin{align*}
    \prob\left(H(X) = Y \;\middle|\; E\right) -& \prob\left(H'(X) = Y \;\middle|\; E\right) \\
    \leq{}& \prob\left(\rho(h,X,Y) \geq 0 \;\middle|\; E\right) - \prob\left(\rho(h',X,Y) > 0 \;\middle|\; E\right) \\
    ={}& \prob\left(\rho(h',X,Y) \leq 0 \;\middle|\; E\right) - \prob\left(\rho(h,X,Y) < 0 \;\middle|\; E\right) \\
    ={}& \prob\left(\rho(h',X,Y) - \rho(h,X,Y) + \rho(h,X,Y) \leq 0 \;\middle|\; E\right) - \prob\left(\rho(h,X,Y) < 0 \;\middle|\; E\right) \\
    ={}& \prob\left(\rho(h,X,Y) \leq \rho(h,X,Y) - \rho(h',X,Y) \;\middle|\; E\right) - \prob\left(\rho(h,X,Y) < 0 \;\middle|\; E\right) \\
    \leq{}& \prob\left(\rho(h,X,Y) \leq \abs{\rho(h,X,Y) - \rho(h',X,Y)} \;\middle|\; E\right) - \prob\left(\rho(h,X,Y) < 0 \;\middle|\; E\right) \\
    \cause{\Cref{assum:lipschitz-constant-fct-val}.}
    \leq{}& \prob\left(\rho(h,X,Y) \leq L_{X,Y} \norm{h - h'}_\cH \;\middle|\; E\right) - \prob\left(\rho(h,X,Y) < 0 \;\middle|\; E\right) \\
    ={}& \prob\left(\rho(h,X,Y) < 0 \bigcup 0 \leq \rho(h,X,Y) \leq L_{X,Y} \norm{h - h'}_\cH \;\middle|\; E\right) - \prob\left(\rho(h,X,Y) < 0 \;\middle|\; E\right) \\
    \cause{Union bound on disjoint events.}
    ={}& \prob\left(0 \leq \rho(h,X,Y) \leq L_{X,Y} \norm{h - h'}_\cH \;\middle|\; E\right)\\
    \leq{}& \prob\left(\abs{\rho(h,X,Y)} \leq L_{X,Y} \norm{h - h'}_\cH \;\middle|\; E\right)\\
    ={}& \prob\left(\frac{\abs{\rho(h,X,Y)}}{L_{X,Y}} \leq \norm{h - h'}_\cH \;\middle|\; E\right)
\end{align*}
Similarly, we have that
\begin{align*}
    \prob\left(H'(X) = Y \;\middle|\; E\right) -& \prob\left(H(X) = Y \;\middle|\; E\right) \\
    \leq{}& \prob\left(\rho(h',X,Y) \geq 0 \;\middle|\; E\right) - \prob\left(\rho(h,X,Y) > 0 \;\middle|\; E\right) \\
    ={}& \prob\left(\rho(h',X,Y) + \rho(h,X,Y) - \rho(h,X,Y) \geq 0 \;\middle|\; E\right) - \prob\left(\rho(h,X,Y) > 0 \;\middle|\; E\right) \\
    ={}& \prob\left(\rho(h,X,Y) \geq - \left(\rho(h',X,Y) - \rho(h,X,Y)\right) \;\middle|\; E\right) - \prob\left(\rho(h,X,Y) > 0 \;\middle|\; E\right) \\
    \leq{}& \prob\left(\rho(h,X,Y) \geq - \abs{\rho(h',X,Y) - \rho(h,X,Y)} \;\middle|\; E\right) - \prob\left(\rho(h,X,Y) > 0 \;\middle|\; E\right) \\
    \cause{\Cref{assum:lipschitz-constant-fct-val}}
    \leq{}& \prob\left(\rho(h,X,Y) \geq - L_{X,Y} \norm{h - h'}_\cH \;\middle|\; E\right) - \prob\left(\rho(h,X,Y) > 0 \;\middle|\; E\right) \\
    ={}& \prob\left(\rho(h,X,Y) > 0 \bigcup 0 \geq \rho(h,X,Y) \geq - L_{X,Y} \norm{h - h'}_\cH \;\middle|\; E\right) - \prob\left(\rho(h,X,Y) > 0 \;\middle|\; E\right) \\
    \cause{Union bound on disjoint events.}
    ={}& \prob\left(0 \geq \rho(h,X,Y) \geq - L_{X,Y} \norm{h - h'}_\cH \;\middle|\; E\right) \\
    \leq{}& \prob\left(- \abs{\rho(h,X,Y)} \geq - L_{X,Y} \norm{h - h'}_\cH \;\middle|\; E\right) \\
    ={}& \prob\left(\frac{\abs{\rho(h,X,Y)}}{L_{X,Y}} \leq \norm{h - h'}_\cH \;\middle|\; E\right)
\end{align*}
It implies that
\begin{align*}
    \abs{\prob\left(H(X) = Y \;\middle|\; E\right) - \prob\left(H'(X) = Y \;\middle|\; E\right)} \leq{} \prob\left(\frac{\abs{\rho(h,X,Y)}}{L_{X,Y}} \leq \norm{h - h'}_\cH \;\middle|\; E\right)
\end{align*}

\paragraph{Bounding $\prob\left(\frac{\abs{\rho(h,X,Y)}}{L_{X,Y}} \leq \norm{h - h'}_\cH \;\middle|\; E\right)$.}
We use the Markov's Inequality and we assume that $\expect \left(\frac{L_{X,Y}}{\abs{\rho(h,X,Y)}} \;\middle|\; E \right) < +\infty$. Hence, we have that
\begin{align*}
    \prob\left(\frac{\abs{\rho(h,X,Y)}}{L_{X,Y}} \leq \norm{h - h'}_\cH \;\middle|\; E\right) ={}& \prob\left(\frac{L_{X,Y}}{\abs{\rho(h,X,Y)}} \geq \frac{1}{\norm{h - h'}_\cH} \;\middle|\; E\right) \\
    \cause{Markov's inequality.}
    \leq{}& \expect\left(\frac{L_{X,Y}}{\abs{\rho(h,X,Y)}}\;\middle|\; E\right)\norm{h - h'}_\cH
\end{align*}
It concludes the proof.
\end{proof}

\begin{remark}
    \label{app:rem:chernoff-instead-markov}
In the last step of the proof of \Cref{thm:bound-on-diff-proba}, we can also use the Chernoff bound:
\begin{align*}
    \prob\left(\frac{\abs{\rho(h,X,Y)}}{L_{X,Y}} \leq \norm{h - h'}_\cH \;\middle|\; E\right) ={}& \prob\left(\exp\left(-t\frac{\abs{\rho(h,X,Y)}}{L_{X,Y}}\right) \geq \exp\left(-t\norm{h - h'}_\cH\right) \;\middle|\; E\right) \\
    \leq{}& \expect\left( \exp\left(-t\frac{\abs{\rho(h,X,Y)}}{L_{X,Y}}\right) \;\middle|\; E\right)\exp\left(t\norm{h - h'}_\cH\right)\enspace.
\end{align*}
A correct choice of $t$ would lead to potentially tighter bounds than the Markov's inequality at the expense of readability.
\end{remark}

\begin{remark}
    \label{app:rem:large-enough-margins}
    Before using Markov's inequality or Chernoff bound in \Cref{thm:bound-on-diff-proba}, we can modify the probability as
    \begin{align*}
    \prob\left(\frac{\abs{\rho(h,X,Y)}}{L_{X,Y}} \leq \norm{h - h'}_\cH \;\middle|\; E\right)
    ={}&
    \prob\left(\left[\frac{\abs{\rho(h,X,Y)}}{L_{X,Y}}\right]^{\norm{h - h'}_\cH} \leq \norm{h - h'}_\cH \;\middle|\; E\right)
    \enspace,
    \end{align*}
where
    \begin{align*}
    \left[\frac{\abs{\rho(h,X,Y)}}{L_{X,Y}}\right]^{\norm{h - h'}_\cH}
    & =
    \begin{cases}
    \frac{\abs{\rho(h,X,Y)}}{L_{X,Y}} & \text{if~~} \abs{\rho(h,X,Y)} \le L_{X,Y}\norm{h - h'}_\cH \enspace, \\
    +\infty & \text{otherwise} \enspace.
    \end{cases}
    \end{align*}
    This essentially means that, whenever the model's margin on a data record is large enough, its precise value is no more meaningful, as its prediction will not change whatsoever. The remaining of \Cref{thm:bound-on-diff-proba}'s proof is unchanged, except with~$\left[\frac{\abs{\rho(h,X,Y)}}{L_{X,Y}}\right]^{\norm{h - h'}_\cH}$ instead of $\frac{\abs{\rho(h,X,Y)}}{L_{X,Y}}$.

    Note that this can lead to much tighter bounds. Notably, when distance $\norm{h-h'}_{\cH}$ between $h$ and $h'$ is small enough, the difference of fairness may even become zero.
\end{remark}

\section{Proof of \Cref{thm:bound-on-diff-fairness}}
\label{app:sec:proof-bound-on-diff-fairness}

\begin{theorem*}[Pointwise Lipschitzness of Fairness]
  Let $h, h' \in \cH$, $L_{X,Y}$ be defined as in
  \Cref{assum:lipschitz-constant-fct-val}, and $(X, S, Y) \sim \cD$. For any fairness notion of the form of Equation~\eqref{eq:fairness}, we have:
  \begin{align*}
      \forall k \in [K], \abs{F_k(h,D) - F_k(h',D)} \leq \chi_k(h, D) \norm{h - h'}_{\cH}
      \enspace,
  \end{align*}
  with $ \chi_k(h, D) = \sum_{k'=1}^K \abs{C_k^{k'}}\expect \left( \frac{1}{\abs{h(X)}} \;\middle|\; D_{k'} \right)$.
  Similarly, for the aggregate measure of fairness defined in Equation~\eqref{eq:aggregatefairness}, we have:
  \begin{align*}
      \abs{\text{Fair}(h,D) - \text{Fair}(h',D)} \leq \frac{1}{K} \sum_{k=1}^K \chi_k(h, D) \norm{h - h'}_{\cH}
      \enspace.
  \end{align*}
\end{theorem*}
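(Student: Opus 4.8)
The plan is to reduce both claims to direct applications of \Cref{thm:bound-on-diff-proba}, exploiting the fact that the coefficients $C_k^{k'}$ in \eqref{eq:fairness} do not depend on the model. I would start from the definition of $F_k$ and subtract the two instances for $h$ and $h'$. Since $C_k^0$ is identical in both and cancels, and each $C_k^{k'}$ factors out of the difference, I obtain
\begin{align*}
    F_k(h,D) - F_k(h',D) = \sum_{k'=1}^K C_k^{k'}\Big[\prob\big(H(X) = Y \;\big|\; D_{k'}\big) - \prob\big(H'(X) = Y \;\big|\; D_{k'}\big)\Big]
    \enspace.
\end{align*}

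Next I would take absolute values and apply the triangle inequality to pull the sum outside, giving a bound by $\sum_{k'=1}^K \abs{C_k^{k'}}$ times the absolute difference of conditional accuracies for group $k'$. The key step is then to invoke \Cref{thm:bound-on-diff-proba} with the arbitrary event $E$ instantiated as membership to group $D_{k'}$: this bounds each term $\abs{\prob(H(X)=Y\mid D_{k'}) - \prob(H'(X)=Y\mid D_{k'})}$ by $\expect\!\big(\sfrac{L_{X,Y}}{\abs{\rho(h,X,Y)}} \mid D_{k'}\big)\norm{h-h'}_\cH$. Substituting and recognizing the definition of $\chi_k(h,D)$ yields the first claim directly; the finiteness hypothesis of \Cref{thm:bound-on-diff-proba} is inherited from the assumption that $\chi_k(h,D)<+\infty$.

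For the aggregate measure, I would work from the definition \eqref{eq:aggregatefairness} and apply the reverse triangle inequality $\babs{\abs{F_k(h,D)}-\abs{F_k(h',D)}} \le \abs{F_k(h,D)-F_k(h',D)}$ inside the average over $k$, so that
\begin{align*}
    \abs{\text{Fair}(h,D) - \text{Fair}(h',D)} \le \frac{1}{K}\sum_{k=1}^K \abs{F_k(h,D)-F_k(h',D)}
    \enspace.
\end{align*}
Plugging the first claim into each summand gives the second bound. Honestly, there is no serious obstacle here: the content lives entirely in \Cref{thm:bound-on-diff-proba}, and the only things to be careful about are that the $C_k^{k'}$ genuinely do not depend on $h$ (so they survive the subtraction unchanged and can be pulled out in absolute value), and that the reverse triangle inequality is the right tool to handle the absolute values appearing in the aggregate definition.
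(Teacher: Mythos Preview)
Your proposal is correct and matches the paper's proof essentially step for step: subtract the two instances of \eqref{eq:fairness} so that $C_k^0$ cancels, apply the triangle inequality and then \Cref{thm:bound-on-diff-proba} with $E=D_{k'}$ for the first claim, and use the (forward then reverse) triangle inequality on the averaged absolute values for the second. There is nothing to add.
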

\begin{proof}
The first part follows from the following derivation. For all $k$,
\begin{align*}
\abs{F_k(h,D) - F_k(h',D)} ={}& \abs{ C_k^{0} + \sum_{k'=1}^K C_k^{k'}\prob\left(H(X) = Y | D_{k'}\right) - C_k^{0} - \sum_{k'=1}^K C_k^{k'}\prob\left(H'(X) = Y | D_{k'}\right) } \\
={}& \abs{ \sum_{k'=1}^K C_k^{k'}\Big(\prob\left(H(X) = Y | D_{k'}\right) - \prob\left(H'(X) = Y | D_{k'}\right) \Big) } \\
\cause{Triangle inequality.}
\leq{}& \sum_{k'=1}^K \abs{C_k^{k'}}\abs{ \prob\left(H(X) = Y | D_{k'}\right) - \prob\left(H'(X) = Y | D_{k'}\right) } \\
\cause{Theorem~\ref{thm:bound-on-diff-proba}.}
\leq{}& \sum_{k'=1}^K \abs{C_k^{k'}}\expect\left(\frac{L_{X,Y}}{\abs{\rho(h,X,Y)}}\;\middle|\; D_{k'}\right)\norm{h - h'}_\cH \enspace.
\end{align*}

The second part is obtained thanks to the triangle inequality:
\begin{align*}
    \abs{\text{Fair}(h,D) - \text{Fair}(h',D)} ={}& \abs{\frac{1}{K} \sum_{k=1}^K \abs{F_k(h, D)} - \frac{1}{K} \sum_{k=1}^K \abs{F_k(h', D)}} \\
    \cause{Triangle inequality.}
    \leq{}& \frac{1}{K} \sum_{k=1}^K \abs{\abs{F_k(h, D)} - \abs{F_k(h', D)}} \\
    \cause{Reverse triangle inequality.}
    \leq{}& \frac{1}{K} \sum_{k=1}^K \abs{F_k(h, D) - F_k(h', D)}\enspace,
\end{align*}
which gives the claim when combined with the first part of the theorem.
\end{proof}

\section{Proof of \Cref{lem:finitesample}}
\label{app:sec:proof-finitesample}
\begin{lemma*}[Finite Sample analysis]
Let $D$ be a finite sample of $n \geq \frac{8\log\left(\frac{2K+1}{\delta}\right)}{\min_{k'}p_{k'}}$ examples drawn i.i.d. from $\cD$, where $p_{k'}$ is the true proportion of examples from group $k'$. Assume that $\prob_{D \sim \cD^n}\big(\sum_{k'=0}^K\babs{C_k^{k'} - \widehat{C}_k^{k'}} > \alpha_{C}\big) \leq B_3\exp\left(-B_4\alpha_{C}^2n\right)$. Let $\cH$ be an hypothesis space and $d_{\cH}$ be the Natarajan dimension of $\cH$.
\begin{itemize}
    \item \textbf{Assuming that $h$ and $h'$ are independent of $D$.} With probability $1-\delta$ over the choice of $D$,
    \begin{align*}
        \abs{F_k(h,D) - \widehat{F}_k(h',D)} \leq{} \widehat{\chi}_k(h, D)\norm{h - h'}_{\cH} + \widetilde{O}\left(\sum_{k'=1}^K \abs{\widehat{C}_k^{k'}}\sqrt{\frac{\log\left(\sfrac{K}{\delta}\right)}{np_{k'}}}\right)
    \end{align*}
    \item \textbf{Assuming that $h$ and $h'$ are dependent of $D$.} With probability $1-\delta$ over the choice of $D$, $\forall h,h' \in \cH$,
    \begin{align*}
        \abs{F_k(h,D) - \widehat{F}_k(h',D)} \leq \widehat{\chi}_k(h, D)\norm{h - h'}_{\cH}
        + \widetilde{O}\left(\sum_{k'=1}^K \abs{\widehat{C}_k^{k'}}\sqrt{\frac{d_{\cH} + \log\left(\sfrac{K}{\delta}\right)}{np_{k'}}}\right)
    \end{align*}
\end{itemize}
\end{lemma*}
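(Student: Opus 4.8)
The plan is to follow the two-term decomposition suggested in the sketch, but to split around the empirical fairness of $h$ (rather than $h'$) so that the Lipschitz constant that appears is exactly $\widehat{\chi}_k(h,D)$ as in the statement. First I would write
\begin{align*}
    \abs{F_k(h,D) - \widehat{F}_k(h',D)}
    \leq \abs{\widehat{F}_k(h,D) - \widehat{F}_k(h',D)}
    + \abs{F_k(h,D) - \widehat{F}_k(h,D)}
    \enspace,
\end{align*}
and bound the first (optimization) term directly by the empirical analogue of \Cref{thm:bound-on-diff-fairness}, which yields $\widehat{\chi}_k(h,D)\norm{h-h'}_\cH$. All the remaining work concerns the second (generalization) term, which compares the true and empirical fairness of the \emph{single} model $h$.

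To control $\abs{F_k(h,D) - \widehat{F}_k(h,D)}$, I would insert the form \eqref{eq:fairness} and, for each $k'$, use the algebraic identity $C_k^{k'}\prob(H(X)=Y\mid D_{k'}) - \widehat{C}_k^{k'}\widehat{\prob}(H(X)=Y\mid D_{k'}) = (C_k^{k'}-\widehat{C}_k^{k'})\prob(H(X)=Y\mid D_{k'}) + \widehat{C}_k^{k'}\big(\prob(H(X)=Y\mid D_{k'}) - \widehat{\prob}(H(X)=Y\mid D_{k'})\big)$. Since probabilities are bounded by $1$, the coefficient errors collect into $\sum_{k'=0}^K\abs{C_k^{k'}-\widehat{C}_k^{k'}}$, which is $\widetilde{O}(\sqrt{\log(1/\delta)/n})$ with high probability by the assumed concentration inequality. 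The remaining term is $\sum_{k'=1}^K\abs{\widehat{C}_k^{k'}}\,\abs{\prob(H(X)=Y\mid D_{k'}) - \widehat{\prob}(H(X)=Y\mid D_{k'})}$, a weighted sum of conditional-accuracy estimation errors.

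Each conditional-accuracy error is the crux of the argument, and I expect it to be the main obstacle, because the group sizes are random and the bound must scale with $np_{k'}$ rather than $n$. I would treat it in two stages. First, a multiplicative Chernoff bound on $n_{k'}\sim\mathrm{Bin}(n,p_{k'})$ shows that, under the stated condition $n \geq 8\log((2K+1)/\delta)/\min_{k'}p_{k'}$, every group satisfies $n_{k'} \geq \tfrac{1}{2}np_{k'}$ simultaneously with high probability. Second, conditioning on the realized group sizes, the examples falling in $D_{k'}$ are \iid\ from the conditional distribution, so I would invoke a standard multiclass uniform-convergence bound governed by the Natarajan dimension $d_\cH$ of the induced classifier class \citep{shalev2014understanding}: uniformly over $h\in\cH$, $\abs{\prob(H(X)=Y\mid D_{k'}) - \widehat{\prob}(H(X)=Y\mid D_{k'})} \leq \widetilde{O}\big(\sqrt{(d_\cH+\log(K/\delta))/(np_{k'})}\big)$. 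For the fixed-model (independent) case, this step reduces to a single-function concentration bound (Hoeffding), which removes the $d_\cH$ contribution.

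Finally I would assemble everything through a union bound over the $2K+1$ events — one coefficient-concentration event, $K$ group-size events, and $K$ within-group uniform-convergence events — each taken at level $\delta/(2K+1)$, which is precisely why $2K+1$ appears in the sample-size condition. Collecting terms gives $\abs{F_k(h,D)-\widehat{F}_k(h,D)} \leq \widetilde{O}\big(\sum_{k'=1}^K\abs{\widehat{C}_k^{k'}}\sqrt{(d_\cH+\log(K/\delta))/(np_{k'})}\big)$, and adding the Lipschitz term proves the dependent-case claim; dropping the $d_\cH$ term yields the independent-case variant.
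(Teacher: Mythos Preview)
Your proposal is correct and follows essentially the same route as the paper: the same triangle-inequality split into a Lipschitz term plus a generalization term, the same algebraic separation of coefficient errors from conditional-accuracy errors, the same Chernoff step to control random group sizes (the paper phrases this via the conditioning trick of \citet{woodworth2017learning}, splitting the sum over $|D_{k'}|$ at $np_{k'}/2$), and the same Hoeffding versus multiclass-fundamental-theorem dichotomy for the two cases. Your observation that one should center the Lipschitz bound at $h$ to obtain $\widehat{\chi}_k(h,D)$ as in the statement is on point; the paper's appendix actually writes $\widehat{\chi}_k(h',D)$ after applying \Cref{thm:bound-on-diff-fairness}, which appears to be a typo.
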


\begin{proof}
First of all, notice that we have
\begin{align*}
    \abs{F_k(h,D) - \widehat{F}_k(h',D)} ={}& \abs{F_k(h,D) - \widehat{F}_k(h,D) + \widehat{F}_k(h,D) - \widehat{F}_k(h',D)} \\
    \leq{}& \abs{F_k(h,D) - \widehat{F}_k(h,D)} + \abs{\widehat{F}_k(h,D) - \widehat{F}_k(h',D)} \\
    \cause{\Cref{thm:bound-on-diff-fairness}}
    \leq{}& \abs{F_k(h,D) - \widehat{F}_k(h,D)} + \widehat{\chi}_k(h', D)\norm{h - h'}_{\cH}
\end{align*}

Hence it remains to bound the first term. By definition of our fairness notions, notice that we have the following.
\begin{align*}
    \abs{F_k(h,D) - \widehat{F}_k(h,D)} ={}& \abs{ C_k^{0} + \sum_{k'=1}^K C_k^{k'}\prob\left(H(X) = Y | D_{k'}\right) - \widehat{C}_k^{0} - \sum_{k'=1}^K \widehat{C}_k^{k'}\widehat{\prob}\left(H(X) = Y | D_{k'}\right) } \\
    \leq{}& \abs{ C_k^{0}- \widehat{C}_k^{0}} + \abs{\sum_{k'=1}^K C_k^{k'}\prob\left(H(X) = Y | D_{k'}\right) - \widehat{C}_k^{k'}\widehat{\prob}\left(H(X) = Y | D_{k'}\right) } \\
    \leq{}& \abs{ C_k^{0}- \widehat{C}_k^{0}} + \sum_{k'=1}^K \left|C_k^{k'}\prob\left(H(X) = Y | D_{k'}\right) - \widehat{C}_k^{k'}\prob\left(H(X) = Y | D_{k'}\right)\right.\\
    &+ \left.  \widehat{C}_k^{k'}\prob\left(H(X) = Y | D_{k'}\right) - \widehat{C}_k^{k'}\widehat{\prob}\left(H(X) = Y | D_{k'}\right) \right| \\
    \leq{}& \abs{ C_k^{0}- \widehat{C}_k^{0}} + \sum_{k'=1}^K \abs{C_k^{k'}\prob\left(H(X) = Y | D_{k'}\right) - \widehat{C}_k^{k'}\prob\left(H(X) = Y | D_{k'}\right)} \\
    &+ \sum_{k'=1}^K\abs{\widehat{C}_k^{k'}\prob\left(H(X) = Y | D_{k'}\right) - \widehat{C}_k^{k'}\widehat{\prob}\left(H(X) = Y | D_{k'}\right) } \\
    \leq{}& \abs{ C_k^{0}- \widehat{C}_k^{0}} + \sum_{k'=1}^K \abs{\widehat{C}_k^{k'}}\abs{\prob\left(H(X) = Y | D_{k'}\right) - \widehat{\prob}\left(H(X) = Y | D_{k'}\right)} \\
    &+ \sum_{k'=1}^K\abs{C_k^{k'} - \widehat{C}_k^{k'}}\prob\left(H(X) = Y | D_{k'}\right) \\
    \leq{}& \abs{ C_k^{0}- \widehat{C}_k^{0}} + \sum_{k'=1}^K \abs{\widehat{C}_k^{k'}}\abs{\prob\left(H(X) = Y | D_{k'}\right) - \widehat{\prob}\left(H(X) = Y | D_{k'}\right)} \\
    &+ \sum_{k'=1}^K\abs{C_k^{k'} - \widehat{C}_k^{k'}} \\
    \leq{}& \sum_{k'=0}^K\abs{C_k^{k'} - \widehat{C}_k^{k'}} + \sum_{k'=1}^K \abs{\widehat{C}_k^{k'}}\abs{\prob\left(H(X) = Y | D_{k'}\right) - \widehat{\prob}\left(H(X) = Y | D_{k'}\right)}
\end{align*}

We now need to consider two cases, depending on whether $h$ depends on $D$ or not.

\paragraph{Assuming that $h$ is independent of $D$.} In this case, our goal is to upper bound
\begin{align*}
    \prob_{D \sim \cD^n}\left(\abs{F_k(h,D) - \widehat{F}_k(h,D)} > \alpha_{C} + \sum_{k'=1}^K \abs{\widehat{C}_k^{k'}}\alpha_{k'}\right)
\end{align*}
Notice that, using the same trick that \citet{woodworth2017learning} used to prove their Equation~(38), we have that
\begin{align*}
    \prob_{D \sim \cD^n}& \left(\abs{F_k(h,D) - \widehat{F}_k(h,D)} > \alpha_{C} + \sum_{k'=1}^K \abs{\widehat{C}_k^{k'}}\alpha_{k'}\right) %
    \\
    \leq{}& \prob_{D \sim \cD^n}\left(\sum_{k'=0}^K\abs{C_k^{k'} - \widehat{C}_k^{k'}} + \sum_{k'=1}^K \abs{\widehat{C}_k^{k'}}\abs{\prob\left(H(X) = Y | D_{k'}\right) - \widehat{\prob}\left(H(X) = Y | D_{k'}\right)} > \alpha_{C} + \sum_{k'=1}^K \abs{\widehat{C}_k^{k'}}\alpha_{k'}\right) \\
    \leq{}& \prob_{D \sim \cD^n}\left(\sum_{k'=0}^K\abs{C_k^{k'} - \widehat{C}_k^{k'}} > \alpha_{C} \bigcup \left[\bigcup_{k'=1}^K \abs{\prob\left(H(X) = Y | D_{k'}\right) - \widehat{\prob}\left(H(X) = Y | D_{k'}\right)} > \alpha_{k'}\right]\right) \\
    \leq{}& \prob_{D \sim \cD^n}\left(\sum_{k'=0}^K\abs{C_k^{k'} - \widehat{C}_k^{k'}} > \alpha_{C}\right) +  \sum_{k'=1}^K\prob_{D \sim \cD^n}\left( \abs{\prob\left(H(X) = Y | D_{k'}\right) - \widehat{\prob}\left(H(X) = Y | D_{k'}\right)} > \alpha_{k'}\right) \\
    \leq{}& \prob_{D \sim \cD^n}\left(\sum_{k'=0}^K\abs{C_k^{k'} - \widehat{C}_k^{k'}} > \alpha_{C}\right) \\
    &+  \sum_{k'=1}^K\sum_{i=0}^n\prob_{D \sim \cD^n}\left( \abs{\prob\left(H(X) = Y | D_{k'}\right) - \widehat{\prob}\left(H(X) = Y | D_{k'}\right)} > \alpha_{k'} \;\middle|\; \abs{D_{k'}} = i\right)\prob_{D \sim \cD^n}\left(\abs{D_{k'}} = i\right) \\
    \cause{Let $p_{k'} = \prob\left(D_{k'}\right)$}
    \leq{}& \prob_{D \sim \cD^n}\left(\sum_{k'=0}^K\abs{C_k^{k'} - \widehat{C}_k^{k'}} > \alpha_{C}\right) \\
    &+  \sum_{k'=1}^K\sum_{i=0}^{\sfrac{np_{k'}}{2}-1}\prob_{D \sim \cD^n}\left( \abs{\prob\left(H(X) = Y | D_{k'}\right) - \widehat{\prob}\left(H(X) = Y | D_{k'}\right)} > \alpha_{k'} \;\middle|\; \abs{D_{k'}} = i\right)\prob_{D \sim \cD^n}\left(\abs{D_{k'}} = i\right) \\
    &+  \sum_{k'=1}^K\sum_{i=\sfrac{np_{k'}}{2}}^{n}\prob_{D \sim \cD^n}\left( \abs{\prob\left(H(X) = Y | D_{k'}\right) - \widehat{\prob}\left(H(X) = Y | D_{k'}\right)} > \alpha_{k'} \;\middle|\; \abs{D_{k'}} = i\right)\prob_{D \sim \cD^n}\left(\abs{D_{k'}} = i\right) \\
    \leq{}& \prob_{D \sim \cD^n}\left(\sum_{k'=0}^K\abs{C_k^{k'} - \widehat{C}_k^{k'}} > \alpha_{C}\right) \\
    &+  \sum_{k'=1}^K\sum_{i=0}^{\sfrac{np_{k'}}{2}-1}\prob_{D \sim \cD^n}\left(\abs{D_{k'}} = i\right) \\
    &+  \sum_{k'=1}^K\sum_{i=\sfrac{np_{k'}}{2}}^{n}\prob_{D_{k'} \sim \cD_{k'}^i}\left( \abs{\prob\left(H(X) = Y | D_{k'}\right) - \widehat{\prob}\left(H(X) = Y | D_{k'}\right)} > \alpha_{k'}\right)\prob_{D \sim \cD^n}\left(\abs{D_{k'}} = i\right) \\
    \leq{}& \prob_{D \sim \cD^n}\left(\sum_{k'=0}^K\abs{C_k^{k'} - \widehat{C}_k^{k'}} > \alpha_{C}\right) \\
    &+  \sum_{k'=1}^K \prob_{D \sim \cD^n}\left(\abs{D_{k'}} < \frac{np_{k'}}{2}\right) \\
    &+  \sum_{k'=1}^K\sum_{i=\sfrac{np_{k'}}{2}}^{n}\prob_{D_{k'} \sim \cD_{k'}^i}\left( \abs{\prob\left(H(X) = Y | D_{k'}\right) - \widehat{\prob}\left(H(X) = Y | D_{k'}\right)} > \alpha_{k'}\right)\prob_{D \sim \cD^n}\left(\abs{D_{k'}} = i\right)
\end{align*}

Using Hoeffding's inequality, we can show that
\begin{align*}
    \prob_{D_{k'} \sim \cD_{k'}^{n_{k'}}}\left( \abs{\prob\left(H(X) = Y | D_{k'}\right) - \widehat{\prob}\left(H(X) = Y | D_{k'}\right)} > \beta\right) \leq 2\exp\left(-2\beta^2 n_{k'}\right)
\end{align*}
which implies
\begin{align*}
    \prob_{D \sim \cD^n}\left(\abs{F_k(h,D) - \widehat{F}_k(h,D)} > \alpha_{C} + \sum_{k'=1}^K \abs{\widehat{C}_k^{k'}}\alpha_{k'}\right) \negspace{15em}& \\
    \cause{$2\exp\left(-2\beta^2 i\right) \geq 2\exp\left(-2\beta^2 (i+1)\right)$.}
    \leq{}& \prob_{D \sim \cD^n}\left(\sum_{k'=0}^K\abs{C_k^{k'} - \widehat{C}_k^{k'}} > \alpha_{C}\right) \\
    &+  \sum_{k'=1}^K \prob_{D \sim \cD^n}\left(\abs{D_{k'}} < \frac{np_{k'}}{2}\right) \\
    &+  \sum_{k'=1}^K 2\exp\left(-\alpha_{k'}^2np_{k'}\right) \\
    \cause{By assumption, $\prob_{D \sim \cD^n}\left(\sum_{k'=0}^K\abs{C_k^{k'} - \widehat{C}_k^{k'}} > \alpha_{C}\right) \leq B_3\exp\left(-B_4\alpha_{C}^2n\right)$.}
    \leq{}& B_3\exp\left(-B_4\alpha_{C}^2n\right) \\
    &+  \sum_{k'=1}^K \prob_{D \sim \cD^n}\left(\abs{D_{k'}} < \frac{np_{k'}}{2}\right) \\
    &+  \sum_{k'=1}^K 2\exp\left(-\alpha_{k'}^2np_{k'}\right) \\
    \cause{Chernoff multiplicative bound.}
     \leq{}& B_3\exp\left(-B_4\alpha_{C}^2n\right) \\
    &+  \sum_{k'=1}^K \exp\left(-\frac{np_{k'}}{8}\right) \\
    &+  \sum_{k'=1}^K 2\exp\left(-\alpha_{k'}^2np_{k'}\right)
\end{align*}

Now, by assumption that $n \geq \frac{8\log\left(\frac{2K+1}{\delta}\right)}{\min_{k'}p_{k'}}$ and setting
\begin{align*}
    \alpha_{C} ={}& \sqrt{\frac{\log\left(\frac{B_3(2K+1)}{\delta}\right)}{B_4n}} \\
    \alpha_{k'} ={}& \sqrt{\frac{\log\left(\frac{2(2K+1)}{\delta}\right)}{np_{k'}}}
\end{align*}
yields that, with probability at least $1-\delta$,
\begin{align*}
    \abs{F_k(h,D) - \widehat{F}_k(h,D)} \leq \sqrt{\frac{\log\left(\frac{B_3(2K+1)}{\delta}\right)}{B_4n}} + \sum_{k'=1}^K \abs{\widehat{C}_k^{k'}} \sqrt{\frac{\log\left(\frac{2(2K+1)}{\delta}\right)}{np_{k'}}}
\end{align*}

\paragraph{Assuming that $h$ is dependent of $D$.}
In this case, our goal is to bound
\begin{align*}
    \prob_{D \sim \cD^n}\left(\sup_{h \in \cH} \abs{F_k(h,D) - \widehat{F}_k(h,D)} > \alpha_{C} + \sum_{k'=1}^K \abs{\widehat{C}_k^{k'}}\alpha_{k'}\right)
\end{align*}
Using similar arguments that in the independent case, we have that
\begin{align*}
    \prob_{D \sim \cD^n}&\left(\sup_{h \in \cH} \abs{F_k(h,D) - \widehat{F}_k(h,D)} > \alpha_{C} + \sum_{k'=1}^K \abs{\widehat{C}_k^{k'}}\alpha_{k'}\right) %
    \\
    \leq{}& \prob_{D \sim \cD^n}\left(\sum_{k'=0}^K\abs{C_k^{k'} - \widehat{C}_k^{k'}} > \alpha_{C}\right) \\
    &+  \sum_{k'=1}^K \prob_{D \sim \cD^n}\left(\abs{D_{k'}} < \frac{np_{k'}}{2}\right) \\
    &+  \sum_{k'=1}^K\sum_{i=\sfrac{np_{k'}}{2}}^{n}\prob_{D_{k'} \sim \cD_{k'}^i}\left(\sup_{h \in \cH} \abs{\prob\left(H(X) = Y | D_{k'}\right) - \widehat{\prob}\left(H(X) = Y | D_{k'}\right)} > \alpha_{k'}\right)\prob_{D \sim \cD^n}\left(\abs{D_{k'}} = i\right)
\end{align*}

Using the Multiclass Fundamental Theorem \citep[Theorem 29.3, Lemma 29.4]{shalev2014understanding} with $d_{\cH}$ the Natarajan dimension of $\cH$, we have that
\begin{align*}
    \prob_{D_{k'} \sim \cD_{k'}^{n_{k'}}}\left( \sup_{h \in \cH} \abs{\prob\left(H(X) = Y | D_{k'}\right) - \widehat{\prob}\left(H(X) = Y | D_{k'}\right)} > \beta\right) \leq 8 n_{k'}^{d_{\cH}}\abs{\cY}^{2d_{\cH}} \exp\left(-\frac{n_{k'}\beta^2}{32}\right)
\end{align*}
which implies
\begin{align*}
    \prob_{D \sim \cD^n}\left(\sup_{h \in \cH} \abs{F_k(h,D) - \widehat{F}_k(h,D)} > \alpha_{C} + \sum_{k'=1}^K \abs{\widehat{C}_k^{k'}}\alpha_{k'}\right) \negspace{15em}& \\
    \leq{}& \prob_{D \sim \cD^n}\left(\sum_{k'=0}^K\abs{C_k^{k'} - \widehat{C}_k^{k'}} > \alpha_{C}\right) \\
    &+  \sum_{k'=1}^K \prob_{D \sim \cD^n}\left(\abs{D_{k'}} < \frac{np_{k'}}{2}\right) \\
    &+  \sum_{k'=1}^K 8 \left(\frac{np_{k'}}{2}\right)^{d_{\cH}}\abs{\cY}^{2d_{\cH}} \exp\left(-\frac{\left(\frac{np_{k'}}{2}\right)\alpha_{k'}^2}{32}\right)
\end{align*}

Now, by assumption that $n \geq \frac{8\log\left(\frac{2K+1}{\delta}\right)}{\min_{k'}p_{k'}}$ and setting
\begin{align*}
    \alpha_{C} ={}& \sqrt{\frac{\log\left(\frac{B_3(2K+1)}{\delta}\right)}{B_4n}} \\
    \alpha_{k'} ={}& \sqrt{\frac{64\log\left(\frac{8 \left(\frac{np_{k'}}{2}\right)^{d_{\cH}}\abs{\cY}^{2d_{\cH}}(2K+1)}{\delta}\right)}{np_{k'}}} = \sqrt{\frac{64\left(d_{\cH}\left(\log\left( \frac{np_{k'}}{2}\right) + 2\log\left( \abs{\cY}\right)\right) +\log\left(\frac{8(2K+1)}{\delta}\right)\right)}{np_{k'}}}
\end{align*}
yields that, with probability at least $1-\delta$,  $\forall h \in \cH$
\begin{align*}
    \abs{F_k(h,D) - \widehat{F}_k(h,D)} \leq \sqrt{\frac{\log\left(\frac{B_3(2K+1)}{\delta}\right)}{B_4n}} + \sum_{k'=1}^K \abs{\widehat{C}_k^{k'}}\sqrt{\frac{64\left(d_{\cH}\left(\log\left( \frac{np_{k'}}{2}\right) + 2\log\left( \abs{\cY}\right)\right) +\log\left(\frac{8(2K+1)}{\delta}\right)\right)}{np_{k'}}} \enspace.
\end{align*}
This concludes the proof.
\end{proof}

\section{Bound for Output Perturbation (Proof of \Cref{lemma:sensitivity-output-perturbation})}
\label{sec-app:bound-output-pert}

\begin{lemma*}
  Let $h^{\priv}$ be the vector released by output
  perturbation with noise
  $\sigma^2 = \sfrac{8\Lambda^2\log(1.25/\delta)}{\mu^2 n^2 \epsilon^2}$,
  and $0 < \zeta < 1$, then with probability at least $1 - \zeta$,
  \begin{align*}
    \norm{h^{\priv} - h^*}_2^2
      \le \frac{32p\Lambda^2\log(1.25/\delta)\log(2/\zeta)}{\mu^2 n^2 \epsilon^2}
      \enspace.
  \end{align*}
\end{lemma*}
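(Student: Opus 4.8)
The plan is to reduce the claim to a tail bound on the norm of a Gaussian vector. Write $h^{\priv} = \pi_{\cH}(h^* + \eta)$ with $\eta \sim \cN(0,\sigma^2\mathbb{I}_p)$. Since $\cH$ is convex and $h^* \in \cH$ (it is the minimizer of $f$ over $\cH$), we have $\pi_{\cH}(h^*) = h^*$, and the Euclidean projection onto a convex set is non-expansive (that is, $1$-Lipschitz). Hence $\norm{h^{\priv} - h^*}_2 = \norm{\pi_{\cH}(h^*+\eta) - \pi_{\cH}(h^*)}_2 \le \norm{\eta}_2$, so it suffices to control $\norm{\eta}_2^2$ with high probability, independently of the geometry of $\cH$.

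Next I would observe that $\norm{\eta}_2^2/\sigma^2 = \sum_{i=1}^p Z_i^2$ with the $Z_i$ independent $\cN(0,1)$, i.e. a $\chi^2_p$ variable, and apply a Chernoff bound. Using the moment generating function $\expect(e^{\lambda Z_i^2}) = (1-2\lambda)^{-1/2}$, valid for $\lambda < 1/2$, independence gives $\prob(\norm{\eta}_2^2 \ge t) \le e^{-\lambda t/\sigma^2}(1-2\lambda)^{-p/2}$ for any such $\lambda$. Rather than optimizing $\lambda$, I would simply fix $\lambda = 1/4$, which yields the clean bound $\prob(\norm{\eta}_2^2 \ge t) \le 2^{p/2}\, e^{-t/(4\sigma^2)}$.

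It then remains to choose $t$. Setting $t = 4p\sigma^2 \log(2/\zeta)$ makes the right-hand side equal to $2^{p/2}(\zeta/2)^p = 2^{-p/2}\zeta^p \le \zeta$, using $p\ge 1$ and $0<\zeta<1$. Therefore, with probability at least $1-\zeta$, $\norm{h^{\priv}-h^*}_2^2 \le 4p\sigma^2\log(2/\zeta)$. Substituting the prescribed noise variance $\sigma^2 = \sfrac{8\Lambda^2\log(1.25/\delta)}{\mu^2 n^2\epsilon^2}$ turns this into the claimed upper bound $\sfrac{32 p\Lambda^2\log(1.25/\delta)\log(2/\zeta)}{\mu^2 n^2\epsilon^2}$.

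The argument is essentially routine, and the only point requiring any care is the concentration step. The target bound is deliberately loose (it scales like $p\log(1/\zeta)$ rather than the sharper $p + \log(1/\zeta)$ that optimizing $\lambda$, or a Laurent--Massart inequality, would give), which is exactly what lets me skip the optimization and take the convenient fixed value $\lambda=1/4$; the main thing to check is that this choice reproduces the stated constant $32$ after substituting $\sigma^2$. I would also note that the value of the noise variance itself, and the sensitivity $\Delta = \sfrac{2\Lambda}{\mu n}$ it is calibrated to, are inherited from the $\Lambda$-Lipschitz and $\mu$-strongly-convex assumptions stated before the lemma and need not be re-derived here.
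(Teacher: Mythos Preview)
Your proof is correct and follows essentially the same Chernoff-bound-on-$\chi^2_p$ strategy as the paper. A few minor differences are worth noting: (i) you explicitly dispatch the projection via non-expansiveness of $\pi_{\cH}$, which the paper glosses over; (ii) you fix the Chernoff parameter at $\lambda=1/4$, whereas the paper takes the $p$-dependent choice $\lambda=1/(4p)$ and uses $\log(1-x)\ge -x/(1-x)$ to bound $(1-1/(2p))^{-p/2}\le 2$ --- both routes land on the same $4p\sigma^2\log(2/\zeta)$; (iii) the paper re-derives the sensitivity $\Delta=2\Lambda/(\mu n)$ from strong convexity, which you correctly note is inherited from the setup and need not be re-proved for the lemma as stated.
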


\begin{proof}

We prove this lemma in two steps. First, we show that for a given sensitivity, the distance $\norm{h^\priv - h^*}$ is bounded. Second, we estimate the sensitivity.

\textbf{Bounding the Error.}
Let $\Delta$ be the sensitivity of the function $D \rightarrow \argmin_{w\in\cC} f(w; D)$. Its value can be released under $(\epsilon, \delta)$ differential privacy \citep{chaudhuri2011Differentially,lowy2021output} as follows:
\begin{align}
    h^{\priv}
    & = h^* + \cN(0, \sigma^2 \mathbb I_p) \enspace,
\end{align}
where $\sigma^2 = \frac{2 \Delta^2 \log(1.25/\delta)}{\epsilon^2}$ and $h^* = \argmin_{h \in \cC} f(h)$. Then, Chernoff's bound gives, for $t, \alpha > 0$,
\begin{align}
    \prob( \norm{h^{priv} - h^*}^2 \ge \alpha )
    & \le \exp(-t\alpha) \expect (\exp(t \norm{h^{priv} - h^*}^2)) \\
    & = \exp(-t\alpha) \prod_{j=1}^p \expect (\exp(t (h_j^{priv} - h_j^*)^2))
    \enspace,
\end{align}
by independence of the noise's $p$ coordinates. Since $h_j^{priv} - h_j^*$ is a Gaussian random variable of mean $0$ and variance $\sigma^2$, we can compute $\expect (\exp(t(h_j^{priv} - h_j^*))) = (1-2t\sigma^2)^{-1/2}$. We then obtain
\begin{align}
    \prob( \norm{h^{priv} - h^*}^2 \ge \alpha )
    & \le \exp(-t\alpha) (1 - 2t\sigma^2)^{-p/2}
    \enspace.
\end{align}
Let $t=1/4p\sigma^2$, then it holds that $1 - 2t\sigma^2 = 1 - 1/2p \le 1$ and
\begin{align}
    (1 - 2t\sigma^2)^{-p/2}
    & = \exp\left(-\frac{p}{2} \log(1 - \frac{1}{2p})\right)
      \le \exp\left(\frac{1}{2(1 - \frac{1}{p})}\right)
      \le \exp(1/2)
      \le 2
      \enspace,
\end{align}
since $\frac{p}{2} \log(1-\frac{1}{2p}) \ge \frac{p}{2} \frac{-1/2p}{1-1/2p} \ge -\frac{1}{2} $. Let $0 < \zeta < 1$, $t = 1/4p\sigma^2$ and $\alpha = 4p\sigma^2 \log(2/\zeta)$, we have proven
\begin{align}
    \prob( \norm{h^{priv} - h^*}^2 \ge \alpha )
    & \le 2 \exp\left( - \frac{\alpha}{4p\sigma} \right)
      \le \zeta
    \enspace.
\end{align}
The error obtained by output perturbation is thus upper bounded by $\norm{h^{priv} - h^*}^2 \le 4p\sigma^2\log(2/\zeta) =  \frac{8p\Delta^2 \log(1.25/\delta) \log(2/\zeta)}{\epsilon^2}$ with probability at least $1-\zeta$.

\textbf{Estimating the Sensitivity.}
Define $g(h) = \frac{1}{n} \sum_{i=1}^n \ell(w; d_i')$ with $d_i' \in \spaceX \times \spaceY$ such that $d_i' = d_i$ for all $i \neq 1$. By strong convexity, the two following inequalities hold for $h, h'$,
\begin{align}
    f(h)
    & \ge f(h')
      + \langle \nabla f(h'), h - h' \rangle
      + \frac{\mu}{2} \| h - h' \|^2 \enspace, \\
    f(h') & \ge f(h)
      + \langle \nabla f(h), h' - h \rangle
      + \frac{\mu}{2} \| h - h' \|^2 \enspace.
\end{align}
Summing these two inequalities give $\langle \nabla f(h) - \nabla f(h'), h - h' \rangle \ge \frac{\mu}{2} \norm{h - h'}^2$. Let $h_1^*$ and $h_2^*$ be the respective minimizers of $f$ and $g$ over $\cC$, taking $h=h_1^*$ and $h'=h_2^*$ gives
\begin{align}
\label{eq:h1-minus-h2}
    \mu \| h_1^* - h_2^* \|^2
    & \le \langle \nabla f(h_1^*) - \nabla f(h_2^*), h_1^* - h_2^* \rangle
      \le \| \nabla f(h_1^*) - \nabla f(h_2^*) \| \cdot \| h_1^* - h_2^* \| \enspace.
\end{align}
Now, optimality conditions give
\begin{align}
\label{eq:link-f1-f2-unconstrained}
    \nabla f(h_1^*) = 0 = \nabla g(h_2^*) = \nabla f(h_2^*) - \nabla \ell(h_2^*; d_1) + \nabla \ell(h_2^*; d_1') \enspace,
\end{align}
resulting in $\| \nabla f(h_1^*) - \nabla f(h_2^*) \| = \| \frac{1}{n} \nabla \ell(h_2^*; d_1) - \frac{1}{n} \nabla \ell (h_2^*; d_1') \| \le \frac{2\Lambda}{n}$. Combined with \eqref{eq:h1-minus-h2}, this shows that the sensitivity of $\argmin_{h \in \cC} f(h)$ is $ \Delta = \frac{2\Lambda}{n\mu}$, which concludes the proof.
\end{proof}

\section{Convergence of DP-SGD (Proof of \Cref{lemma:sensitivity-dp-sgd})}
\label{app:sec:convergence-dp-sgd}

\begin{lemma*}
  Let $h^{\priv}$ be the vector released by DP-SGD with
  $\sigma^2 = \sfrac{64 \Lambda^2 T^2 \log(3T/\delta) \log(2/\delta) }{ n^2
  \epsilon^2 }$. Assume that
  $\sigma_*^2 = \expect_{i\sim[n]} \norm{\nabla \ell(h^*; x_i, y_i)}^2
  \le \sigma^2$. Let $0 < \zeta < 1$, then with probability at least
  $1 - \zeta$,
  \begin{align*}
    \norm{h^{\priv} - h^*}_2^2
    & = \widetilde O\left(  \frac{p \Lambda^2 \log(1/\delta)^2}{\zeta \mu^2 n^2 \epsilon^2} \right)
      \enspace,
  \end{align*}
  where $\widetilde O$ ignores logarithmic terms in $n$ (the number of examples) and $p$ (the number of model parameters).
\end{lemma*}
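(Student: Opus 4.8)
The plan is to reduce the statement to a convergence analysis of projected stochastic gradient descent with an unbiased, bounded-variance gradient oracle, and then to convert the resulting bound in expectation into a high-probability bound via Markov's inequality (which is what produces the $\sfrac{1}{\zeta}$ factor, in contrast to the $\log(\sfrac{1}{\zeta})$ obtained from the Gaussian-tail Chernoff argument in \Cref{lemma:sensitivity-output-perturbation}). The first observation is that the privatized stochastic gradient $g^t = \nabla\ell(h^t;x_i,y_i)+\eta^t$ used in each DP-SGD step is an unbiased estimate of $\nabla f(h^t)$: uniform sampling $i\sim\cU([n])$ makes $\expect_i\nabla\ell(h^t;x_i,y_i)=\nabla f(h^t)$, and $\eta^t$ has mean zero. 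Its second moment splits as $\expect\norm{g^t}^2 = \expect_i\norm{\nabla\ell(h^t;x_i,y_i)}^2 + p\sigma^2$, and I would control the first summand using the smoothness assumption, writing $\expect_i\norm{\nabla\ell(h^t;x_i,y_i)}^2 \lesssim \beta^2\norm{h^t-h^*}^2 + \sigma_*^2$ where $\beta$ is the smoothness constant and $\sigma_*^2=\expect_i\norm{\nabla\ell(h^*;x_i,y_i)}^2$; the assumption $\sigma_*^2\le\sigma^2$ then lets me fold this contribution into the privacy-noise variance $p\sigma^2$.

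Next I would derive the standard one-step contraction. Since $h^*\in\cH$ and the projection $\pi_\cH$ is nonexpansive, $\norm{h^{t+1}-h^*}^2 \le \norm{h^t-\gamma g^t-h^*}^2$; expanding and taking conditional expectation gives $\expect[\norm{h^{t+1}-h^*}^2\mid h^t] \le \norm{h^t-h^*}^2 - 2\gamma\langle\nabla f(h^t),h^t-h^*\rangle + \gamma^2\expect\norm{g^t}^2$. Strong convexity yields $\langle\nabla f(h^t),h^t-h^*\rangle \ge \mu\norm{h^t-h^*}^2$, so that, for a step size small enough that the $\gamma^2\beta^2$ term is absorbed, the recursion takes the form $\expect\norm{h^{t+1}-h^*}^2 \le (1-\mu\gamma)\expect\norm{h^t-h^*}^2 + O(\gamma^2 p\sigma^2)$. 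Unrolling this (or, equivalently, using the textbook schedule $\gamma_t = \Theta(\sfrac{1}{\mu t})$ for strongly convex SGD) bounds $\expect\norm{h^T-h^*}^2$ by a transient term that decays in $T$ plus a noise floor of order $\sfrac{p\sigma^2}{\mu^2 T}$ (resp.\ $\sfrac{\gamma p\sigma^2}{\mu}$).

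Finally, I would substitute the privacy-calibrated variance $\sigma^2$ and choose $T$ and the step sizes so that the transient term is dominated and the noise floor collapses to the claimed order. This substitution is where the coupling between $\sigma^2$ and $T$ enters: the calibrated $\sigma^2$ grows with $T$, so $T$ cannot be taken arbitrarily large for free, and its growth must be balanced against the $\sfrac{1}{T}$ decay of the SGD rate (the tighter R\'enyi/moments accounting alluded to in the subsequent remark is what makes this cancellation clean, leaving a $T$-independent floor of order $\widetilde{O}(\sfrac{p\Lambda^2\log(\sfrac{1}{\delta})^2}{\mu^2 n^2\epsilon^2})$). With the expectation bound in hand, Markov's inequality applied to the nonnegative random variable $\norm{h^{\priv}-h^*}^2$ shows that, with probability at least $1-\zeta$, it is at most $\sfrac{1}{\zeta}$ times its expectation, which yields the stated bound. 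I expect the main obstacle to be precisely this joint calibration of $T$, $\gamma$ and $\sigma^2$ — together with the smoothness-based variance-transfer step and the bookkeeping of the logarithmic factors hidden in $\widetilde{O}$ — rather than the contraction argument itself, which is routine.
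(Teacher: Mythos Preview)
Your plan is correct and mirrors the paper's proof: a one-step contraction for projected SGD under $\mu$-strong convexity and $\beta$-smoothness, unrolled to a transient-plus-noise-floor bound in expectation, then Markov's inequality to produce the $1/\zeta$ high-probability factor. You have also correctly identified the $\sigma^2$--$T$ coupling as the place where the bookkeeping happens; the paper resolves it with a constant step size $\gamma=1/(2\beta)$ and a logarithmic choice of $T$ (using $\sigma^2\propto T$ from advanced composition plus subsampling --- the $T^2$ in the statement appears to be a typo for $T$), rather than via R\'enyi accounting.

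The one technical difference worth noting is the variance-transfer step. You propose the Lipschitz-gradient bound $\expect_i\norm{\nabla\ell(h^t;x_i,y_i)}^2 \lesssim \beta^2\norm{h^t-h^*}^2 + \sigma_*^2$ and absorb the $\gamma^2\beta^2$ term into the contraction factor, which forces $\gamma = O(\mu/\beta^2)$ and hence $T=\Theta((\beta/\mu)^2\log(\cdot))$. The paper instead uses the co-coercivity form $\expect_i\norm{\nabla f_i(h^t)}^2 \le 4\beta\,(f(h^t)-f(h^*)) + 2\sigma_*^2$ and pairs it with the function-value version of strong convexity, $-2\gamma\langle\nabla f(h^t),h^t-h^*\rangle \le -2\gamma(f(h^t)-f(h^*)) - \gamma\mu\norm{h^t-h^*}^2$; the two $f(h^t)-f(h^*)$ terms then cancel for any $\gamma\le 1/(2\beta)$, allowing the larger step size and $T=\Theta((\beta/\mu)\log(\cdot))$. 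This buys a condition-number factor in $T$, but since the final statement does not track $\beta$ and hides logarithms in $\widetilde O$, both routes land on the same bound.
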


\begin{proof}
We start by recalling that in DP-SGD,
\begin{align}
  h^{t+1} = \pi_{\cH}(h^t - \gamma ( g^t + \eta^t ))
  \enspace.
\end{align}
Since $h^* \in \cH$, and $\cH$ is convex, we have
\begin{align}
  \norm{h^{t+1} - h^*}^2
  & = \norm{\pi_{\cH}(h^t - \gamma ( g^t + \eta^t )) - h^*}^2 \\
  & = \norm{h^t - h^*}^2
  - 2\gamma \scalar{g^t + \eta^t}{h^t - h^*}
  + \gamma^2 \norm{g^t + \eta^t}^2 \\
  & \le \norm{h^t - h^*}^2
  - 2\gamma \scalar{g^t + \eta^t}{h^t - h^*}
  + 2\gamma^2 \norm{g^t}^2
    + 2\gamma^2 \norm{\eta^t}^2
    \enspace,
\end{align}
where we developed the square and used
$\norm{a+b}^2 \le 2\norm{a}^2 + 2\norm{b}^2$ for $a,b \in
\RR^p$. Taking the expectation with respect to the stochastic gradient
computation and noise, we obtain
\begin{align}
  \expect \norm{h^{t+1} - h^*}^2
  & \le \norm{h^t - h^*}^2
  - 2\gamma \scalar{\nabla f(h^t)}{h^t - h^*}
  + 2\gamma^2 \expect\norm{g^t}^2
    + 2\gamma^2 \expect\norm{\eta^t}^2
    \enspace,
\end{align}
since $\expect(\eta^t) = 0$ and $\expect(g^t) = \nabla f(h^t)$. Now
recall that, by strong-convexity of $f$, we have
\begin{align}
  f(h^*)
  & \ge f(h^t)
    + \scalar{\nabla f(h^t)}{h^* - h^t}
    + \frac{\mu}{2} \norm{h^t - h^*}^2
    \enspace.
\end{align}
By reorganizing, we obtain
$-2\gamma \scalar{\nabla f(h^t)}{h^t - h^*} \le - 2 \gamma(f(h^t) -
f(h^*)) - \gamma \mu \norm{h^t - h^*}^2$, which gives
\begin{align}
  \expect\norm{h^{t+1} - h^*}^2
  & \le (1 - \gamma \mu) \norm{h^t - h^*}^2
    - 2 \gamma(f(h^t) - f(h^*))
    + 2\gamma^2 \expect\norm{g^t}^2
    + 2\gamma^2 \expect\norm{\eta^t}^2
    \enspace.
\end{align}
Finally, remark that if $f = \frac{1}{n} \sum_{i=1}^n f_i$ with each
$f_i$ being $\beta$-smooth and $\expect f_i = f$, we have, for
$i \sim [n]$,
\begin{align}
  \expect\norm{\nabla f_i(h^t)}^2
  & = \expect\norm{\nabla f_i(h^t) - \nabla f_i(h^*) + \nabla f_i(h^*)}^2 \\
  & \le \expect(2\norm{\nabla f_i(h^t) - \nabla f_i(h^*)}^2
    + 2\norm{\nabla f_i(h^*)}^2) \\
  & \le \expect(4\beta(f_i(h^t) - f_i(h^*) - \scalar{\nabla f_i(h^*)}{h^t - h^*})
    + 2\norm{\nabla f_i(h^*)}^2) \\
  & = 4\beta(f(h^t) - f(h^*))
    + 2\expect\norm{\nabla f_i(h^*)}^2
    \enspace,
\end{align}
since $f_i$ is $\beta$-smooth, which implies, for all $w, v \in \RR^p$,
\begin{align}
  \norm{\nabla f_i(w) - \nabla f_i(v)}^2
  & \le 2 \beta(f_i(w) - f_i(v) - \scalar{\nabla f_i(v)}{w - v}
    \enspace,
\end{align}
and $\expect \nabla f_i(h^*) = 0$. Combined with the fact that
$\expect\norm{\nabla f_i(h^*)}^2 \le \sigma_*^2$ and
$\expect\norm{\eta^t}^2 = p \sigma^2$, we obtained
\begin{align}
  \expect\norm{h^{t+1} - h^*}^2
  & \le (1 - \gamma \mu) \norm{h^t - h^*}^2
    + (4\beta\gamma^2 - 2 \gamma) ( f(h^t) - f(h^*))
    + 2\gamma^2 (\sigma_*^2 + \sigma^2) \\
  & \le (1 - \gamma \mu) \norm{h^t - h^*}^2
    + 4\gamma^2 \sigma^2
    \enspace,
\end{align}
since $\gamma \le 1/2\beta$, which implies $4\beta\gamma^2 - 2\gamma \le 0$
and $\sigma^* \le \sigma$. By induction, we obtain that, after
$T$ iterations,
\begin{align}
  \expect\norm{h^T - h^*}^2
  & \le (1 - \gamma\mu)^T \norm{h^0 - h^*}^2
    + 4 \gamma^2 \sum_{t=0}^{T-1} (1 - \gamma\mu)^{T-t} \sigma^2 \\
  & \le (1 - \gamma\mu)^T \norm{h^0 - h^*}^2
    + \frac{4 \gamma \sigma^2}{\mu}
    \enspace.
\end{align}
Now, recall that DP-SGD is $(\epsilon,\delta)$-differentially private for
$\sigma^2 = \frac{64 \Lambda^2 T \log(3T/\delta) \log(2/\delta)}{n^2
  \epsilon^2}$ (following from the Gaussian mechanism, advanced
composition theorem and amplification by subsampling). Thus, taking $\gamma = 1/2\beta$, and setting $T = \frac{2\beta}{\mu} \log(\mu \beta \norm{h^0-h^*}^2 / 2M^2)$,
where $M^2 = \frac{64\Lambda^2 \log(2/\delta)}{n^2\epsilon^2}$, yields
\begin{align}
  \expect \norm{h^T - h^*}^2
  & \le \frac{2 (T \log(3T/\delta) +1) M^2}{\beta\mu}
  \le \frac{8 M^2}{\mu^2} \log\Big(\frac{\mu \beta \norm{h^0-h^*}^2}{2M^2}\Big)
    \log\Big(\frac{6\beta \log(\tfrac{\mu \beta \norm{h^{0}-h^*}^2}{ 2M^2})}{ \mu \delta}\Big)
    \enspace.
\end{align}
Using Markov inequality, we obtain
\begin{align}
  \prob \left(
  \norm{h^T - h^*}^2 \ge
  \frac{8 M^2}{\zeta \mu^2} \log\Big(\frac{\mu \beta \norm{h^0-h^*}^2}{2M^2}\Big)
    \log\Big(\frac{6\beta \log(\tfrac{\mu \beta \norm{h^{0}-h^*}^2}{ 2M^2})}{ \mu \delta}\Big)
  \right)
  \le \zeta
    \enspace.
\end{align}
This results in
the following upper bound, with probability at least $1 - \zeta$,
\begin{align}
  \norm{h^T - h^*}^2
  & \le \frac{512 \Lambda^2 \log(2/\delta)}{\zeta \mu^2 n^2 \epsilon^2}
  \log\Big(\frac{\mu \beta \norm{h^0-h^*}^2}{2M^2}\Big)
    \log\Big(\frac{6\beta \log(\tfrac{\mu \beta \norm{h^{0}-h^*}^2}{ 2M^2})}{ \mu \delta}\Big)
  \\
  & = \widetilde O\left( \frac{G^2 \log(1/\delta)}{\zeta \mu^2 n^2\epsilon^2} \right)
    \enspace,
\end{align}
which is the result of our lemma.
\end{proof}

\section{Additional Experimental Details}
\label{app:sec:additional-experiments}

\subsection{Experimental Setup}

The \celebA dataset \citep{liu2015faceattributes} is a face attributes dataset, that can be downloaded at \url{http://mmlab.ie.cuhk.edu.hk/projects/CelebA.html}, and the \folktables dataset \citep{ding2021retiring} is derived from US Census, and can be downloaded using a Python package available here \url{https://github.com/zykls/folktables}.

On each dataset, for each value of $n$, we train a $\ell_2$-regularized logistic regression model using \texttt{scikit-learn} \citep{scikit-learn}. Private models are then learned using the output perturbation mechanism as described in \Cref{sec:two-priv-mech}. We then compute our bounds using the non-private model as reference, over a test set containing $10\%$ of the data, that has not been used for training (containing $20,260$ records for \celebA and $166,450$ records for \folktables). The value of the bound is computed by minimizing the experession given by the Chernoff bound using the golden section search algorithm \citep{kiefer1953sequential}. The code is in the supplementary, and will be made public.

For the plots with different number of training records, we train $20$ non-private models with a number of records logarithmically spaced between $10$ and the number of records in the complete training set (that is, $182,339$ for \celebA and $1,498,050$ for \folktables). For the plots with different privacy budgets, we use $20$ values logarithmically spaced between $10^{-3}$ and $10$ for both datasets.

\subsection{Results for Other Fairness Measures}

Our bounds also hold for accuracy parity, demographic parity and equalized odds. The same plots as those presented in \Cref{fig:fairness-fct-n-epsilon} for these fairness notions are in \Cref{app:fig:fairness-fct-n} and \Cref{app:fig:fairness-fct-epsilon}. The comments from \Cref{sec:experiments} on equality of opportunity and accuracy also hold for these three notions of fairness.

\begin{figure*}[h]
  \centering
  \begin{subfigure}{\textwidth}
    \centering
    \includegraphics[width=0.9\linewidth]{rsc/legend.pdf}
    \vspace{-0.2em}
  \end{subfigure}
  \begin{subfigure}{0.33\textwidth}
      \centering
      \includegraphics[width=0.725\linewidth]{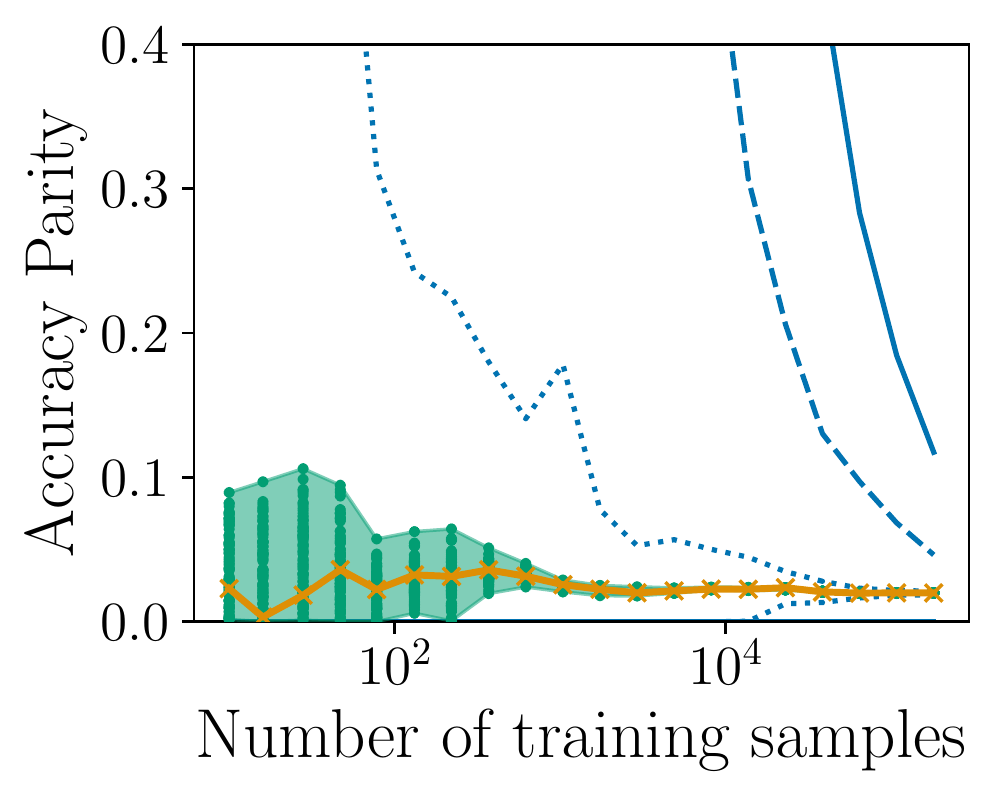}
    \caption{Accuracy Parity (\celebA)}
    \label{app:fig:fairness-fct-n-f11}
  \end{subfigure}
  \hfill
  \begin{subfigure}{0.33\textwidth}
      \centering
    \includegraphics[width=0.725\linewidth]{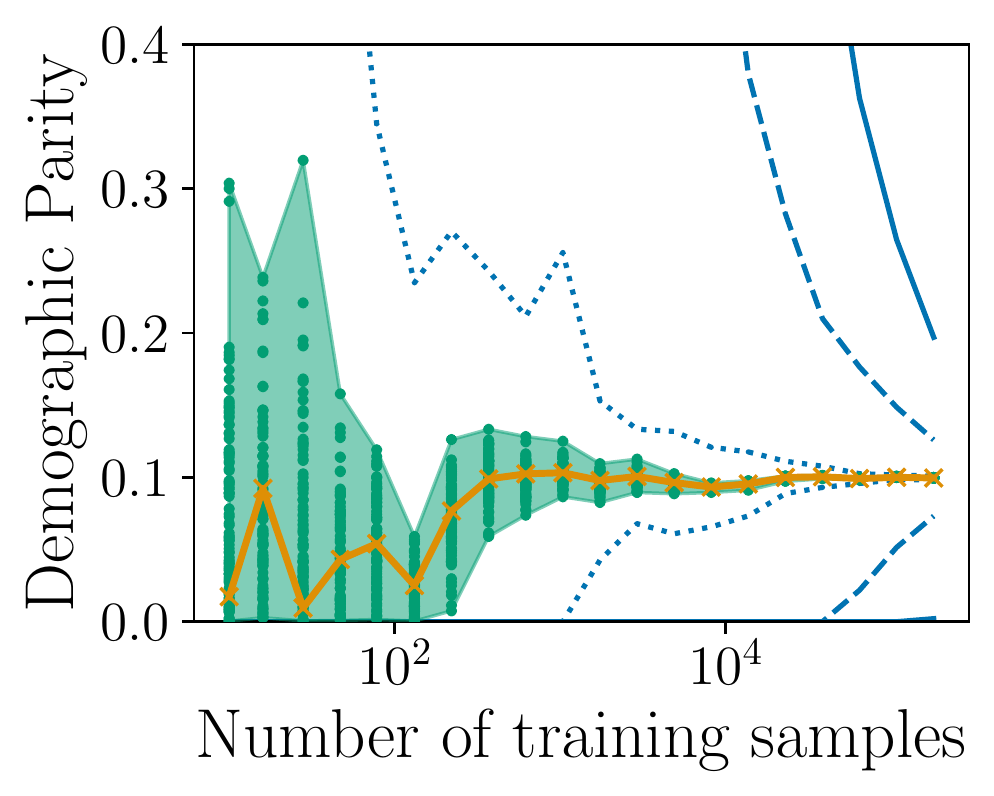}
    \caption{Demographic Parity (\celebA)}
    \label{app:fig:fairness-fct-n-a11}
  \end{subfigure}
  \hfill
  \begin{subfigure}{0.33\textwidth}
      \centering
    \includegraphics[width=0.725\linewidth]{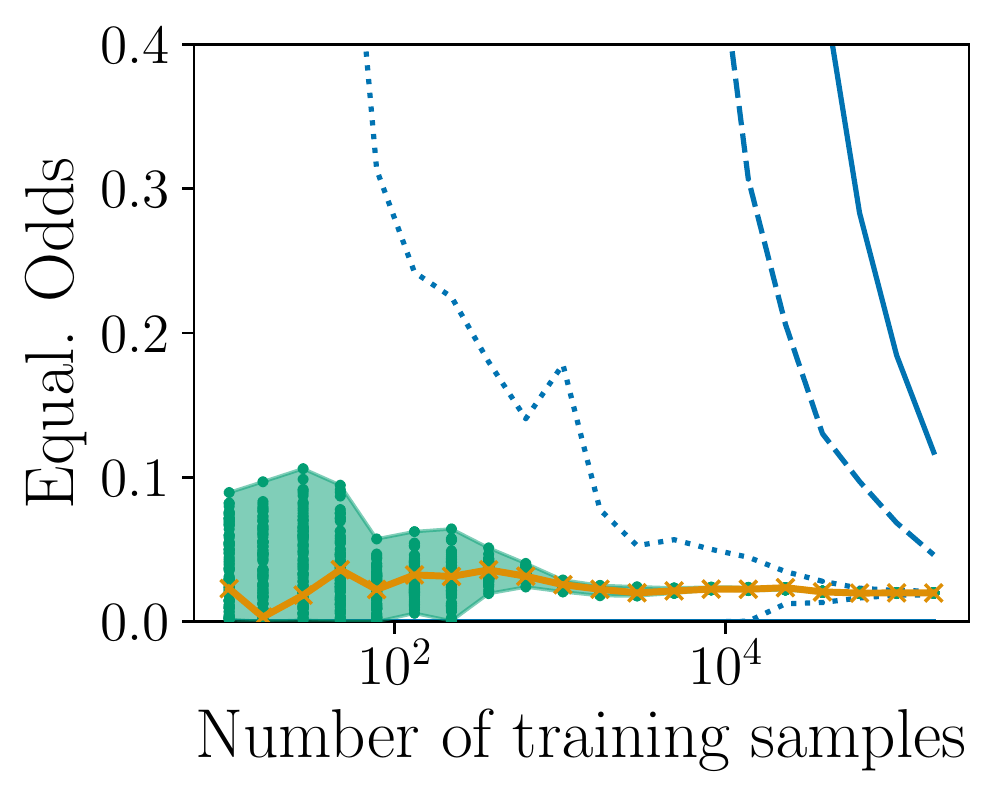}
    \caption{Equalized Odds (\celebA)}
    \label{app:fig:fairness-fct-n-f21}
  \end{subfigure}

  \begin{subfigure}{0.33\textwidth}
      \centering
    \includegraphics[width=0.725\linewidth]{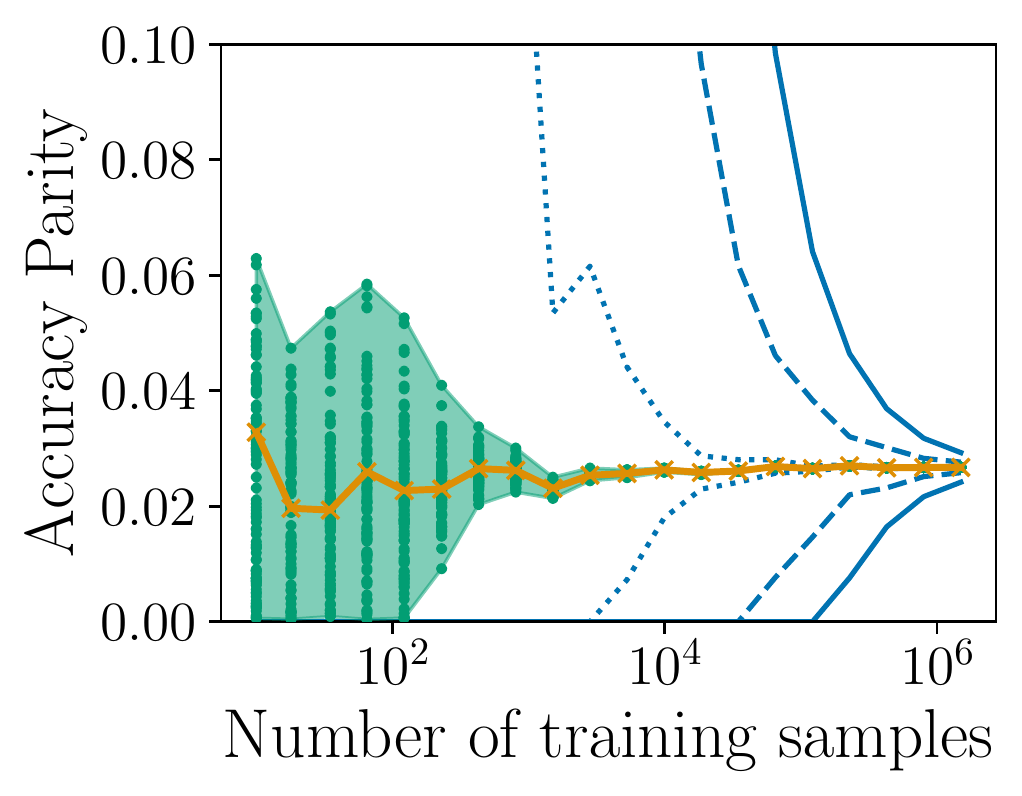}
    \caption{Accuracy Parity (\folktables)}
    \label{app:fig:fairness-fct-n-f12}
  \end{subfigure}
  \hfill
  \begin{subfigure}{0.33\textwidth}
      \centering
    \includegraphics[width=0.725\linewidth]{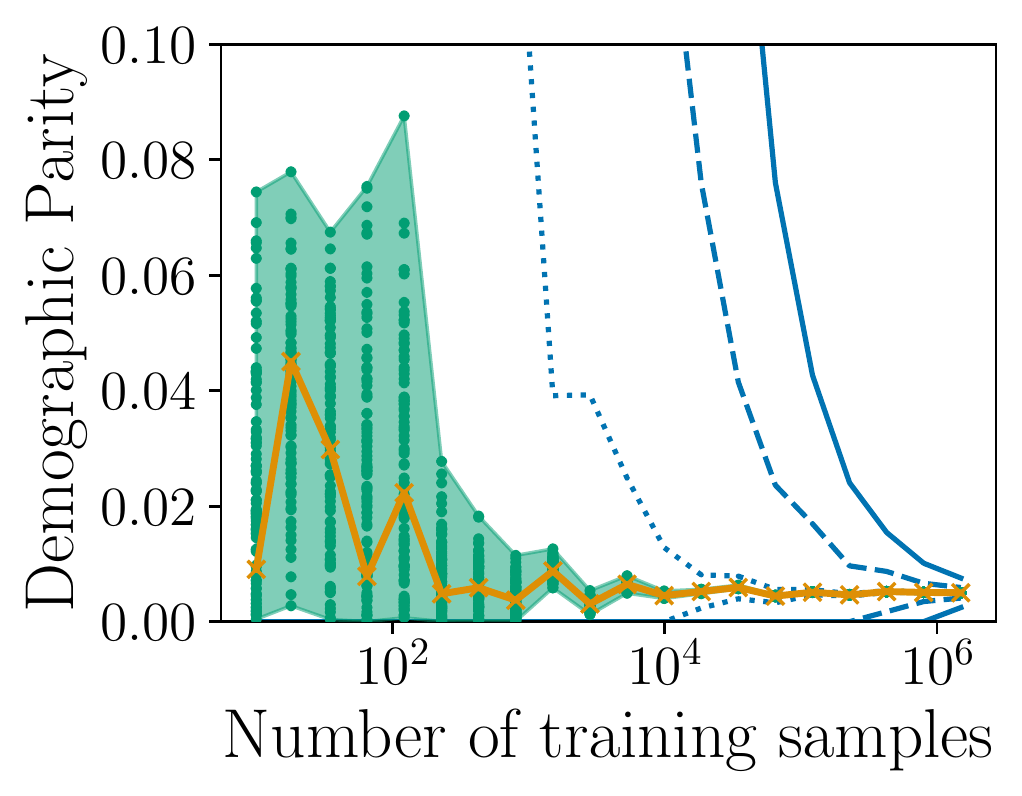}
    \caption{Demographic Parity (\folktables)}
    \label{app:fig:fairness-fct-n-a12}
  \end{subfigure}
  \hfill
  \begin{subfigure}{0.33\textwidth}
      \centering
    \includegraphics[width=0.725\linewidth]{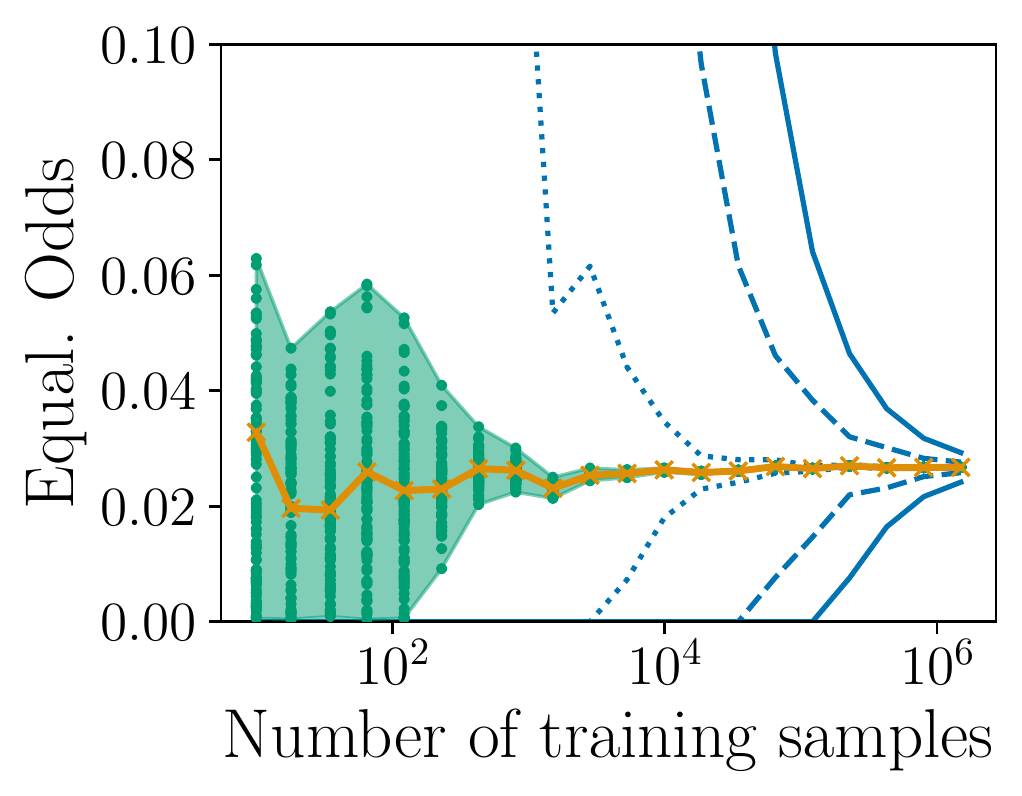}
    \caption{Equalized Odds (\folktables)}
    \label{app:fig:fairness-fct-n-f22}
  \end{subfigure}
  \caption{Fairness and accuracy levels for optimal non-private model
    and random private ones as a function of the number $n$ of
    training samples. For each value of $n$, we sample $100$ private
    models and take their minimum and maximum fairness/accuracy values
    to mark the area of attainable values. The solid blue line and the
    dashed one give our guarantees, respectively from
    \Cref{thm:bound-on-fairness-private-models} with
    \Cref{lemma:sensitivity-output-perturbation}'s bounds and with an
    empirical evaluation of $\norm{h^{\text{priv}}-h^*}$.}
  \label{app:fig:fairness-fct-n}
\end{figure*}
\begin{figure*}[h]
  \centering
  \begin{subfigure}{\textwidth}
    \centering
    \includegraphics[width=0.9\linewidth]{rsc/legend.pdf}
    \vspace{-0.2em}
  \end{subfigure}
  \begin{subfigure}{0.33\textwidth}
      \centering
    \includegraphics[width=0.725\linewidth]{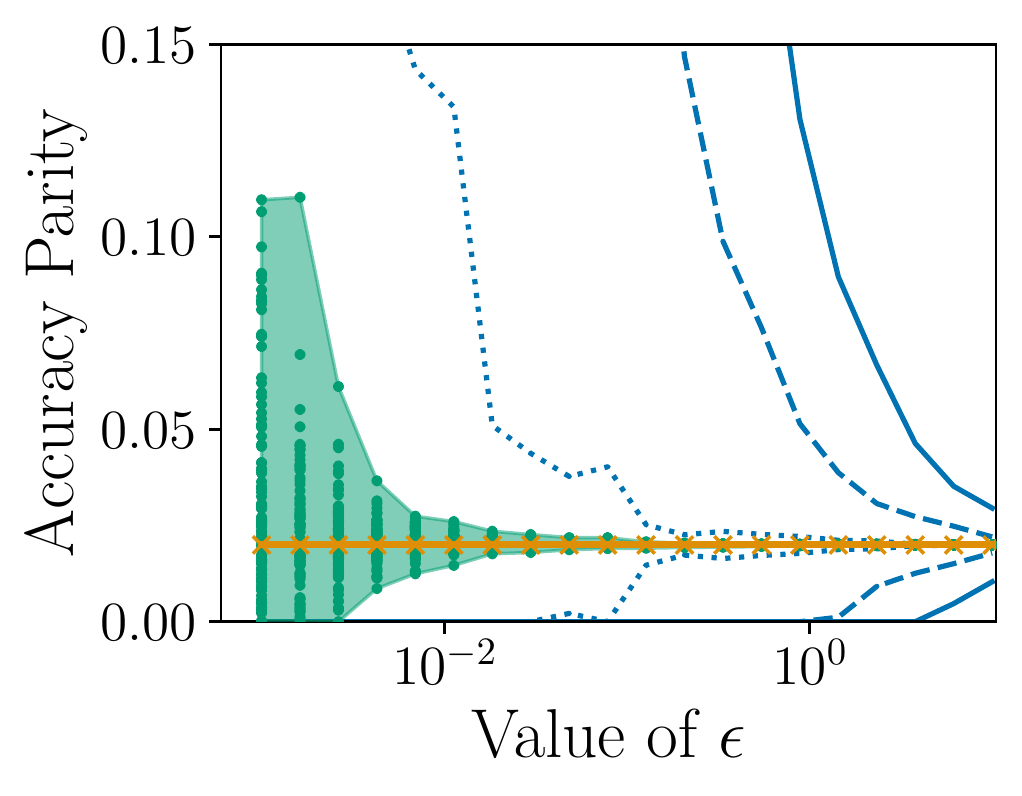}
    \caption{Accuracy Parity (\celebA)}
    \label{app:fig:fairness-fct-epsilon-f31}
  \end{subfigure}
  \hfill
  \begin{subfigure}{0.33\textwidth}
      \centering
    \includegraphics[width=0.725\linewidth]{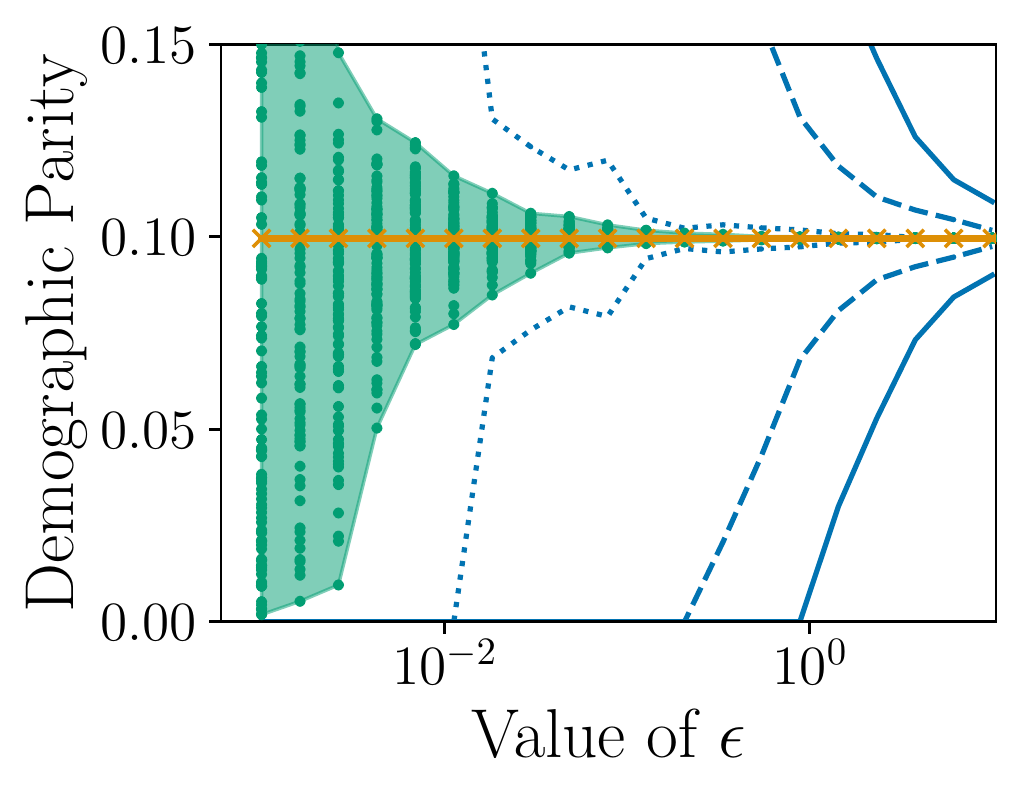}
    \caption{Demographic Parity (\celebA)}
    \label{app:fig:fairness-fct-epsilon-f41}
  \end{subfigure}
  \hfill
  \begin{subfigure}{0.33\textwidth}
      \centering
    \includegraphics[width=0.725\linewidth]{rsc/plots/fairness_fct_epsilon_equality_opportunity_fairness_fct_epsilon_celebA_1_logscale.pdf}
    \caption{Equality of Opportunity (\celebA)}
    \label{app:fig:fairness-fct-epsilon-f51}
  \end{subfigure}

  \begin{subfigure}{0.33\textwidth}
      \centering
    \includegraphics[width=0.725\linewidth]{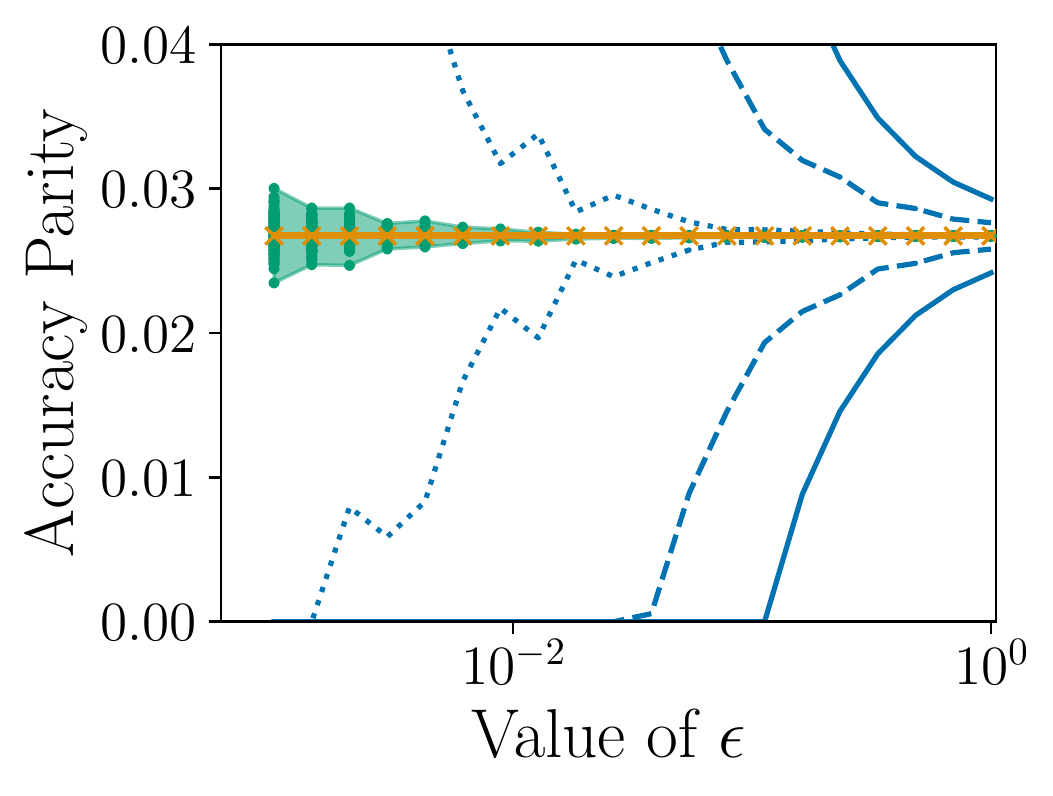}
    \caption{Accuracy Parity (\folktables)}
    \label{app:fig:fairness-fct-epsilon-f32}
  \end{subfigure}
  \hfill
  \begin{subfigure}{0.33\textwidth}
      \centering
    \includegraphics[width=0.725\linewidth]{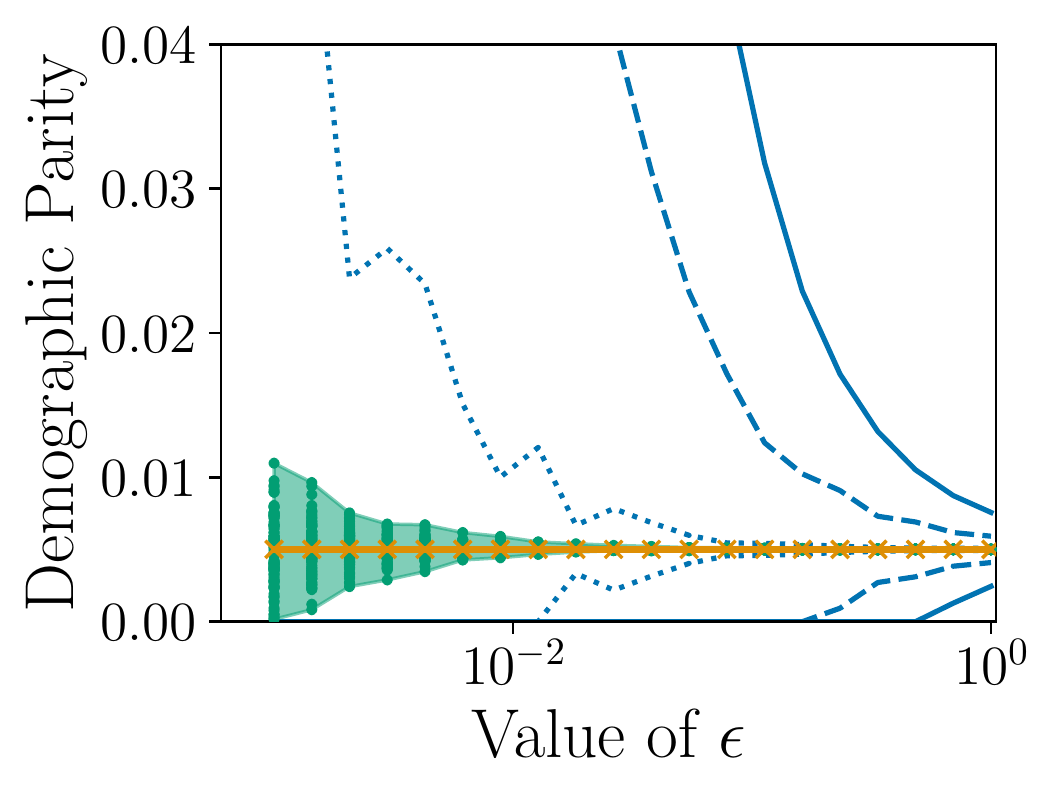}
    \caption{Demographic Parity (\folktables)}
    \label{app:fig:fairness-fct-epsilon-f42}
  \end{subfigure}
  \hfill
  \begin{subfigure}{0.33\textwidth}
      \centering
    \includegraphics[width=0.725\linewidth]{rsc/plots/fairness_fct_epsilon_equality_opportunity_fairness_fct_epsilon_folktables_1_logscale.pdf}
    \caption{Equality of Opportunity (\folktables)}
    \label{app:fig:fairness-fct-epsilon-f52}
  \end{subfigure}
  \caption{Fairness and accuracy levels for optimal non-private model
    and random private ones as a function of privacy budget
    $\epsilon$. For each value of $\epsilon$, we sample $100$ private
    models and take their minimum and maximum fairness/accuracy values
    to mark the area of attainable values. The solid blue line and the
    dashed one respectively give our guarantees, respectively from
    \Cref{thm:bound-on-fairness-private-models} with
    \Cref{lemma:sensitivity-output-perturbation}'s bounds and with an
    empirical evaluation of $\norm{h^{\text{priv}}-h^*}$.}
  \label{app:fig:fairness-fct-epsilon}
\end{figure*}

\subsection{Refined Bounds with Additional Knowledge of $\bm{h^\priv}$ and $\bm{h^*}$}
\label{app:sec:refined-bounds}

In \Cref{assum:lipschitz-constant-fct-val}, we use a uniform Lipschitz bound for all $h, h' \in \cH$. Let's consider the class $\cH$ of linear models, where, for $h \in \cH$, we denote by $h_y$ the parameters of $h$ associated with the label $y$, that is $h(x,y) = h_y^T x$.
For linear models, we derived the bound $\norm{\rho(h,x,y) - \rho(h', x, y)}_\cH \le 2\norm{x}_2 \norm{h-h'}_\cH$, as derived in \Cref{sec:preliminaries}. Note that this inequality can be very loose whenever $x$ and $h_y-h_y'$ (for $y \in \cY$) are (close to) orthogonal. When they are orthogonal, this bounds only gives $0 = (h_y-h_y')^T x \le \norm{h_y-h_y'}_2\norm{x}_2$. We can thus improve the inequality by remarking that we have
\begin{align*}
    \abs{\rho(h,x,y) - \rho(h',x,y)}
    \leq{}& \abs{h(x,y) - h'(x,y)} + \max_{y' \neq y} \abs{h(x,y') - h'(x,y')} \\
    ={}& \abs{h_y^Tx - h_y'^Tx} + \max_{y' \neq y} \abs{h_{y'}^T x - h'_{y'} x} \\
    ={}& \abs{(h_y - h_y')^T x} + \max_{y' \neq y} \abs{(h_{y'} - h'_{y'}) x} \\
    ={}& \abs{(h_y - h_y')^T p_{h_y-h'_y}(x)} + \max_{y' \neq y} \abs{(h_{y'} - h'_{y'}) p_{h_y-h'_y}(x)} \\
    \leq{}& 2 \max_{y'\in\cY} \norm{p_{h_y-h_y'}(x)} \norm{h - h'}_\cH
    \enspace,
\end{align*}
where $p_{h_y-h'_y}(x)$ is the projection of $x$ on the axis defined by $h_y - h'_y$. We can thus define a variant of $L_{X,Y}$ which depends on $h-h'$
\begin{align}
    L_{X,Y}^{h-h'}
    = 2 \max_{y\in\cY} \norm{p_{h_y-h_y'}(x)}
    \enspace.
\end{align}

Replacing \Cref{assum:lipschitz-constant-fct-val} by this inequality in the proof of \Cref{thm:bound-on-diff-proba}, we end up with the inequality
\begin{align*}
    \abs{\prob\left(H(X) = Y \;\middle|\; E\right) - \prob\left(H'(X) = Y \;\middle|\; E\right)} \leq{} \prob\left(\frac{\abs{\rho(h,X,Y)}}{L_{X,Y}^{h-h'}} \leq \norm{h - h'}_\cH \;\middle|\; E\right)
    \enspace,
\end{align*}
where the probability is over $(X,S,Y) \sim \cD$. We obtained the same bound as \Cref{thm:bound-on-diff-proba}, except with $L_{X,Y}^{h-h'}$ instead of $L_{X,Y}$. Note that even if this gives a much tighter bound, this can generally not be computed, as one of $h$ or $h'$ is typically not known.

\end{document}